\def\Tabref#1{Table~\ref{#1}}
\def\Figref#1{Figure~\ref{#1}}
\def\Secref#1{Section~\ref{#1}}
\def\Twosecrefs#1#2{Sections \ref{#1} and \ref{#2}}
\def\eqref#1{equation~(\ref{#1})}
\def\Eqref#1{Equation~(\ref{#1})}
\def\Algref#1{Algorithm~\ref{#1}}
\def\plainref#1{(\ref{#1})}
\def\1{\bm{1}}
\def\rk{{\textnormal{k}}}
\def\vzero{{\bm{0}}}
\def\vone{{\bm{1}}}
\def\vtheta{{\bm{\theta}}}
\def\vlambda{{\bm{\lambda}}}
\def\vphi{{\bm{\phi}}}
\def\vh{{\bm{h}}}
\def\vw{{\bm{w}}}
\def\vx{{\bm{x}}}
\def\mD{{\bm{D}}}
\def\mE{{\bm{E}}}
\def\mI{{\bm{I}}}
\def\mK{{\bm{K}}}
\def\mL{{\bm{L}}}
\def\mM{{\bm{M}}}
\def\mW{{\bm{W}}}
\def\mX{{\bm{X}}}
\def\mZ{{\bm{Z}}}
\def\mPhi{{\bm{\Phi}}}
\DeclareMathAlphabet{\mathsfit}{\encodingdefault}{\sfdefault}{m}{sl}
\SetMathAlphabet{\mathsfit}{bold}{\encodingdefault}{\sfdefault}{bx}{n}
\def\gD{{\mathcal{D}}}
\def\gG{{\mathcal{G}}}
\def\gJ{{\mathcal{J}}}
\def\gL{{\mathcal{L}}}
\def\gN{{\mathcal{N}}}
\def\gP{{\mathcal{P}}}
\def\sE{{\mathbb{\xi}}}
\def\sI{{\mathbb{I}}}
\def\sV{{\mathbb{V}}}
\newcommand{\R}{\mathbb{R}}
\DeclareMathOperator*{\argmax}{arg\,max}
\DeclareMathOperator*{\argmin}{arg\,min}
\def\<#1,#2>{\langle #1,\,#2\rangle}
\newcommand\MYhyperrefoptions{bookmarks=true,bookmarksnumbered=true,
pdfpagemode={UseOutlines},plainpages=false,pdfpagelabels=true,
colorlinks=true,linkcolor={black},citecolor={black},urlcolor={black},
pdftitle={Local Neighborhood and Graph Construction using Non-Negative Kernel Regression},
pdfsubject={IEEE Journal},
pdfauthor={Sarath Shekkizhar, Antonio Ortega},
pdfkeywords={Nearest neighbors, graph construction, k-nearest neighbor, $\epsilon$-neighborhood, local neighborhood.}}
\newtheorem{theorem}{Theorem}
\newtheorem{lemma}{Lemma}[theorem]
\newtheorem{proposition}{Proposition}
\newtheorem{corollary}{Corollary}[theorem]
\theoremstyle{remark}
\newtheorem{remark}{Remark}
\begin{document}
%
\title{Neighborhood and Graph Constructions using \\ Non-Negative Kernel Regression}
\author{Sarath~Shekkizhar,
        ~Antonio~Ortega
\IEEEcompsocitemizethanks{\IEEEcompsocthanksitem S.Shekkizhar, A.Ortega are with the Department
of Electrical and Computer Engineering, University of Southern California, Los Angeles,
CA.\protect\\
E-mail: \{shekkizh, aortega\}@usc.edu
}
}

\IEEEtitleabstractindextext{%
\begin{abstract}
Data-driven neighborhood and graph constructions are often used in  machine learning and signal processing applications.
Among these, 
$\rk$-nearest neighbor~($\rk$NN) and $\epsilon$-neighborhood methods are the most commonly used methods due to their computational simplicity, even though they often involve a somewhat ad hoc selection of their parameters, 
i.e., $\rk$ and $\epsilon$. We make two main contributions in this paper.  First, we present an alternative view of neighborhoods, where we show that neighborhood construction is equivalent to a sparse signal approximation problem. Second, we propose an algorithm, non-negative kernel regression~(NNK), for obtaining neighborhoods that lead to better sparse representation. NNK draws similarities to the orthogonal matching pursuit approach to signal representation and possesses desirable geometric and theoretical properties. Experiments show (i) the robustness of the NNK algorithm for neighborhood and graph construction, (ii) its ability to adapt  the number of neighbors to the data properties, and (iii) its superior performance in neighborhood and graph-based machine learning tasks. 
\end{abstract}

\begin{IEEEkeywords}
Nearest neighbors, Graph construction, k-nearest neighbor, $\epsilon$-neighborhood, Local neighborhood, Kernel regression.
\end{IEEEkeywords}}

\maketitle

\IEEEdisplaynontitleabstractindextext

\IEEEpeerreviewmaketitle

\IEEEraisesectionheading{\section{Introduction}\label{sec:introduction}}
\IEEEPARstart{A} simple pattern recognition scheme, whose application has traces dating back to the $11$th century \cite{pelillo2014alhazen}, is that of the nearest neighbors rule \cite{fix1952discriminatory,cover1967nearest}: data points that are close will have properties that are similar. 
Formally, given data points as vectors $\{\vx_1, \vx_2 \dots \vx_N\} \in \R^d$ and a \textit{query} $\vx_q$, defining the best neighborhood to represent $\vx_q$  involves  \emph{selection} of a subset of the $N$ points~(neighbors) followed by \textit{weight assignment} to each of those neighbors. 
$\rk$-nearest neighbor ($\rk$NN) \cite{nadaraya1964estimating, watson1964smooth} and $\epsilon$-neighborhood are among the most popular local neighborhood approaches used in practice, with applications in density estimation, classification, and regression \cite{biau2015lectures}.
These methods define a local neighborhood based on a parameter choice for \textit{neighborhood selection}, namely  the number of nearest neighbors~($\rk$) in   $\rk$NN or the maximum distance of the neighbors from a query~($\epsilon$) in $\epsilon$-neighborhood.  
Each selected neighbor is then given a positive weight using a kernel that quantifies its influence on the query (\textit{weight assignment}).

Local \emph{neighborhood definition or construction} is often the starting point for graph-based algorithms in scenarios where no graph is given {\em a priori} and a graph has to be \emph{constructed} to fit the data. 
A popular approach in these cases is to start by constructing a \emph{directed} graph using the neighborhoods and weights provided by $\rk$NN and $\epsilon$-neighborhood, leading to $\rk$NN-graphs and $\epsilon$-graphs. 
If necessary an undirected graph can be obtained from the directed graph. 
Note that graph construction is the first step in several graph signal processing \cite{ortega2018, ortega2022introduction} and graph-based learning methods \cite{Hamilton2017, chami2022machine}.  Consequently, the quality~(optimality, robustness, sparsity) of the graph representation is crucial to the success of these downstream algorithms \cite{Maier, DeSousa2013}. 

Even though techniques for neighborhood selection based on the choice of a single optimal value for $\rk$ or $\epsilon$ \cite{stone1977consistent, devroye2013probabilistic} exist, they 
can fail in non-uniformly distributed datasets where it might be desirable to adapt the number of neighbors  ($\rk/\epsilon$) to the local characteristics of the data.
Several approaches have been proposed to address neighborhood adaptivity: \cite{wettschereck1994locally} introduced a cross-validation method to select $\rk$ locally; \cite{baoli2004adaptive, balsubramani2019adaptive} defined $\rk$ using a class population-based heuristic; while \cite{ghosh2007nearest,mullick2018adaptive} optimize for $\rk$ using a Bayesian and neural network-based learning model. 
%
These methods focus on the selection component of neighborhood definition and do not address the weighing of the neighbors.
Recently, \cite{anava2016k} 
proposed an algorithm ($\rk^*$NN) for solving both the selection and weighting under the assumption of smoothness in functions defined on the data.

However, all of these adaptive approaches to neighborhood definition \cite{wettschereck1994locally,baoli2004adaptive, balsubramani2019adaptive,ghosh2007nearest,mullick2018adaptive,anava2016k} can only be applied to labeled data settings, where labels 
can be used for (hyper)~parameter selection. 
Since  no extensions are available for unlabelled data, which is a typical scenario for neighborhood definitions \cite{biau2015lectures, chen2018explaining}, this severely restricts their application.
Further, a shortcoming common to all existing approaches is their limited geometrical interpretation: they only consider the distance (or similarity) between the query and the data and ignore the relative position of the neighbors themselves.
As an example, two data points $i$ and $j$ at distance $d$ to the query may be included in the local neighborhood irrespective  of whether $i$ and $j$ are far or close to each other. In contrast, our proposed method takes into account the similarity between the neighbors $i$ and $j$ and can remove ``geometrically redundant'' neighbors, leading to better  neighborhood definitions.

\begin{figure}[htbp]
    \centering
    \includegraphics[width=0.45\textwidth]{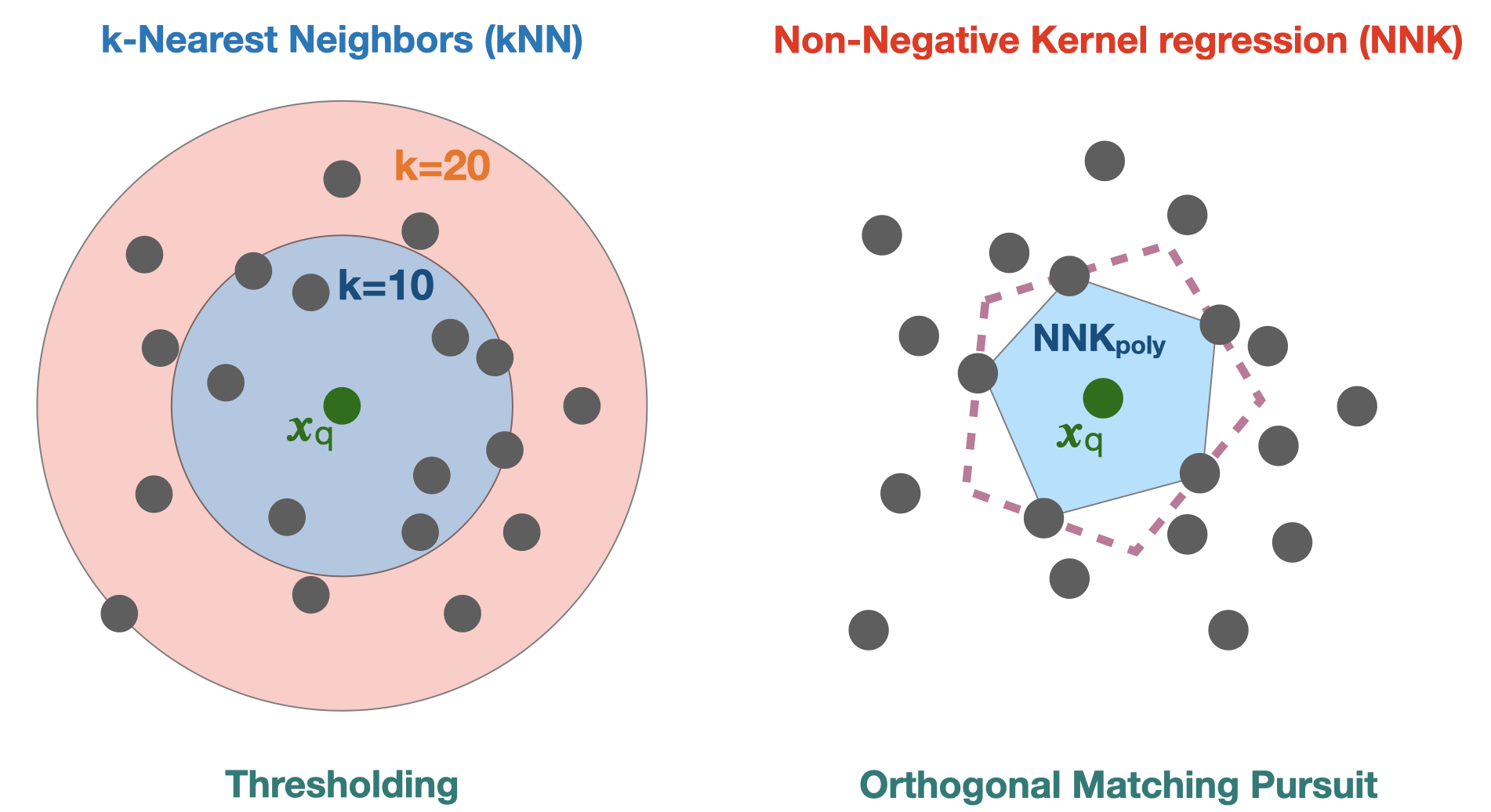}
    \caption{Geometric difference in neighborhood definitions using $\rk$-nearest neighbors ($\rk$NN) and the proposed NNK approach with a Gaussian kernel.  $\rk$NN selects neighbors based on the choice of $\rk$ and 
    can lead to the selection of  neighbors that are geometrically redundant, i.e., the vector from the query to neighbors is almost \emph{collinear} with other vectors joining the query and neighbors. In contrast, the proposed NNK definition
    selects only neighbors that correspond to non-overlapping directions.}
    \label{fig:knn_nnk_comparison}
\end{figure}

A key contribution of this work is a novel interpretation of the neighborhood definition 
as a non-negative sparse approximation of the query $\vx_q$ using data points $\{\vx_1 \dots \vx_N\}$. This view of neighborhoods allows one to analyze the problem using  tools from sparse signal processing \cite{donoho2006compressed} and opens up several directions for further research.
In particular, 
we show that $\rk$NN and $\epsilon$-neighborhood are signal approximation based on thresholding, with their corresponding hyperparameters, $\rk$ and $\epsilon$, used to control the sparsity. Our work is motivated by the well-known limitations of thresholding, which is only optimal when the candidate representation vectors (in our case the neighbors) are orthogonal. 

In this work, we leverage the sparse signal representation perspective to 
(i) establish a new notion of 
optimality for neighborhood definition and (ii) an improved algorithm, non-negative kernel regression (NNK), based on this optimality criterion. 
The sparse representation optimality criterion requires approximation errors to be orthogonal to the candidate representation vectors. We show that when applied to our problem this criterion  prevents the selection of  points that are ``geometrically redundant''. 
%
This idea is illustrated in \Figref{fig:knn_nnk_comparison}. Geometrically, the proposed NNK neighborhoods can be viewed as constructing a polytope around the query using points closest to it but eliminating those candidate neighbors that are further away along a \emph{similar} direction as an already selected neighbor. 
Thus, instead of selecting the number of neighbors based on pre-set parameters, e.g., $\rk$ or $\epsilon$, the NNK neighborhood is adaptive to the local data geometry and results in a principled \emph{neighbor selection and weight assignment}.

\subsection{Contributions and organization of the paper}

This paper substantially extends our earlier work, a shorter conference publication  \cite{shekkizhar2020}, in two ways. 

First,  
it presents a thorough theoretical analysis of the geometry, sparse representation, and optimization involved in the NNK framework, where we 
\begin{itemize}[leftmargin=*]
    \item further formalize with proofs our geometric characterization of NNK neighbors in input and kernel space via our kernel ratio interval (KRI) theorem 
    (\Secref{sec:geometry_interpret}),
    \item demonstrate the equivalence between NNK geometry and non-negative basis pursuit algorithms (\Secref{subsec:basis_pusuit_geometry}), and
    \item make explicit the connection between NNK and previously studied LLE-based neighborhoods \cite{Wang2008} and $\ell_1$-constrained pursuit methods  (Sections \ref{subsec:lle_connection},\ref{subsec:lasso});
\end{itemize}

Second, it offers a comprehensive experimental evaluation of the proposed NNK method in neighborhood and graph-based machine learning problems, where we 
\begin{itemize}[leftmargin=*]
    \item use $70$ OpenML classification datasets to show that NNK neighborhoods produce robust local classifiers outperforming those of  $\rk$NN, $\rk^*$NN \cite{anava2016k} (\Secref{sec:nnk_neighbor_experiments}), 
    \item quantify the scalability and sparsity of NNK graph construction relative to $\rk$NN, AEW \cite{Karasuyama2017}, LLE \cite{Wang2008}, and Kalofolias \cite{kalofolias2018large} graph constructions (\Secref{sec:sparsity_runtime_graph}), and
    \item validate that NNK graphs lead to better performance in graph-based semi-supervised learning and dimensionality reduction~(\Twosecrefs{sec:graph_ssl}{sec:lap_eig}). %
\end{itemize}
In other publications we have used the properties of NNK to revisit and improve methods in image processing \cite{shekkizhar2020efficient}, data summarization \cite{shekkizhar2021nnkmeans}, machine learning \cite{shekkizhar2021revisiting, shekkizhar2021model}, and geometric evaluation of deep learning models \cite{bonet2021channelredundancy, bonet2021channelearlystopping, SSL_geometry}. While these complementary works have reinforced the benefits of NNK, the present work provides the theoretical background and fundamental concepts behind the NNK formulation.

The rest of the paper is organized as follows: \Secref{sec:background} introduces notations, background, and related work. We discuss the connection between neighborhood definition and basis pursuit for sparse signal approximation in \Secref{sec:approximation_view} and introduce our NNK algorithm in \Secref{sec:nnk_algorithm}. We then derive the geometrical properties of NNK neighbors in \Secref{sec:geometry_interpret}. We conclude with experimental validation and discussion of the NNK framework in \Twosecrefs{sec:experiments}{sec:conclusion}. The proofs for all statements can be found in the Appendix.

\section{Background}
\label{sec:background}

We start by introducing the notation 
and two key components, kernel similarity and local linearity of data, that lead to distinct categories of existing neighborhood and graph constructions \cite{Qiao2018}. 
We also briefly review related work.

\subsection{Notation}
\label{sec:notations}
Throughout the paper, we use lowercase (e.g., $x$ and $\theta$), lowercase bold (e.g., $\vx$ and $\vtheta$), and uppercase bold (e.g., $\mM$ and $\mPhi$) letters to denote scalars, vectors, and matrices, respectively. We reserve the use of  $\mK$ for representing the matrix of the kernel function so that the kernel evaluated between two data points $i$ and $j$ is written $\mK_{i,j} = \kappa(\vx_i,\vx_j)$. 
Given a subset of indices $S$, $\vx_{S}$ denotes the subvector of $\vx$ obtained by taking the elements of $\vx$ at locations in $S$. A submatrix $\mM_{S,S}$ can be obtained similarly from $\mM$. The set complement $\bar{S}$ corresponds to the set of indices not in set $S$. 
We depict the vectors of all zeros and all ones as $\vzero, \vone$ respectively. 
The indicator function is represented using $\sI: \{\text{False},\text{True}\} \rightarrow \{0,1\}$ and the Hadamard product of matrices is denoted by  $\odot$ operator.

A graph $\gG = (\sV, \sE)$ is a collection of nodes indexed by the set $\sV = \{1, \ldots, N\}$ and connected by edges $\sE = \{(i, j,w_{ij})\}$, where $(i, j,w_{ij})$ denotes an edge of weight $w_{ij} \in \R^+$ between nodes $i$ and $j$. The weighted adjacency matrix $\mW$ of the graph is an $N \times N$ matrix with $\mW_{ij} = w_{ij}$. 
The combinatorial Laplacian of a graph is defined as $\mL = \mD - \mW$, where $\mD$ is the diagonal degree matrix with $\mD_{ii} = \sum_{j}\mW_{ij}$. The symmetric normalized Laplacian is defined as $\gL = \mI - \mD^{-1/2}\mW\mD^{-1/2}$.

\subsection{Similarity kernels} 
Kernels have a wide range of  applications in machine learning \cite{hofmann2008kernel, alvarez2011kernels}. 
As with most kernel-based works, we focus on kernels satisfying 
Mercer's theorem: a kernel function evaluated between two points corresponds to an inner product in a transformed space, the Reproducing Kernel Hilbert space (RKHS)\cite{mercer1909, aronszajn1950theory}, i.e., $\kappa(\vx_i, \vx_j) = \vphi_i^\top\vphi_j$, where $\vphi_i, \vphi_j$ are the kernel space representations of $\vx_i, \vx_j$\footnote{With a slight abuse of notation we use inner products as if the kernel representations $\vphi$ are real vectors. This allows us to use a common notation without considering the specific choice of kernel. In particular, our statements do generalize to the RKHS setting with continuous functions and inner products defined as $<f, g>=\int f(u)g(u)du$.}. 
Similarity kernels are used for assigning weights to selected neighbors in a neighborhood/graph definition. 
The choice of the kernel is, in general, task- or domain-specific with  some predefined kernels widely used in practice. 
Alternatively, one can also learn a kernel based on available data \cite{kulis2012metric, wang2015survey}.
Examples of data-driven kernel learning methods for neighborhood definition include  \cite{kapoor2005hyperparameter} and  adaptive edge weighting~(AEW) \cite{Karasuyama2017}. It should be emphasized that the learned kernels in these methods are used with $\rk$ or 
$\epsilon$-neighborhood selection and 
thus only offer a solution for the weight assignment problem. 

In this paper we primarily use the Gaussian kernel:
    \begin{align}
    \kappa_{\sigma^2}(\vx_i,\vx_j) = e^{-\frac{\|\vx_i - \vx_j\|^2}{2\sigma^2 }}, 
    \label{eq:gaussian_kernel}
    \end{align}
where $\sigma$ is the  bandwidth~(variance) of the kernel. 
We emphasize that the statements and algorithms presented here, unless stated otherwise, are applicable \textit{to all symmetric and positive-definite kernels, including those learned from data}.


\subsection{Locally linear neighborhoods}
High-dimensional data is often assumed to lie on or near a smooth manifold and thus can be approximated using locally linear patches. 
This is the principal assumption 
behind local linear embedding~(LLE)  \cite{Roweis2323}, which relies on  a local regression objective to
approximate a query, i.e.,
\begin{align}
\vtheta_S &= \argmin_\vtheta \;
\|\vx_q - \mX_S\vtheta\|^2_{_2}, \label{eq:lle_objective}
\end{align}
where $\mX_S$ is the matrix containing the $\rk$-nearest neighbors of $\vx_q$,  whose indices are denoted by set $S$.
A solution to 
\plainref{eq:lle_objective} can produce both positive and negative values 
for the entries of $\vtheta$, which would be the weights associated with the  elements in the set $S$. 
Thus, \cite{Wang2008} reformulated this problem for defining neighborhoods and graphs by introducing non-negativity constraints on the weights. 

Subsequent modifications to \cite{Wang2008} for neighborhood construction include: post-processing 
to enforce regularity \cite{Jebara2009}; formulating a \emph{global} objective to construct symmetrized graphs \cite{Daitch2009}; and additional regularizations in the objective to facilitate robust optimization \cite{Cheng2010, Han2016StructureAL}. 
Alternative approaches such as \cite{kalofolias2018large, kalofolias2016learn} can be considered as a reformulation of the LLE objective from a graph signal processing perspective with appropriate regularizers. 

\begin{table}[t]
    \centering
    \includegraphics[width=0.49\textwidth]{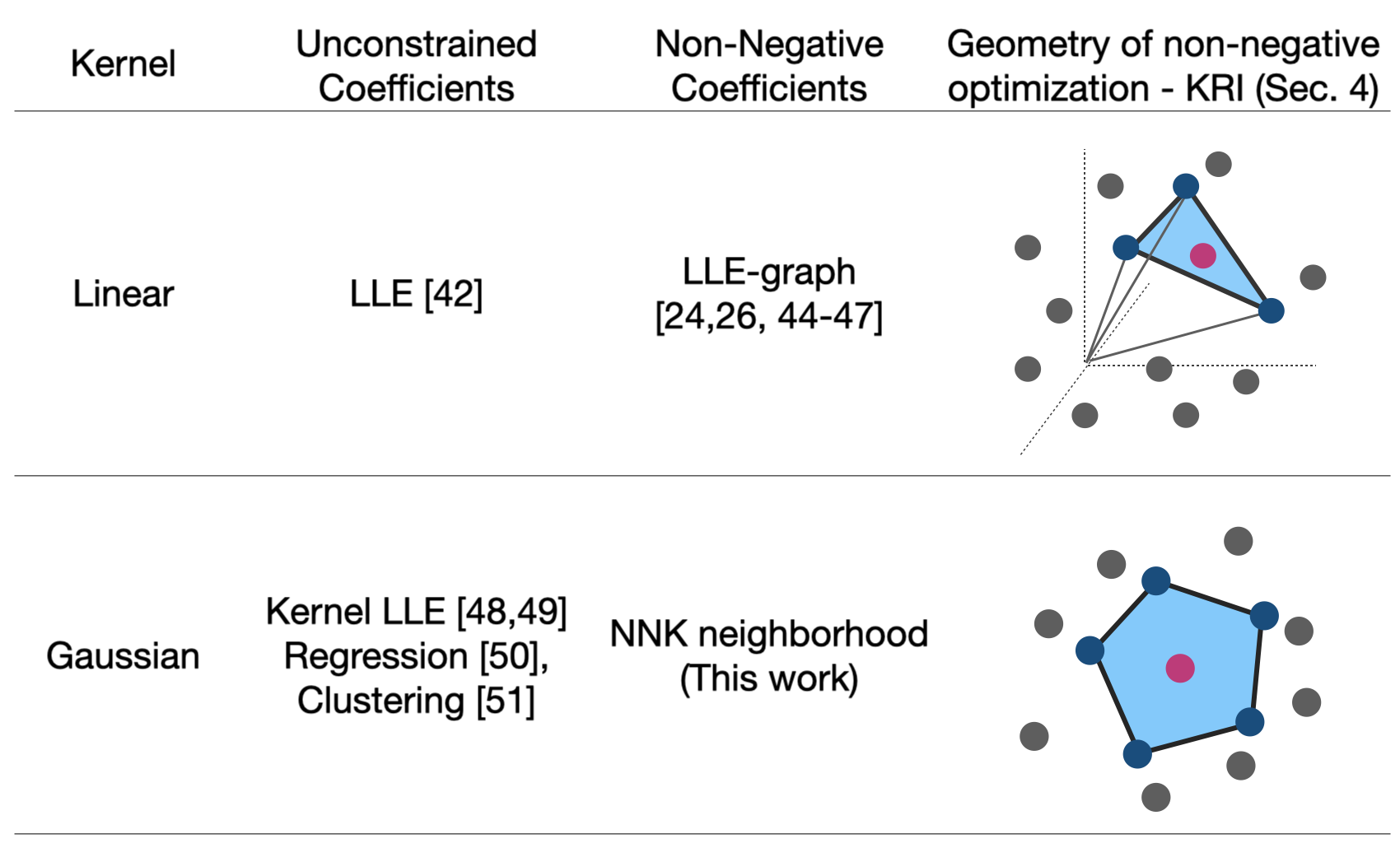}
    \caption{Contributions of our proposed method (NNK) in relation to previous works. NNK combines the concepts of non-negativity, kernels, and locally linear neighborhoods. Earlier works did not explore the sparsity or the geometric consequences of non-negativity~(why a data point $i$ is selected while another $j$ is not selected for representing a query $q$) in their definitions. The last column demonstrates the geometry of the neighborhood obtained, based on \autoref{th:KRI}, for Linear and Gaussian kernels ({\color{gray}Dataset}, {\color{magenta}Query}, {\color{blue}Neighbors} obtained with non-negative optimization).}
    \label{fig:lle_differences}
\end{table}

\subsection{Related work}
\label{subsec:related_work}
The proposed NNK framework reduces to non-negativity constrained LLE algorithms under a specific choice of  \emph{cosine kernel}~(see \Secref{subsec:lle_connection}).
We note that kernelized versions of LLE have been previously studied for dimensionality reduction \cite{ham2004kernel, kong2012iterative}, ridge regression \cite{Murphy2012}, subspace clustering \cite{PatelKernelSSC2014}, and matrix completion \cite{GiannakisMCE2019}. However, we emphasize that these methods did not (1) develop a connection between neighborhood construction and sparse signal approximation, or (2) study the kernelized objective under non-negativity constraints for the purpose of  defining neighborhoods. 
The lack of non-negativity in the kernelized formulations in these related works prevented them from developing  geometrical and theoretical properties of their solutions, which are possible for  those developed in our framework. 
We summarize these differences in \Tabref{fig:lle_differences}.
To our knowledge, the view of neighborhoods as a sparse signal representation has not been made explicit previously and we believe this perspective can lead to new ideas and even help improve related problems in data analysis. 
On a different note, works such as \cite{Giannakis2017} solve a complementary problem where the graph is given and kernels based on the graph are constructed for solving machine learning tasks. These methods align with, and reinforce, our intuition that neighborhoods and graphs are structures of similarity defined on a transformed space.

\section{Neighborhood definition: A sparse signal approximation view}
\label{sec:approximation_view}
In this section, we first formulate the neighborhood definition problem as a signal approximation. We then show that the $\rk$NN/$\epsilon$-neighborhood methods are equivalent to thresholding-based sparse approximation, and introduce a better representation approach with our NNK framework. 

\subsection{Problem formulation}
\label{sec:problem_formulation}
Given a dataset of $N$ data points $\{\vx_1, \vx_2, \dots, \vx_N \}$, the problem of neighborhood definition for a query $\vx_q$ is that of obtaining a weighted subset of the given $N$ data points that best represents the query.
The weights in a neighborhood definition are required to be non-negative, as negative weights would imply a \emph{neighbor} with negative influence or dissimilarity which are not meaningful in typical neighborhood-based tasks, e.g., density estimation \cite{biau2015lectures} or label propagation \cite{Zhu-SSL-ICML-03}. 
The use of non-negative constraints, either implicitly or explicitly, is a common feature in all neighborhood definitions. 

Neighborhood definition for a query can be viewed as a sparse signal approximation problem with non-negative  coefficients. 
The goal is to approximate a signal $\vphi_q$ in a kernel space, corresponding to the query $\vx_q$, as a sparse weighted sum of functions or signals~(called \emph{atoms}) corresponding to the given $N$ data points, i.e.,
\begin{align*}
    \vphi_q \approx \mPhi\vtheta
\end{align*}
where $\mPhi = [\vphi_1 \; \vphi_2 \; \dots \;\vphi_N]$ is the \emph{dictionary} of atoms based on the $N$ data points and  $\vtheta$ is a sparse non-negative vector. 

\subsection{$\rk$NN and $\epsilon$-neighborhood}
\label{subsec:basis_intepret}
We first analyze standard approaches such as $\rk$NN/$\epsilon$-neighborhood from the signal approximation perspective.
A $\rk$NN or $\epsilon$-neighborhood approach uses the kernel weights as edge weights. Under the distinction of data and transformed space, a kernel value can be interpreted as the correlation between two data points, i.e., for a query $q$,
\begin{align}
    \vphi_q^\top\vphi_j = \kappa(\vx_q, \vx_j) = \mK_{i,j}.
    \label{eq:kernel-inner-product}
\end{align}

Thus, $\rk$NN and equivalently $\epsilon$-neighborhood methods, correspond to a {\em thresholding} technique for basis selection wherein atoms corresponding to the $\rk$ largest correlations are selected from $\mPhi$ to approximate the signal $\vphi_q$. 
This approach is reminiscent of early methods for basis pursuit using thresholding \cite{daubechies1992ten}, i.e., selecting and weighing atoms based on the magnitude of the correlation between atoms and the signal. This strategy is optimal only when the dictionary is orthogonal~($\mPhi^\top\mPhi = \mI$), and is sub-optimal for over-complete or non-orthogonal dictionaries. In our problem setting, the dictionary $\mPhi$ is typically not orthogonal since the atoms  (i.e., the neighbors to the query) need not have zero correlation, i.e., in general, $\vphi_i^\top\vphi_j > 0$. 

\subsection{Non-Negative Kernel Regression Neighborhood}
\label{sec:nnk_algorithm}
We now propose an improved neighborhood definition, non-negative kernel regression~(NNK), based on our observation of the sub-optimality (in terms of approximation) of $\rk$NN/$\epsilon$-neighborhood. 
NNK can be viewed as a sparse representation approach where the coefficients $\vtheta$ for the representation are such that the error in representation is orthogonal to the space spanned by the selected atoms. 
We note that this reformulation results in neighborhoods that are adaptive to the local geometry of the data and robust to parameters such as $\rk/\epsilon$. For a query $\vx_q$ the NNK weights $\vtheta$ are found by solving:
\begin{align}
\vtheta_S = & \argmin_{\vtheta \colon \vtheta \geq \vzero} \;
\|\vphi_q - \mPhi_S\vtheta\|^2_{_2}, \label{eq:nnk_lle_objective}
\end{align}
where $\mPhi_S$ corresponds to the kernel space representations of an approximate set of  data point candidates $S$ for the neighborhood, such as the $\rk$ nearest neighbors. 

Now, employing the kernel trick, given \plainref{eq:kernel-inner-product} the objective in (\ref{eq:nnk_lle_objective}) can be rewritten as the minimization of:  
\begin{align}
\gJ_q(\vtheta) &= \frac{1}{2}\|\vphi_q - \mPhi_S\vtheta\|^2_{_2} \nonumber\\
&= \frac{1}{2}\vtheta^\top\mK_{S,S}\vtheta - \mK^\top_{S,q}\vtheta + \frac{1}{2}\mK_{q,q}. \label{eq:kernelized_objective}
\end{align} Consequently, the NNK neighborhood is obtained as the solution to the optimization problem, 
\begin{align}
\vtheta_S &= \argmin_{\vtheta\colon\vtheta \geq \vzero} \;
\frac{1}{2}\vtheta^\top\mK_{S,S}\vtheta - \mK^\top_{S,q}\vtheta,   \label{eq:neighborhood_nnk_objective}
\end{align}
with non-zero elements of the solution $\vtheta_S$ corresponding to the selected neighbors and weights given by the corresponding values in $\vtheta$.

\begin{algorithm}[t]
\SetAlgoNoLine
\DontPrintSemicolon
\SetKwInOut{Input}{Input}\SetKwInOut{Output}{Output}
\Input{Query $\vx_q$, Data $\gD=$\{$\vx_1 \cdots \vx_N$\}, Kernel function $\kappa$, No. of neighbors $\rk$}
$S$ = \{k nearest neighbors of query $\vx_q$\}\;
$\vtheta_{S}  = \underset{\vtheta \geq \vzero}{\argmin} \;
\frac{1}{2}\vtheta^\top\mK_{S,S}\vtheta - \mK^\top_{S,q}\vtheta $\;
\BlankLine
$P$ = \{ elements of $S$ corresponding to $\sI(\vtheta_{S} > \vzero)$ \}\;
\Output{Neighbors $P$,\hspace{0.5em} Neighbor weights $\vtheta_{P}$}
\caption{NNK Neighborhood algorithm}
\label{tab:nnk_algorithm}
\end{algorithm}

As in other kernel-based learning methods \cite{Scholkopf2001}, the NNK objective \plainref{eq:neighborhood_nnk_objective} does not require the explicit kernel space representations and needs only knowledge of the similarity, the kernel matrix $\mK$, of a subset $S$. 
The pseudo-code for the proposed method is presented in \autoref{tab:nnk_algorithm}.

The NNK framework binds principles that have been key ingredients to neighborhood definitions, 
namely,  kernels and local linearity. 
Unlike earlier methods which used kernels (predefined or learned) only to define the weights for selected neighbors, the NNK neighborhood can be viewed as identifying a small number of non-negative regression coefficients (as in LLE) for an approximation in the representation space associated with a kernel (see \Tabref{fig:lle_differences}). 

\subsubsection{NNK and constrained LLE objective}
\label{subsec:lle_connection}
Previously studied LLE~\cite{Wang2008} is a specific case of the proposed NNK neighborhood algorithm with a kernel similarity defined in the input data space. 
\begin{proposition}
\label{prop:lle_generalization}
The NNK algorithm with a query-dependent cosine similarity~(\ref{eq:lle_cosine_kernel}) reduces to the locally linear neighborhood definition.
\begin{align}
    \kappa_q(\vx_i, \vx_j) = \frac{1}{2} + \frac{(\vx_i - \vx_q)^\top(\vx_j - \vx_q)}{2\|\vx_i - \vx_q\|\|\vx_j - \vx_q\|} \label{eq:lle_cosine_kernel}
\end{align}
\end{proposition}


\subsection{Basis Pursuits}
\label{subsec:basis_pursuit}
In this section, we present basis pursuit \cite{chen1994basis} approaches to sparse signal approximation and their application for defining neighborhoods. 
We show alongside that the proposed NNK neighborhood is an efficient sparse approximation that leverages the given problem setup, and is equivalent to these pursuit methods under certain conditions. 

\subsubsection{Matching Pursuits}
\label{sec:mp_omp_neighborhoods}
Matching pursuits (MP) \cite{Mallat93} is a greedy algorithm for sparse approximation which iteratively selects atoms that have the highest correlation with the signal's approximation error at a given iteration. Variants of this method include orthogonal matching pursuits (OMP) \cite{tropp2007signal} and stagewise OMP (Block Selection + OMP) \cite{donoho2012sparse}, where the coefficients for the selected atoms are recalculated at each step so that the residue is orthogonal to the space spanned by the selected vectors. 
We now present the steps involved so that MP and OMP can be used in neighborhood definition and then establish the analogy between NNK and OMP. 

\noindent The first step in MP and OMP is the same, where we find 
\[      j_1 = \underset{j}{\argmax}\; \vphi^\top_q\vphi_j  = \mK_{q,j}.         
 \]
With this choice, we can compute the error or residue incurred by approximating $\vphi_q$ with a single vector $\vphi_{j_1}$ as:  
  \[      
        \vphi_{r_{1}} = \vphi_q - \mK_{q,j_1}\vphi_{j_1}.
        \]
Now, at a step $s$, we would find: 
        \[
            j_s = \argmax_{j \neq j_1, j_2 \dots j_{s-1}} \; \vphi^\top_j\vphi_{r_{s-1}} \]
Then, denote:
            \[
            \quad \mPhi_S = \left[\vphi_{j_1}\; \vphi_{j_2}\; \dots \;\vphi_{j_{s}}\right],
            \]     
        where the $S = \{j_1, \; \ldots \; j_s\}$ are the indices of all the selected atoms so far for representation. In MP, one assigns the correlation between the residue and the selected atom as weights for approximation. However, in OMP, we need to find weights $\vw_S$ associated with each selected basis such that the energy of the residue is minimized, i.e.,
        \begin{align}
            \vw_S &=  \underset{\vw\colon\vw \geq \vzero}{\argmin} \; \|\vphi_{r_S}\|^2 \notag \\
            &= \underset{\vw\colon\vw \geq \vzero}{\argmin}\; \mK_{q,q} - 2\mK_{S,q}^\top\vw_S +\vw_S^\top\mK_{S,S}\vw_S,
            \label{eq:omp_least_squares}
        \end{align}
        where $\vphi_{r_S} = \vphi_q - \mPhi_S\vw_S$ is the residue at step $s$. This ensures that the approximation error is orthogonal to the span of the selected atoms $\mPhi_S$.
        
Note that by constraining the weights to be positive, the span of selected bases at each step is a convex cone, while the residue is orthogonal to this cone~\cite{Gale51}. 
We see that solving for the weights $\vw_S$ in (\ref{eq:omp_least_squares}) is equivalent to the NNK objective (\ref{eq:neighborhood_nnk_objective}) given the set $S$. 
Thus, the proposed NNK algorithm (\autoref{tab:nnk_algorithm}) bypasses the \emph{greedy selection} with a pre-selected set of \emph{good} atoms avoiding expensive computation~(iterative selection and orthogonalization). 
We note that given a set of pre-selected atoms it is straightforward to perform an orthogonal projection, such as the one performed at each OMP step. 
Our problem makes it possible to perform this pre-selection using computationally efficient approximate neighborhood methods. Thus, we do not need to perform greedy selection and instead can use an approximate neighborhood method to choose the set $S$ without loss in performance, as long as $S$ is 
large enough (so that all candidates needed to form an optimal neighborhood are likely to be included).
\autoref{tab:nnk_mp_omp_algorithm} provides the pseudo-code for neighborhood definition based on MP and OMP with sequential neighbors selection. 

\begin{algorithm}[t]
\SetAlgoNoLine
\DontPrintSemicolon
\SetKwInOut{Input}{Input}\SetKwInOut{Output}{Output}
\Input{Query $\vx_q$, Data $\gD=$\{$\vx_1 \cdots \vx_N$\}, Kernel function $\kappa$, Maximum no. of neighbors k}

$j_1 = \argmax_j\; \mK_{q,j}$\;
$\vtheta_1 = \mK_{q, j_1}, \quad S = \{j_1\}$\;
\For{ $s=2, 3, \dots \rk$ }{
$j_s = \underset{j \notin S}{\argmax} \;  \mK_{q,j} - \mK^\top_{S,j}\vtheta_{s-1}$\;
\uIf{ $\mK_{q, j_s} - \mK^\top_{S,j_s}\vtheta_{s-1} < 0$ }{break}
\BlankLine
\uIf{ OMP }{$\vtheta_s = \underset{\vtheta\geq \vzero}{\argmin} \;             \frac{1}{2}\vtheta^\top\mK_{S,S}\vtheta - \mK^\top_{S,q}\vtheta$}
\uElse{$\vtheta^\top_s = \left[\vtheta_{s-1}^\top \;\; (\mK_{q, j_s} - \mK^\top_{S,j_s}\vtheta_{s-1}) \right]$}
\BlankLine
$S = S \cup \{j_s\}$\;
}
\BlankLine
\Output{Neighbors $S$,\hspace{0.5em} Neighbor weights $\vtheta_{S}$}
\caption{MP and OMP Neighborhood algorithms}
\label{tab:nnk_mp_omp_algorithm}
\end{algorithm}

\subsubsection{$l_1$ regularized pursuits}
\label{subsec:lasso}
Another set of methods for solving sparse approximation problems involves a convex relation where one \emph{softens} the sparsity constrain~($l_0$ norm of reconstruction coefficients) with an $l_1$ norm constrain \cite{efron2004least,daubechies2010iteratively}. These approaches, also known as least absolute shrinkage and selection operator~(LASSO) regression \cite{tibshirani1996regression}, allow for the use of optimization techniques such as linear programming to obtain an approximate sparse solution.
A neighborhood definition at a query $q$ using a $l_1$ regularized basis pursuit  corresponds to an  optimization problem of the form
\begin{align*}
\vtheta_{\gD} &= \argmin_{\vtheta\colon\vtheta \geq \vzero} \;
\frac{1}{2}\vtheta^\top\mK_{\gD, \gD}\vtheta - \mK^\top_{\gD,q}\vtheta +\eta\|\vtheta\|_1,
\end{align*}
where $\gD$ is the set of all $N$ data points and $\eta$ is the Lagrangian hyperparameter.
Given the non-negativity constraint on the coefficients in our setting, we can replace the $l_1$ norm by the sum, leading to: 
\begin{align}
\vtheta_{\gD} &= \argmin_{\vtheta\colon\vtheta \geq \vzero} \;
\frac{1}{2}\vtheta^\top\mK_{\gD,\gD}\vtheta - \mK^\top_{\gD,q}\vtheta +\eta\vtheta^\top\vone .\label{eq:nnk_objective_l1}
\end{align}
\begin{proposition}
\label{prop:l1_simplification}
Let $\vtheta^*$ be the solution to objective (\ref{eq:nnk_objective_l1}). Then,
\begin{align}
    \forall j \colon \mK_{q,j} < \eta \qquad  \vtheta^*_{j} = 0 
\end{align}
\end{proposition}
Proposition \ref{prop:l1_simplification} implies that the set $\bar{S}=\{i\colon \mK_{q,i} < \eta\}$ can be safely removed from the optimization since $\vtheta^*_{\bar{S}} = 0$. Thus objective (\ref{eq:nnk_objective_l1}) is equivalent to the NNK objective  \plainref{eq:neighborhood_nnk_objective} with set $S$, such that $\gD = S \cup \bar{S}$ and $S \cap \bar{S} = \emptyset$, i.e., $S$ is the set of data points with similarity greater than $\eta$. 
Thus, choosing $S$ as a set of $\rk$NN neighbors to initialize the objective \plainref{eq:neighborhood_nnk_objective} provides an optimal solution to \plainref{eq:nnk_objective_l1} with parameter  $\eta$. 


\subsection{Graph construction}
\label{subsec:nnk_graph}

The NNK neighborhood definition in \Secref{sec:nnk_algorithm} and alternative basis pursuit approaches in \Secref{subsec:basis_pursuit} can be adapted for graph construction 
by solving for the neighborhood at each data point $\vx_i$ 
%
to first obtain a directed graph adjacency matrix $\mW$, namely, $\mW_{S, i} = \vtheta_S$ and $\mW_{\bar{S}, i}=\vzero$. 
To obtain an undirected graph, we observe that an edge conflict may occur, say between nodes $i$ and $j$, due to the difference in the local neighborhood of the nodes, i.e., the influence of $i$ on $j$ depends on the neighborhood of $j$ and vice versa\footnote{Note that the directed nature of NNK is not surprising, since graph construction from $\rk$NN neighborhoods, which are used for initializing set $S$ in NNK, are also not symmetric.}. 
In such a scenario, we consider the approximation error corresponding to nodes $i$ and $j$, namely $\gJ_i(\vtheta)$ and $\gJ_j(\vtheta)$ from \plainref{eq:kernelized_objective}, and keep the weight $w_{ij}$ corresponding to the smallest of the two objectives (i.e., $q=i$ and $q=j$).
We note that the proposed symmetrization, though intuitive from a representation perspective, 
requires further study and we leave its formulation and connection to downstream performance as open problems for future work.
\autoref{tab:nnk_graph_algorithm} provides the pseudo-code for NNK graph construction. 

\begin{algorithm}[t]
\SetAlgoNoLine
\DontPrintSemicolon
\SetKwInOut{Input}{Input}\SetKwInOut{Output}{Output}
\Input{Data $\gD =$ \{$\vx_1 \cdots \vx_N$\}, Kernel function $\kappa$, No. of neighbors k}
\For{ $i=1, 2, \dots N$ }{
$S =$ \{ k nearest neighbors of $\vx_i$ in $\gD_{-i}$\}\;
$\vtheta_{S}  = \underset{\vtheta \geq \vzero}{\argmin} \;
\frac{1}{2}\vtheta^\top\mK_{S,S}\vtheta - \mK^\top_{S,i}\vtheta $\;
\BlankLine
$\gJ_i = \frac{1}{2}\vtheta^\top_S\mK_{S,S}\vtheta_S -
\mK^\top_{S,i}\vtheta_S + \frac{1}{2}\mK_{i,i}$ \;
\BlankLine
$\mW_{S, i} = \vtheta_{S}, \; \; \mW_{\bar{S}, i} = \mathbf{0}$ \;
$\mE_{S, i} = \gJ_i\:\vone, \; \; \mE_{\bar{S}, i} = \mathbf{0}$\;
}
$\mW = \sI(\mE \leq \mE^\top)\odot\mW + \sI(\mE > \mE^\top)\odot\mW^\top$\;
\BlankLine
\Output{Graph Adjacency $\mW$,\hspace{0.5em} Local error $\mE$}
\caption{NNK Graph algorithm}
\label{tab:nnk_graph_algorithm}
\end{algorithm}

\subsection{Complexity}
The proposed NNK method consists of two steps. 
First, we find $\rk$-nearest neighbors corresponding to each node. Although brute force implementation has complexity $O(N\rk d)$, there exist efficient algorithms to find approximate neighborhoods using sub-linear-time search with additional memory \cite{johnson2019billion}.
Second, a non-negative kernel regression (\ref{eq:neighborhood_nnk_objective}) is solved at each node. 
This objective is a constrained quadratic function of $\vtheta$ and can be solved efficiently using  structured programming methods that require $O(\rk^3)$ or, with more careful analysis, $O(\rk \hat{\rk}^2)$ complexity, where $\hat{\rk}$ is the number of neighbors 
with non-zero weights in the optimal solution. 
In summary, the NNK framework defines a neighborhood with an initialization based on $\rk$NN followed by a weight estimation with a runtime complexity that is, at most, cubic in the size of the initial neighborhood set $S$. 

In practice, as observed in our experiments (\Secref{sec:experiments}),  the added complexity of NNK, after initializing with $\rk$NN, is often negligible compared to the cost of $\rk$NN, given that $Nd >> \rk^2$. 
We note that the choice of $\rk$ for the initial set $S$ provides a  trade-off between runtime complexity and optimality: a small $\rk$ has lower complexity but results in a set $S$ that is not representative enough, while a larger $\rk$ has higher complexity but allows for obtaining the optimal neighborhood.   
For the graph construction problem, the runtime of the NNK optimization is $O(N\rk\hat{\rk}^2)$, in addition to the time required for $\rk$NN. Note that, due to the local nature of the optimization problem, the NNK algorithm can be executed in parallel and is typically limited by the complexity of finding a good set of initial neighbors.

\section{Geometric Interpretation}
\label{sec:geometry_interpret}
Unlike existing neighborhood definitions, such as $\rk$NN,  where a neighbor selection for a query $q$ involving two points $i$ and $j$
is driven solely based on the metric on $(q,i)$ and $(q,j)$, 
NNK also takes into account the relative positions of nodes $i$ and $j$ using the metric on $(i,j)$.
We now present a theoretical analysis of the geometry in NNK neighborhoods. 

\subsection{NNK and active constraint set condition}
In constrained optimization problems, some constraints will be strongly binding, i.e., the solution at some indices will be zero to satisfy the KKT condition of optimality~(outlined in \Secref{subsec:KKT}). These constraints are referred to as \textbf{active constraints}, knowledge of which helps reduce the problem for optimization and analysis using only the inactive subset. This is because any constraints that are active at a current feasible solution will remain active at the optimum \cite{boyd2004convex}.
\begin{proposition}
\label{prop:active_set}
The NNK optimization objective \plainref{eq:kernelized_objective} at a query $\vx_q$ satisfies active constraint conditions. 
Given a partition 
set $P$ such that $\vtheta_P > 0$ (inactive) and $\vtheta_{\bar{P}} = 0$ (active), the solution $[\vtheta_P \; \vtheta_{\bar{P}}]^\top$ is the optimal solution to NNK iff
\begin{align}
    \mK_{P, P}\vtheta_P &= \mK_{P, q} \label{eq:inactive_set_eq}\\
    \mK^\top_{P, \bar{P}}\vtheta_P - \mK_{\bar{P},q} &\geq \vzero \label{eq:active_set_eq}
\end{align}
\end{proposition}

Proposition~\ref{prop:active_set} allows us to analyze the neighbors obtained in the NNK framework, one pair at a time, as a data point that is zero weighted~(active constraint) will remain zero weighted at the optimal solution. 
We first introduce conditions for the existence of NNK neighbors in the form of the Kernel Ratio Interval~(KRI) theorem, which is  applied \emph{inductively} to unfold the geometry of NNK neighborhoods.

\subsection{Geometry of NNK neighbors}
\begin{figure}[htbp]
\centering
\includegraphics[height=0.15\textheight]{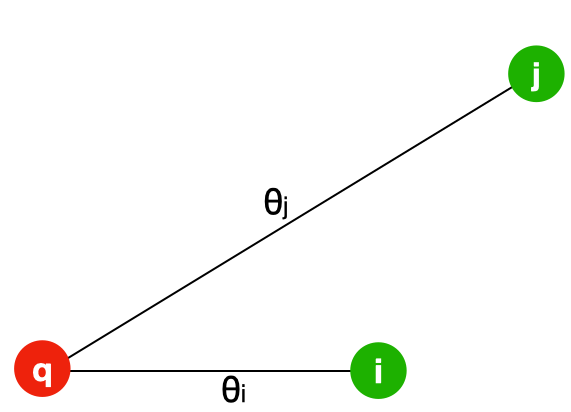}
\caption{Problem setup for a query $q$ with two points $i,j$.}
\label{fig:3node}
\end{figure}

\begin{theorem}\textbf{Kernel Ratio Interval:}
\label{th:KRI}
Given a scenario with a query $q$ and two data points, $i$ and $j$ (see  Fig~\ref{fig:3node}) and a similarity kernel with range $[0, 1]$, the necessary and sufficient condition for both $i$ and $j$ to be chosen as neighbors of $q$ by NNK is given by 
\begin{align}
\mK_{i,j} < \frac{\mK_{q,i}}{\mK_{q,j}} < \frac{1}{\mK_{i,j}}.  \label{eq:kernel_ratio_interval}
\end{align}
\end{theorem}
In words, Theorem~\ref{th:KRI} states that when $i$ and $j$ are very similar, it is less likely that \textit{both} will be chosen as neighbors of $q$, because the interval in \plainref{eq:kernel_ratio_interval}, which  defines when this can happen, becomes narrower.  
%
The KRI condition of (\ref{eq:kernel_ratio_interval}) does not make any assumptions on the kernel, other than that it be symmetric with values in $[0,1]$\footnote{The general form of KRI is presented in Appendix \ref{subsec:general_KRI}. We restrict to kernels $\in [0,1]$ in the main text for ease of understanding and to make the basis pursuit connection explicit. We note that the analysis and properties of NNK are applicable for all Mercer kernels.}. 

\begin{corollary}\textbf{Plane Property:}
\label{corollary:plane_property}
Each NNK neighbor corresponds to a hyperplane with normal in the  direction of the line joining the neighbor and query (\Figref{fig:plane}), points beyond which will not be NNK neighbors for the case of Gaussian kernel (\ref{eq:gaussian_kernel}).
\end{corollary}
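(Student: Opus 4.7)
The plan is to specialize the Kernel Ratio Interval (KRI) theorem to the Gaussian kernel and show that the resulting condition can be restated as a half-space inclusion. Since Proposition~\ref{prop:active_set} allows edges to be analyzed pairwise (an active constraint in the sub-problem stays active in the full problem), it suffices to verify the hyperplane structure in a three-node configuration $(i,j,k)$ and then lift the conclusion to an arbitrary neighborhood.

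First I would substitute $\mK_{a,b} = \exp(-\|\vx_a - \vx_b\|^2/(2\sigma^2))$ into the upper half of the KRI inequality, namely $\mK_{i,j}/\mK_{i,k} < 1/\mK_{j,k}$, which by Theorem~\ref{th:KRI} is the necessary and sufficient condition for $\theta_{ik} > 0$ when the edge $(i,j)$ is present. Taking logarithms turns the product of Gaussians into a sum of squared distances and reduces the condition to
\begin{equation*}
\|\vx_i - \vx_j\|^2 + \|\vx_j - \vx_k\|^2 > \|\vx_i - \vx_k\|^2.
\end{equation*}
Applying the polarization identity
\begin{equation*}
(\vx_i - \vx_j)^\top(\vx_k - \vx_j) = \tfrac{1}{2}\bigl(\|\vx_i - \vx_j\|^2 + \|\vx_k - \vx_j\|^2 - \|\vx_i - \vx_k\|^2\bigr)
\end{equation*}
rewrites this as $(\vx_i - \vx_j)^\top(\vx_k - \vx_j) > 0$, i.e.\ an acute-angle condition at the vertex $\vx_j$ in the triangle $ijk$.

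Next I would read this inequality geometrically. The set $\{\vx_k : (\vx_i - \vx_j)^\top(\vx_k - \vx_j) = 0\}$ is the hyperplane through $\vx_j$ whose normal is $\vx_j - \vx_i$—exactly the direction of the edge $(i,j)$. The strict inequality picks out the open half-space that contains $\vx_i$. Negating, the condition for $\theta_{ik} = 0$ is that $\vx_k$ lies on the closed half-space on the far side of this hyperplane; this is what is meant by "beyond" in the statement. Combining with Proposition~\ref{prop:active_set} ensures that any neighbor located beyond the hyperplane is excluded from the joint NNK solution, not merely in the pairwise sub-problem.

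The reasoning is almost entirely algebraic, so there is no deep obstacle. The only point that I would pay attention to is stating precisely which half-space is "beyond" (the one not containing $\vx_i$) and noting that the hyperplane is anchored at $\vx_j$ rather than at $\vx_i$ or at the midpoint; without this care the intuitive picture in Figure~\ref{fig:plane} could be misread. A brief remark that the same argument applies symmetrically from the $k$-side (giving the companion plane attached to $\vx_k$) makes the inductive use of KRI for polytope formation in Section~\ref{subsec:geometry_interpret} transparent.
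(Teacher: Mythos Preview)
Your proposal is correct and follows the same strategy as the paper: specialize the upper KRI inequality $\mK_{i,j}/\mK_{i,k} < 1/\mK_{j,k}$ to the Gaussian kernel and reduce it to a half-space condition relative to the hyperplane through $\vx_j$ with normal $\vx_j-\vx_i$. The paper carries this out by introducing $a=\|\vx_i-\vx_j\|$, $b=\|\vx_i-\vx_k\|$ and the angle $\alpha$ at $\vx_i$, then manipulating the exponentials to arrive at $b\cos\alpha > a \iff \theta_{ik}=0$; your log-then-polarization route reaches the equivalent inner-product form $(\vx_i-\vx_j)^\top(\vx_k-\vx_j)\le 0$ more directly and coordinate-free, and your explicit appeal to Proposition~\ref{prop:active_set} to lift the pairwise conclusion to the full neighborhood is a welcome clarification that the paper leaves implicit.
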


\begin{figure}[htbp]
\centering
\begin{subfigure}{0.29\textwidth}
\centering
\includegraphics[width=\textwidth]{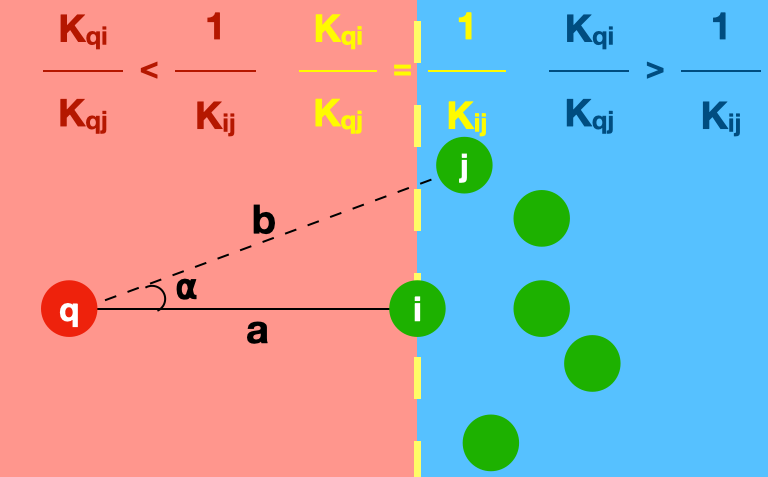}
\caption{}
\label{fig:plane}
\end{subfigure} 
~
\begin{subfigure}{0.18\textwidth}
\centering
\includegraphics[width=\textwidth]{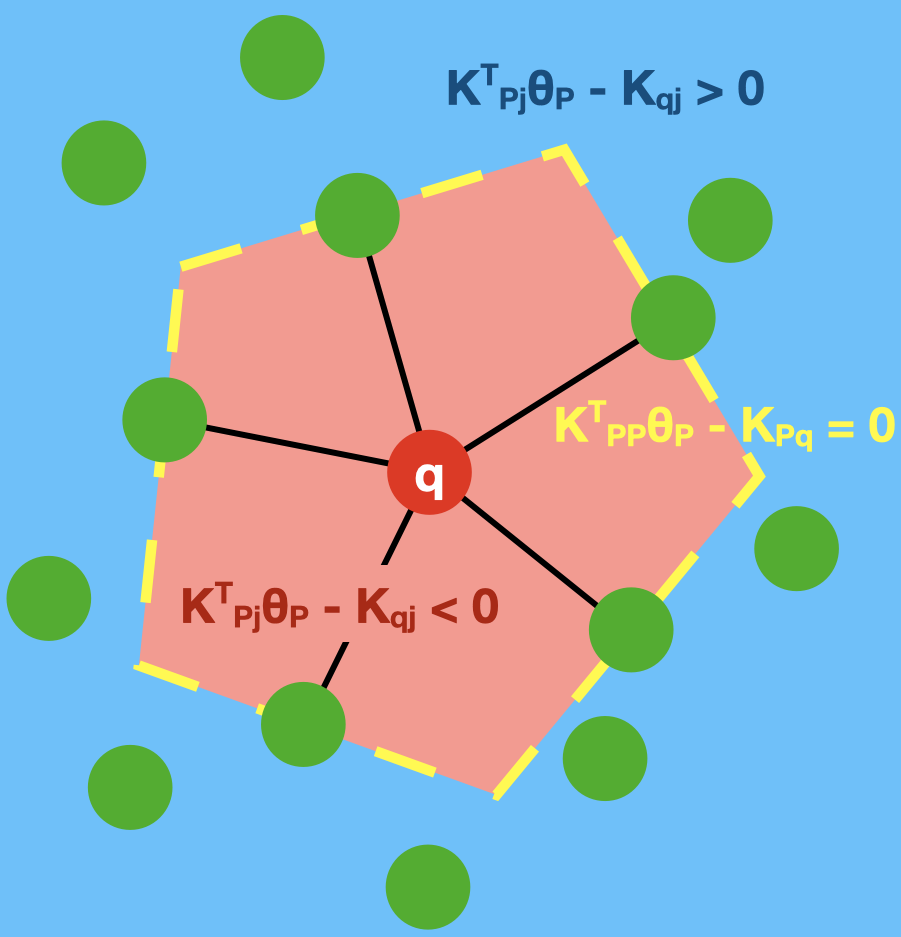}
\caption{}
\label{fig:polytope}
\end{subfigure}
\caption{Local geometry~(denoted in red) of NNK neighborhood for a kernel similarity proportional to the distance between the data points. \textbf{(a)} Hyper plane associated with a selected NNK neighbor. \textbf{(b)} Convex polytope corresponding to NNK neighbors around a query.}
\end{figure}

Corollary~\ref{corollary:plane_property} can be described in terms of projections as illustrated in \Figref{fig:plane}. Assume that query $q$ is connected to a neighbor $i$ ($\theta_{i} > 0$). Now, define a hyperplane that contains $i$ and is perpendicular to the line going from $q$ to $i$. Then any data point $j$ that is beyond this hyperplane (i.e., on the half space not containing $q$) will not be an NNK neighbor ($\theta_{j} = 0$).  This is because the orthogonal projection of ($\vx_j - \vx_q$) along the direction ($\vx_i - \vx_q$) is at a position beyond $\vx_i$. 

\begin{corollary}\textbf{Polytope Property (Local geometry of NNK):}
\label{corollary:polytope_optimality}
The local geometry of the NNK neighborhood with Gaussian kernel (\ref{eq:gaussian_kernel}), for a given query $q$, is a convex polytope around the query (\Figref{fig:polytope}). 
The optimal solution to the NNK objective \plainref{eq:neighborhood_nnk_objective} with $\vtheta = [\vtheta_P,\; \vtheta_{\bar{P}}]^\top$, where $\vtheta_P > 0$ and $\vtheta_{\bar{P}} = 0$ 
satisfies
\begin{enumerate}[label=(\Alph*)]
    \item $\mK^\top_{P, j}\vtheta_P - \mK_{q,j} \geq  0$ 
        $ \quad \iff \; \exists \; i \; \in \; P \; \colon \; \frac{\mK_{q,i}}{\mK_{q,j}} \geq \frac{1}{\mK_{i,j}}$
    \item  $\mK^\top_{P, j}\vtheta_P - \mK_{q,j} < 0$ 
        $\quad \iff \; \forall \; i \; \in \; P \; \colon \; \frac{\mK_{q,i}}{\mK_{q,j}} < \frac{1}{\mK_{i,j}} $
\end{enumerate}
\end{corollary}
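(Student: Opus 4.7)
The plan is to prove part A); part B) is then immediate by contraposition. The LHS of A) is exactly the active-constraint certificate from Proposition~\ref{prop:active_set}: it is the condition under which $\theta_{ik}=0$ belongs to the optimal NNK solution with partition $(\beta,\gamma)$. The RHS, $\mK_{i,j}/\mK_{i,k}\geq 1/\mK_{j,k}$ for some $j\in\beta$, is precisely the reversal of the Kernel Ratio Interval of Theorem~\ref{th:KRI} for the triple $(i,j,k)$ considered in isolation. So A) asserts that the global active condition at $k$ is equivalent to a pairwise-KRI violation against some member of $\beta$.

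First I would rewrite the LHS as a residual inner product. Since $\mPhi_\beta\vtheta_\beta$ is the orthogonal projection of $\vphi_i$ onto $\mathrm{span}\{\vphi_j:j\in\beta\}$, a direct Schur-complement computation on the bordered kernel matrix obtained by adjoining rows/columns for $i$ and $k$ gives
\begin{equation*}
\mK^\top_{\beta,k}\vtheta_\beta - \mK_{i,k} \;=\; -\bigl(\vphi_i-\mPhi_\beta\vtheta_\beta\bigr)^{\!\top}\bigl(\vphi_k-\mPhi_\beta\mK^{-1}_{\beta,\beta}\mK_{\beta,k}\bigr).
\end{equation*}
Letting $\vr_i$ and $\vr_k$ denote the two residuals, the LHS is equivalent to $\vr_i^{\!\top}\vr_k\leq 0$, a clean geometric condition on residuals in the RKHS.

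For the base case $|\beta|=1$ with $\beta=\{j\}$, $\vtheta_\beta=\mK_{i,j}$, so the LHS reduces to $\mK_{i,j}\mK_{j,k}\geq\mK_{i,k}$, which is exactly the reversed KRI for $(i,j,k)$, matching the RHS with the unique $j$ available. For the inductive step, write $\beta=\beta'\cup\{j_*\}$ and use the Gram--Schmidt rank-one update of the projection operator to obtain the recursion
\begin{equation*}
\vr_i^{\!\top}\vr_k \;=\; \tilde{\vr}_i^{\!\top}\tilde{\vr}_k \;-\; \frac{(\tilde{\vr}_i^{\!\top}\tilde{\vr}_{j_*})(\tilde{\vr}_k^{\!\top}\tilde{\vr}_{j_*})}{\|\tilde{\vr}_{j_*}\|^{2}},
\end{equation*}
where $\tilde{\vr}_\cdot$ denotes residuals taken with respect to $\beta'$. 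Combined with the plane geometry of Corollary~\ref{corollary:plane_property} (around which the polytope picture is built for the Gaussian kernel) and the positivity $\vtheta_\beta>0$, this identity propagates a pairwise KRI violation into a non-positive sign on $\vr_i^{\!\top}\vr_k$, and conversely lets one extract a blocking $j\in\beta$ whenever the sign flips.

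The main obstacle will be matching the single aggregated scalar $\vr_i^{\!\top}\vr_k$ on the LHS to the existential over individual $j\in\beta$ on the RHS. For the $\Leftarrow$ direction, one must show that a pairwise KRI violation at some $j\in\beta$ survives projecting out the remaining neighbors; this relies on the sign structure forced by $\vtheta_\beta>0$ together with the non-negativity of kernel entries, so that no competing $j'\in\beta\setminus\{j\}$ can cancel the violation. The $\Rightarrow$ direction is subtler: from the rank-one recursion above, if the $\beta$-level pairing becomes non-positive while all $\beta'$-level pairings were strictly positive, then $j_*$ itself must be the blocker, yielding $\mK_{i,j_*}\mK_{j_*,k}\geq\mK_{i,k}$ and closing the induction.
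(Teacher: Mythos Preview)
The paper's own justification for this corollary is a single sentence: it asserts that the result follows by applying Corollary~\ref{corollary:plane_property} to a series of points together with the active-set optimality conditions of Proposition~\ref{prop:active_set}, and gives no further detail. Your proposal goes much further, correctly deriving the residual identity $\mK_{\beta,k}^{\top}\vtheta_\beta-\mK_{i,k}=-\vr_i^{\top}\vr_k$ and verifying the base case $|\beta|=1$.

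The gap is in the inductive step, and it is not merely technical. You yourself flag the ``main obstacle'' of matching the aggregated scalar on the LHS to the existential over individual $j\in\beta$ on the RHS; your proposed resolution---that positivity of $\vtheta_\beta$ and of the kernel entries prevents cancellation, and that a sign flip in the rank-one recursion isolates a blocking $j_*$---does not go through. Concretely, with the Gaussian kernel ($\sigma^2=1$) take $i=(0,0)$, $j_1=(1,1)$, $j_2=(1,-1)$, $k=(1.9,0)$. The optimal NNK partition is $\beta=\{j_1,j_2\}$, $\gamma=\{k\}$, with $\theta_{j_1}=\theta_{j_2}=e^{-1}/(1+e^{-2})\approx 0.324$, and one computes $\mK_{\beta,k}^{\top}\vtheta_\beta-\mK_{i,k}\approx 0.098>0$, so the LHS of~A) holds. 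Yet $k$ lies strictly \emph{inside} both half-spaces $x+y<2$ and $x-y<2$, i.e.\ $\mK_{i,j}\mK_{j,k}<\mK_{i,k}$ for each $j\in\beta$, so the RHS of~A) fails. Thus the $\Rightarrow$ direction of~A) (equivalently the $\Leftarrow$ of~B)) is false as stated already for $|\beta|=2$, and no induction can recover it. The polytope built from the individual planes of Corollary~\ref{corollary:plane_property} is only an inner approximation to the true NNK exclusion region; the obstacle you identified is an actual obstruction, not a hurdle.
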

The conditions in Corollary \ref{corollary:polytope_optimality} are a direct consequence of applying Corollary \ref{corollary:plane_property} to a series of points. Gathering successive conditions leads to forming a polytope, as illustrated in Fig.~\ref{fig:polytope}, and to the optimality conditions of the active set Proposition~\ref{prop:active_set}. 
In other words, the NNK neighborhood algorithm constructs a polytope around a query using selected neighbors while disconnecting geometrically redundant data points outside the polytope.
This suggests that the local connectivity of NNK neighbors is a function of the \emph{local dimension} of the data manifold \cite{hurtado2022study}.

We note that corollaries~\ref{corollary:plane_property} and \ref{corollary:polytope_optimality} are applicable to all kernels that are monotonic with respect to the distance between data points. However, we emphasize that equivalent geometric conditions based on the KRI can be obtained for other kernels. For example, in the case of linear kernels, it can be shown that the NNK geometry obtained is that of a convex cone with each selected neighbor corresponding to a hyperplane passing through the origin \cite{Gale51} (see \Figref{fig:lle_differences}). 

\subsection{NNK Geometry in kernel space}
In some cases, such as bio-sequences or text documents, for which it may be difficult to represent the input as explicit feature vectors, using a distance-based kernel, such as the Gaussian kernel in \plainref{eq:gaussian_kernel}, is not feasible, and alternative kernel similarity measures need to be employed \cite{lodhi2002text}. However, the RKHS space associated with these alternative kernels still possesses an inner product and norm. Thus, it is valuable to consider the geometry of the kernel space to understand neighborhood algorithms, even when the input space geometry is not explainable. To this end, we now derive properties of  NNK neighborhoods in terms of the distances and angles between the RKHS mappings of the input data.
\begin{proposition}
\label{prop:RKHS_distance}
Points in an RKHS associated with any kernel function $\kappa$ with range in $[0, 1]$ are characterized by an RKHS distance given by,
\begin{align}
    \Tilde{d}^2(i, j) = \|\vphi_i - \vphi_j\|^2 = 2 - 2\kappa(\vx_i, \vx_j). \label{eq:rkhs_dist}
\end{align}
\end{proposition}


\begin{theorem}
\label{thm: nnk_rkhs_geometry}
Given distance \plainref{eq:rkhs_dist}, the necessary and sufficient condition for two points $i,j$ to be NNK neighbors to a query $q$ is
\begin{align*}
    \theta_{i} \neq 0 \iff & \Tilde{d}^2(q,j) + \Tilde{d}^2(i,j) - \Tilde{d}^2(q,i) > \frac{\Tilde{d}^2(q,j)\Tilde{d}^2(i,j)}{2} \\ 
    \theta_{j} \neq 0 \iff & \Tilde{d}^2(q,i) + \Tilde{d}^2(i, j) - \Tilde{d}^2(q,j)  >  \frac{\Tilde{d}^2(q, i)\Tilde{d}^2(i, j)}{2} \\ 
\end{align*}
\end{theorem}
\begin{corollary}
\label{corollary:rkhs_angle}
Using the law of cosines,
\begin{align}
    \theta_{i} \neq 0 \iff \cos\alpha > \frac{\Tilde{d}(q,j)\Tilde{d}(i, j)}{4} \label{eq:rkhs_angle_i} \\
    \theta_{j} \neq 0 \iff \cos\beta > \frac{\Tilde{d}(q, i)\Tilde{d}(i, j)}{4} \label{eq:rkhs_angle_j} 
\end{align}
where $\alpha, \beta$ are the angles subtended by the chords joining ($\phi_i, \phi_j$) and ($\phi_q, \phi_j$), ($\phi_q, \phi_i$) respectively as in \Figref{fig:rkhs_geometry}.
\end{corollary}

Corollary \ref{corollary:rkhs_angle} allows us to provide a bound for the angle subtended and the length of chords for an NNK neighbor, namely, 
\begin{align}
  \theta_{i} \neq 0 \iff \frac{\pi}{3} < \alpha < - \frac{\pi}{3} \label{eq:rkhs_angle_bound}
\end{align}
where we use the fact that the RKHS distance \plainref{eq:rkhs_dist} is is upper bounded by $2$. A similar result holds for $j$ and $\beta$.
\begin{figure}[htbp]
\centering
\begin{subfigure}{0.23\textwidth}
\centering
\includegraphics[height=0.14\textheight]{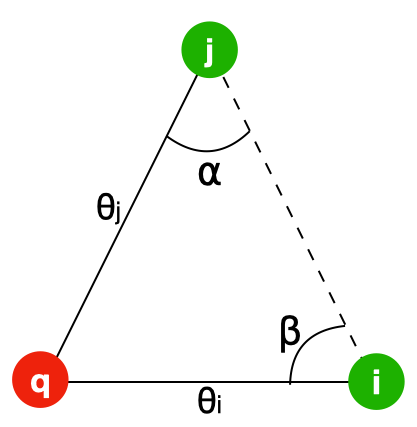}
\caption{}
\label{fig:rkhs_geometry}
\end{subfigure}
~
\begin{subfigure}{0.23\textwidth}
\centering
\includegraphics[height=0.14\textheight]{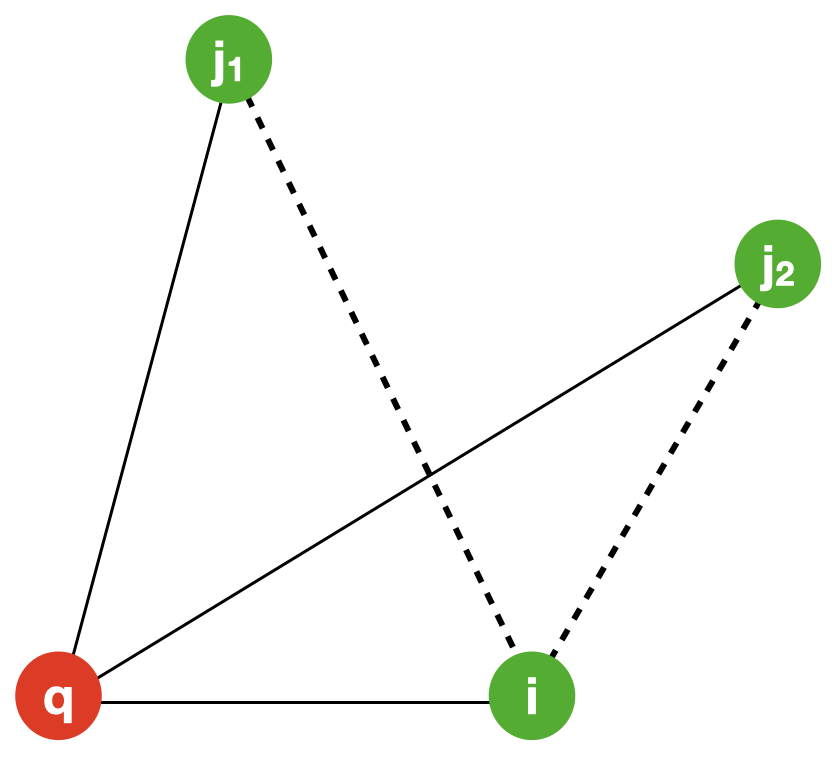}
\caption{}
\label{fig:pursuit_2nd_step_example}
\end{subfigure}
\caption{(a) Geometric setup of NNK in RKHS. (b) Problem setup at the $2^{nd}$ step of pursuit in MP, OMP neighborhoods.}
\end{figure}
\subsection{Geometry of MP, OMP neighborhoods}
\label{subsec:basis_pusuit_geometry}
We now show that Theorem \ref{th:KRI} can also be  applied to matching pursuits-based neighborhood definitions. We note that the results in this section are applicable to other problems with non-negative basis pursuits \cite{hoyer2002non, NguyenNNOG2019}. The geometric conditions presented here provide a potential approach to efficiently screen and select atoms in iterative basis pursuits for non-negative approximations with large dictionaries.  

Assume atom $\vphi_i$ is selected in the first step of either the MP or the OMP algorithm (\autoref{tab:nnk_mp_omp_algorithm}). Then, the approximation residue is given by $\vphi_{r_1} = \vphi_q - \mK_{q,i}\vphi_i$.

\noindent Now, observe that
\begin{align}
    \vphi^\top_{r_1}\vphi_q &= 1 - \mK^2_{q,i} \geq 0 \quad \text{and} \label{eq:non_zero_residue_step1}\\
    \vphi^\top_{r_1}\vphi_i &= \mK_{q,i} - \mK_{q,i}\mK_{i,i} = 0, \quad \text{since } \mK_{i,i}=1.  \label{eq:res_orthogonal_step1}
\end{align} 
Equation (\ref{eq:non_zero_residue_step1}) shows that the residue at the end of the first step can only be exactly zero iff $\mK_{q,i}=1$. 
However, the residue at the end of the first step is orthogonal to the selected atom ($\vphi_i$) as shown in \plainref{eq:res_orthogonal_step1}. 

For the second step of the algorithm, consider two points $j_1$ and $j_2$ as in Fig. \ref{fig:pursuit_2nd_step_example}. By \autoref{th:KRI}, we expect $j_2$ to be not selected and $j_1$ to be selected. The equations below show that the pursuit algorithms carry out a selection process where points that do not satisfy the KRI conditions are indeed not selected for representation.
\begin{align*}
    \vphi^\top_{r_1}\vphi_{j_2} = \mK_{q,j_2} - \mK_{q,i}\mK_{i,j_2} &\leq 0 &\iff \frac{\mK_{q,i}}{\mK_{q,j_1}} \geq \frac{1}{\mK_{i,j_1}} \\
    \vphi^\top_{r_1}\vphi_{j_1} = \mK_{q,j_1} - \mK_{q,i}\mK_{i,j_1} &> 0 &\iff \frac{\mK_{q,i}}{\mK_{q,j_1}} < \frac{1}{\mK_{i,j_1}}
\end{align*}
The weights after selection for the set $S = \{i, j_1\}$ in the case of MP is given by  $\vtheta^{(MP)}_{S} = \left[\mK_{q,i} \; \mK_{q,j_1} - \mK_{q,i}\mK_{i,j_1} \right]^\top$, while in OMP, $\vtheta^{(OMP)}_S$ is computed by minimizing (\ref{eq:neighborhood_nnk_objective}).
In MP, the updated residue is orthogonal only to the last selected data point $j_1$,
\begin{align}
    \vphi^{(MP)\top}_{r_2}\vphi_i =& \mK_{q,i} - \mK_{q,i}\mK_{i,i} -(\mK_{q,j_1} - \mK_{q,j_1}\mK_{i,j_1})\mK_{i,j_1} \nonumber\\
    =& -(\mK_{q,j_1} - \mK_{q,j_1}\mK_{i,j_1})\mK_{i,j_1} \neq 0 \label{eq:res_non_orthogonal_step2}\\
    \vphi^{(MP)\top}_{r_2}\vphi_{j_1} =& \mK_{q,j_1} - \mK_{q,i}\mK_{i,j_1} -(\mK_{q,j_1} - \mK_{q,j_1}\mK_{i,j_1}) \nonumber \\ 
    =& 0 \label{eq:res_orthogonal_step2}
\end{align}
while in OMP, the residue is orthogonal to all selected atoms as guaranteed by the first-order optimality condition of the objective (\ref{eq:neighborhood_nnk_objective}), i.e, $\mK_{S,S}\vtheta^{(OMP)} - \mK_{S,q} = 0$.
\begin{align}
    \vphi^{(OMP)\top}_{r_2}\vphi_i =& \mK_{q,i} - \vtheta^{(OMP)\top}_S\mK_{S,i} = 0\\
    \vphi^{(OMP)\top}_{r_2}\vphi_{j_1} =& \mK_{q,j_1} - \vtheta^{(OMP)\top}_S\mK_{S,j_1} = 0
\end{align}
This scenario repeats in the subsequent pursuit steps where selected neighbors in MP are orthogonal only to the last neighbor selected, while in OMP to all selected neighbors. The iterative selection ends when the maximum number of allowed neighbors is reached or when no positively correlated atom is available for representing the residue i.e,
\begin{align*}
    \forall m \notin S \quad \mK_{q,m} - \vtheta_S\mK_{S,m} \leq 0
\end{align*}
The above criteria prevent pursuit algorithms from selecting points that do not contribute to the non-negative approximation as these correspond to points outside the polytope (Corollary \ref{corollary:polytope_optimality}). 
Note that, unlike unconstrained pursuit where adding more atoms often leads to a better approximation, increasing the number of selected atoms does not necessarily correspond to better representation in non-negative pursuit \cite{NguyenNNOG2019}.

\section{Experiments and Results}
\label{sec:experiments}
We demonstrate the benefits of the proposed NNK framework in neighborhood-based classification, graph-based label propagation and dimensionality reduction. 
Code for experiments is available at  \url{https://github.com/STAC-USC/}. 
\subsection{NNK neighborhood}
\label{sec:nnk_neighbor_experiments}
We study the proposed NNK neighborhood for classification~(binary and multi-class) using a plug-in classifier $\sI(\hat{f}(\vx_q) > 0)$ based on the neighborhood estimate 
\begin{align}
\hat{f}(\vx_q) = \sum_{i \in \gN(\vx_q)}\frac{\theta_{i}}{\sum_{j \in \gN(\vx_q)} \theta_j}\;y_i, \label{eq:local_estimate}
\end{align}
where $\gN(\vx_q)$ is the neighborhood defined for query $\vx_q$ and $\theta_i, y_i$ are the vector of weights and the one-hot encoded label associated with $\vx_i$, respectively.

\textbf{Experiment setup:}
We compare the NNK neighborhood against two baselines, the standard weighted $\rk$NN and an adaptive neighborhood approach $\rk^*$NN \cite{anava2016k}.
We make use of the Gaussian kernel~\plainref{eq:gaussian_kernel} for neighborhoods defined with $\rk$NN and NNK. 
Two groups of datasets are considered: $(i)$ AR face \cite{martinez1998ar}, Extended YaleB \cite{georghiades2001few}, Isolet, and USPS \cite{gadde2014active} with their standard train/test split, and $(ii)$ $70$ datasets from OpenML \cite{rijn2013openml}. The OpenML datasets are obtained based on the number of dimensions ($d \in [20, 1000]$), the number of samples ($N \in [1000, 20000]$),  with no restriction on the number of classes as in \cite{ram2022federated}.  
All datasets are standardized using the empirical mean and variance estimated with training data. We repeat all experiments $10$ times and report average performances and ablation studies. 

\begin{figure}[htbp]
\begin{subfigure}{0.125\textwidth}
    \includegraphics[trim={1cm 6.5cm 2cm 7cm},clip,width=\textwidth]{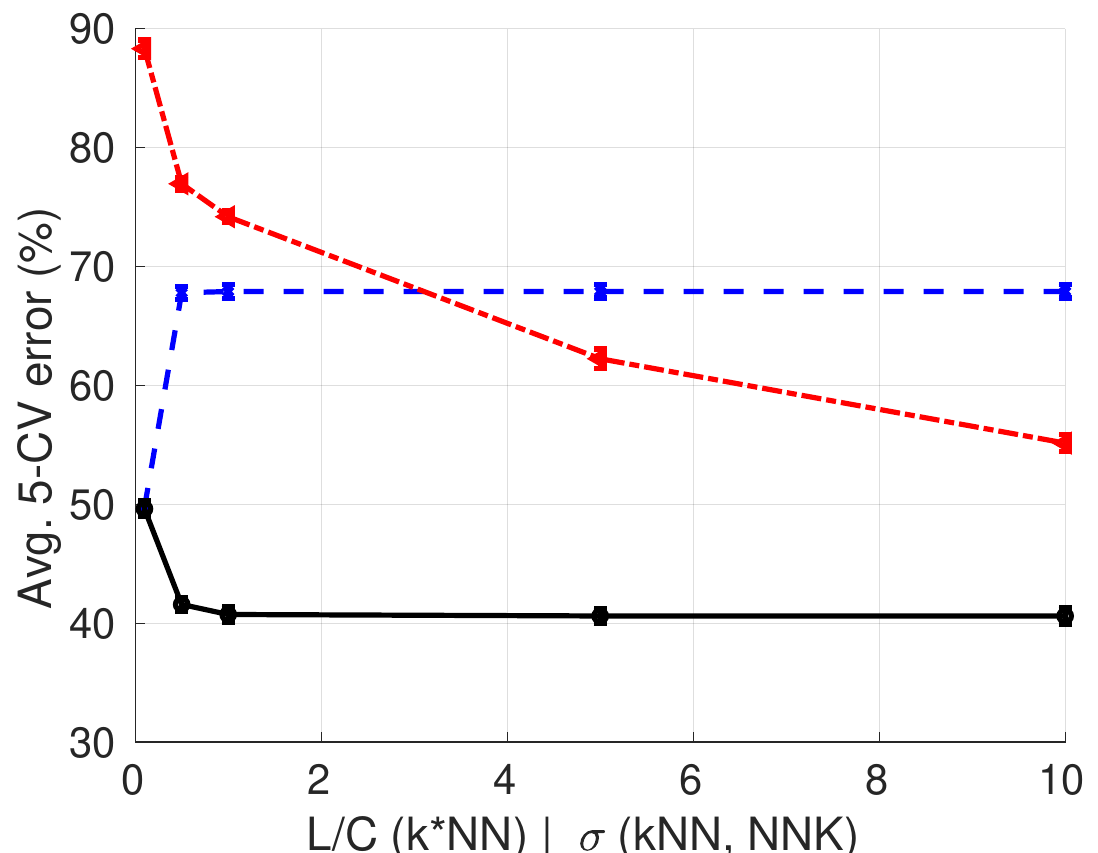}
\end{subfigure}
\begin{subfigure}{0.115\textwidth}
    \includegraphics[trim={2.2cm 6.5cm 2cm 7cm},clip,width=\textwidth]{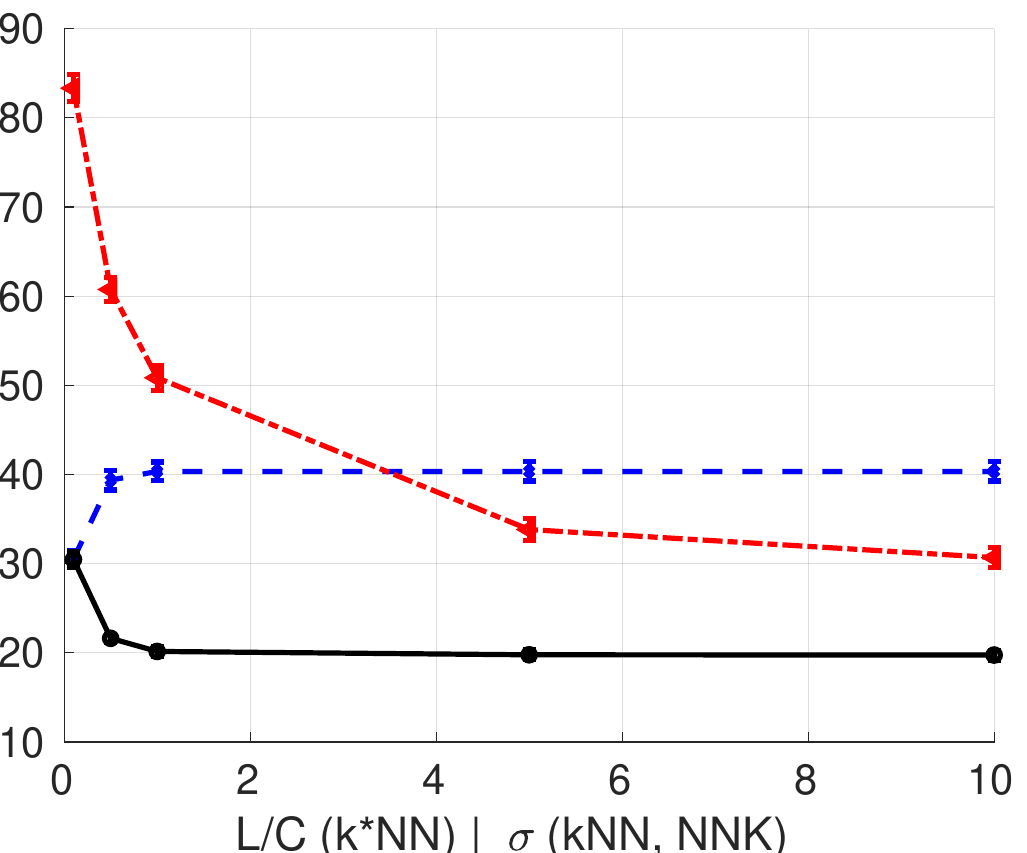}
\end{subfigure}
\begin{subfigure}{0.115\textwidth}
    \includegraphics[trim={2.2cm 6.5cm 2cm 7cm},clip,width=\textwidth]{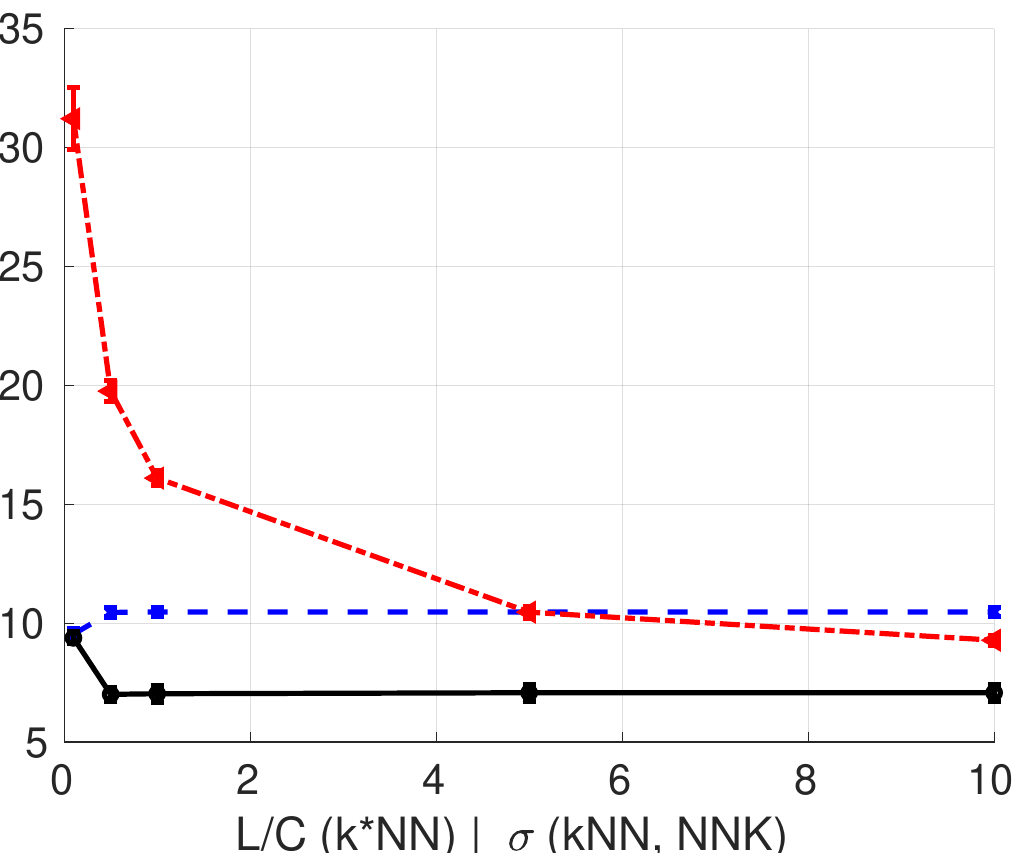}
\end{subfigure}
\begin{subfigure}{0.115\textwidth}
    \includegraphics[trim={2.2cm 6.5cm 2cm 7cm},clip,width=\textwidth]{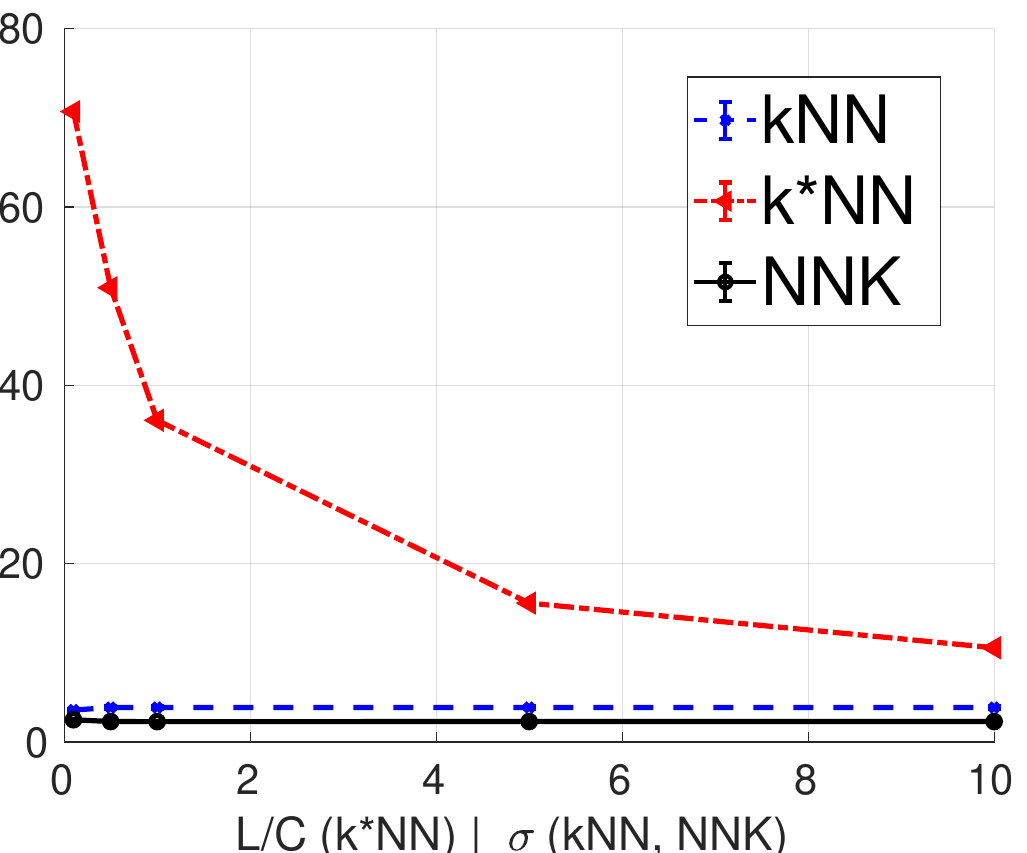}
\end{subfigure}

\begin{subfigure}{0.125\textwidth}
    \includegraphics[trim={1cm 6.5cm 2cm 7cm},clip,width=\textwidth]{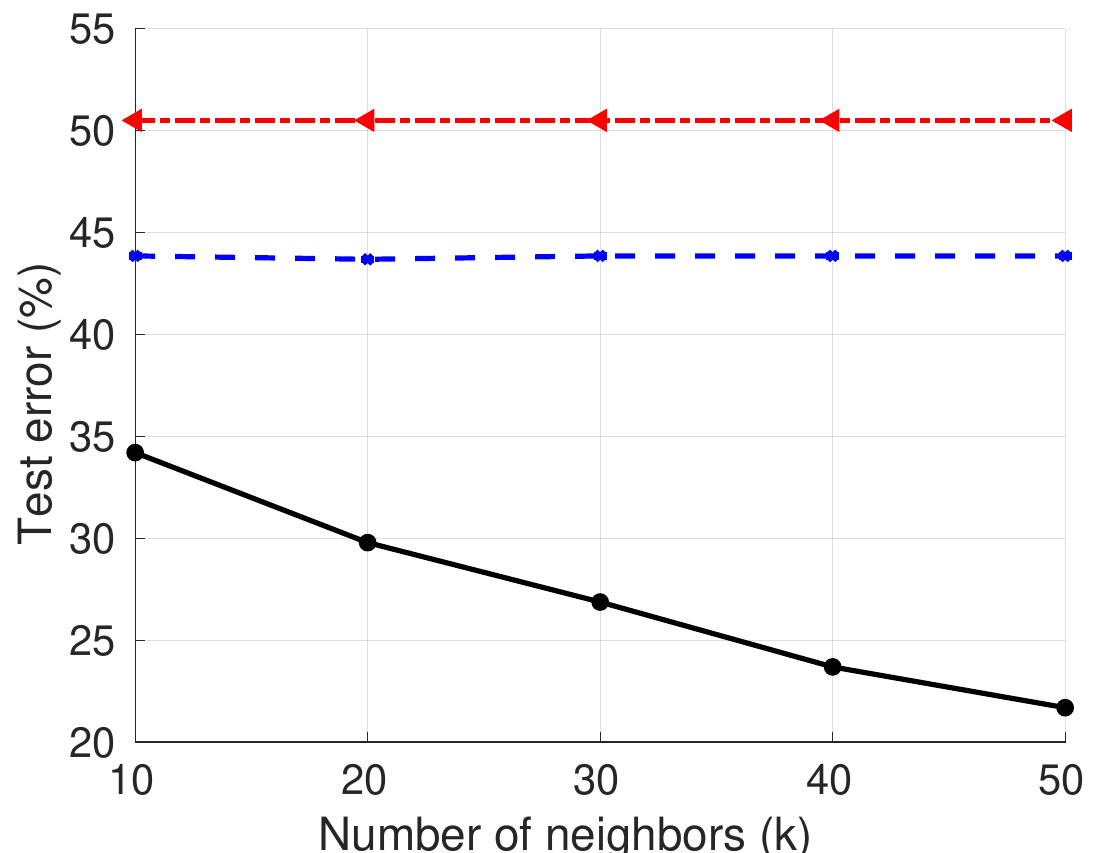}
    \caption{AR Face}
\end{subfigure}
\begin{subfigure}{0.115\textwidth}
    \includegraphics[trim={2.2cm 6.5cm 2cm 7cm},clip,width=\textwidth]{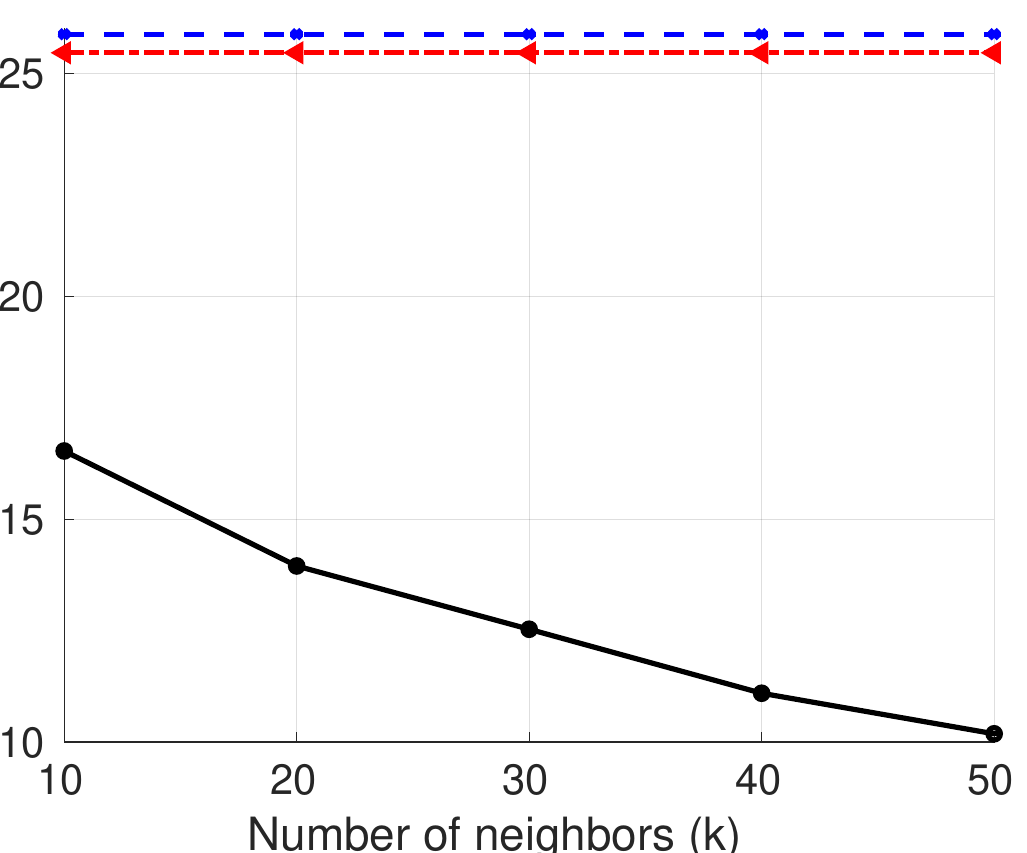}
    \caption{YaleB Face}
\end{subfigure}
\begin{subfigure}{0.115\textwidth}
    \includegraphics[trim={2.0cm 6.5cm 2cm 7cm},clip,width=\textwidth]{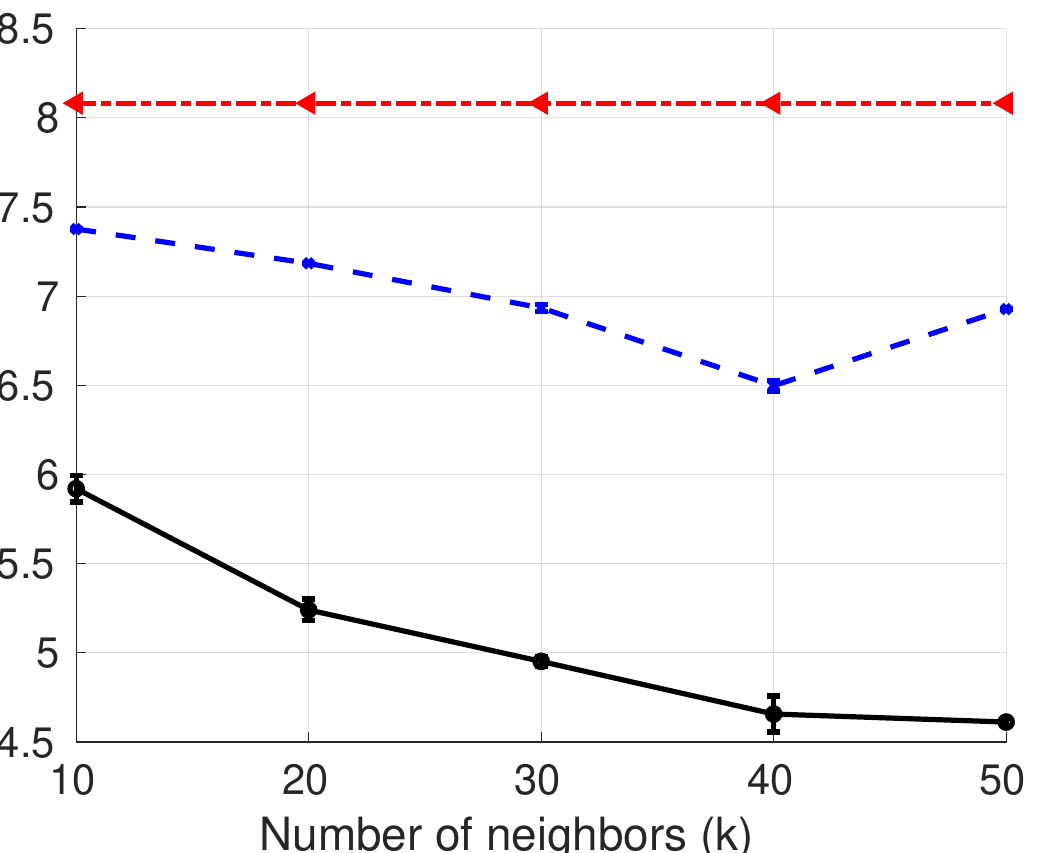}
    \caption{Isolet}
\end{subfigure}
\begin{subfigure}{0.115\textwidth}
    \includegraphics[trim={2.2cm 6.5cm 2cm 7cm},clip,width=\textwidth]{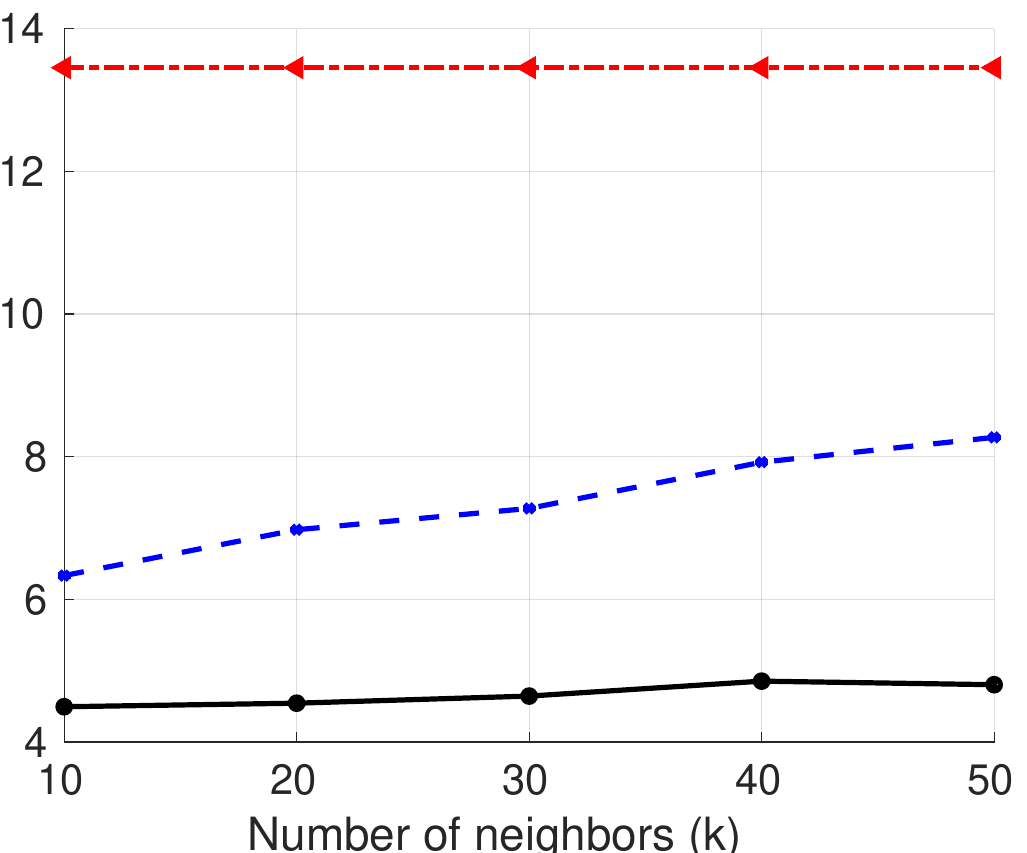}
    \caption{USPS}
\end{subfigure}
\caption{\textbf{Top:} Average cross-validation error for different values of hyperparameters associated with $\rk^*$NN , $\rk$NN, and NNK. The value of $\rk$ in $\rk$NN and NNK is set to $10$ in these experiments. \textbf{Bottom:} Test error of $\rk$NN and NNK for different values of $\rk$ with hyperparameter $\sigma$ selected using cross-validation. Here we include $\rk^*$NN for reference. 
We see that NNK, unlike $\rk$NN and $\rk^*$NN,  produces better performance for a wide range of hyperparameter values.
}
\label{fig:ablation_study}
\end{figure}

\textbf{AR, YaleB, Isolet, USPS:} We use these datasets to perform ablation and $5$-fold cross-validation studies on parameters $\rk, \sigma$ used in $\rk$NN and NNK, and $L/C$ value in $\rk^*$NN. We consider values  $\rk\in \{10, 20, 30, 40, 50\}$ and $\sigma, L/C$ in $\{ 0.1, 0.5, 1, 5, 10\}$ as in \cite{anava2016k}.
\begin{table}[htbp]
    \centering
    \begin{tabular}{l c c c c c c}
    \toprule
        Dataset & $N_{tr}, N_{te}$ & $d$ & $N_{class}$ & $\rk$NN & $\rk^*$NN & NNK \\\midrule
        AR  & $2000, 600 $ & $540$ & $ 100$ & $43.68$ & $50.5$ & $\mathbf{21.68}$ \\
        YaleB & $1216, 1198$ & $504$ & $ 38$ & $25.88$ & $25.46$ & $\mathbf{10.18}$ \\
        Isolet & $6238, 1559$ & $617$ & $ 26$ & $6.50$& $8.08$ & $\mathbf{4.61}$ \\
        USPS & $7291, 2007$ & $256$ & $ 10$ & $6.33$ & $13.45$ & $\mathbf{4.49}$ \\
        \bottomrule
    \end{tabular}
    \caption{Test classification error (in $\%$, lower is better) using local neighborhood methods on datasets with train ($N_{tr}$) and test ($N_{te}$) sample size, dimension ($d$), and classes ($N_{class}$). NNK-based classification consistently outperforms baseline methods while being robust to choices of $\rk, \sigma$.}
    \label{tab:neighborhood_estimation}
\end{table}
\Tabref{tab:neighborhood_estimation} presents the classification performance on the test datasets with different neighborhood classifiers using training data. Reported performance corresponds to the best set of hyperparameters (obtained with cross-validation) for each method and demonstrates the gains of NNK relative to the two baselines. 

\Figref{fig:ablation_study} shows the cross-validation and test classification performance for various parameter choices. We note that both $\rk$NN and NNK are less affected by the hyperparameter $\sigma$ relative to $\rk^*$NN, which is heavily influenced by the choice of $L/C$. In particular, we see that there exists a narrow choice of $\sigma$ where $\rk$NN achieves maximum performance, as larger $\sigma$ leads to higher weights assigned to farther away neighbors. In contrast, the NNK algorithm produces a robust performance for a broad range of $\sigma$.
Further, it can be seen that NNK estimates saturate with increasing $\rk$. The  stability in NNK estimation is due to the fact that each neighbor is selected only if it belongs to a new direction in space not previously represented. 
Note that NNK estimation outperforms $\rk$NN and $\rk^*$NN, in these datasets, even for a sub-optimal choice of the kernel parameter $\sigma$ and the number of neighbors $\rk$.

\textbf{OpenML datasets:} 
The train/test split for each dataset is obtained by randomly dividing the data into two equal halves with hyperparameters $\sigma$ or $L/C$ obtained via cross-validation as in the previous set of experiments. The value of $\rk$ in $\rk$NN and NNK is fixed to $30$. 
\begin{table}[htbp]
    \centering
    \begin{tabular}{l c c c}
    \toprule
        Method & Avg. error ($\%$) & Wins/Ties/Losses & Avg. Rank  \\\midrule
        $\rk$NN & 18.06 (0.83) & 7/24/39 & 2.0571 \\
        $\rk^*$NN & 20.71 (1.04) & 19/10/50 & 2.2714 \\
        NNK & \textbf{16.26 (0.74)} & \textbf{44/9/17} & \textbf{1.6714} \\\bottomrule
    \end{tabular}
    \caption{Performance summary of neighborhood methods on $70$ OpenML datasets. Listed metrics include (i) Average classification error and standard deviation (in parentheses) across all datasets, (ii) the number of datasets where a method performs better than the other two (Win), performs within one standard deviation of the best performance (Tie), and performs poorly in comparison (Loss), and (iii) average rank of the method based on the classification ($1^{st}, 2^{nd}, 3^{rd}$).}
    \label{tab:openml_performance}
\end{table}
\Tabref{tab:openml_performance} provides a summary of the classification performance on all $70$ datasets. 
We observe that NNK achieves better overall performance in comparison to baselines. $\rk^*$NN, though successful in some cases, falls short in several datasets, which is indicative of its inability to adapt to varied dataset scenarios. In contrast, NNK demonstrates flexibility to feature and class disparities in several datasets. The failure cases of NNK, such as the binarized version of  regression designed in \cite{friedman1991multivariate} ($11$ of $17$ losses),  correspond to datasets with non-informative feature dimensions containing only noise. 
In such scenarios, redundancy in the representation, as in $\rk$NN, can provide a better estimate. We leave this NNK limitation for future study.       
\subsection{NNK graphs}
\label{sec:nnk_graph_experiments}
In this section, we compare NNK graphs (\Algref{tab:nnk_graph_algorithm}) and the basis pursuit variants (MP, OMP from \Algref{tab:nnk_mp_omp_algorithm}) with graph construction methods based on weighted $\rk$NN, LLE \cite{Wang2008}, AEW \cite{Karasuyama2017}, and the method by Kalofolias \cite{kalofolias2018large}. 
We measure the goodness of the graphs in terms of the sparsity of the graphs, runtime required for graph construction, 
and performance on downstream tasks. 
As in neighborhood experiments, we use the Gaussian kernel defined in \plainref{eq:gaussian_kernel}. We set $\sigma$ parameter such that the $\rk$NN approach is able to assign a nonzero weight to all its $\rk$ neighbors, i.e. a connected graph, by ensuring the $\rk^{th}$ neighbor distance is within $3*\sigma$. 
For a fair comparison, we use the same $\sigma$ value for  AEW, MP, OMP, and  NNK graph constructions.

\begin{figure}[tbp]
\centering
\begin{subfigure}{0.15\textwidth}
\includegraphics[trim={2cm 1cm 2cm 1cm},clip, width=\textwidth]{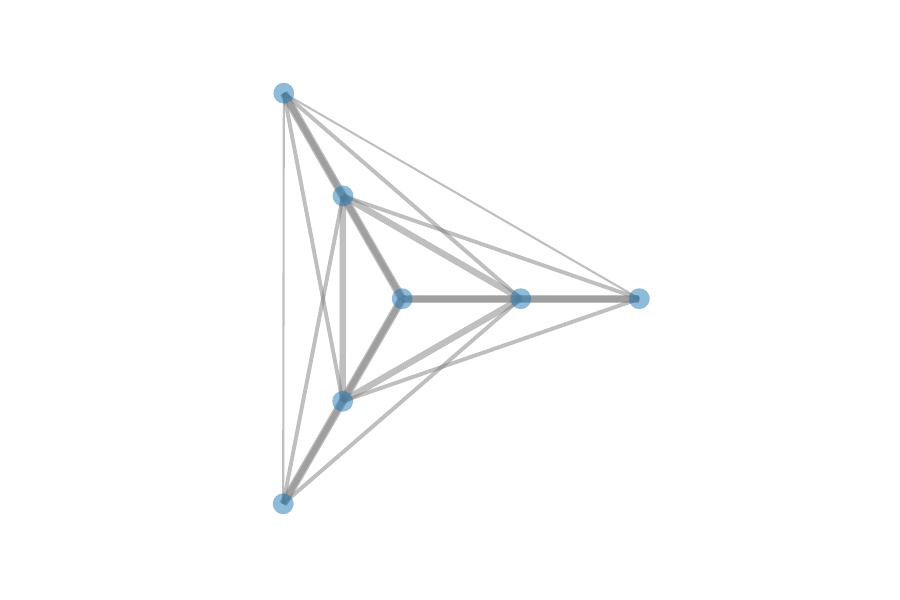}
\end{subfigure}
\begin{subfigure}{0.15\textwidth}
\includegraphics[trim={2cm 1cm 2cm 1cm},clip, width=\textwidth]{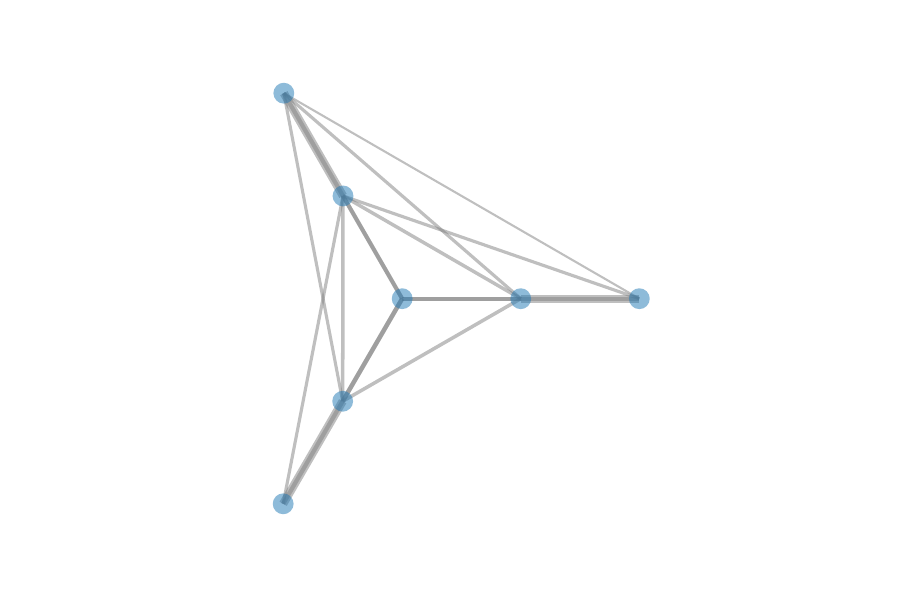}
\end{subfigure}
\begin{subfigure}{0.15\textwidth}
\includegraphics[trim={2cm 1cm 2cm 1cm},clip, width=\textwidth]{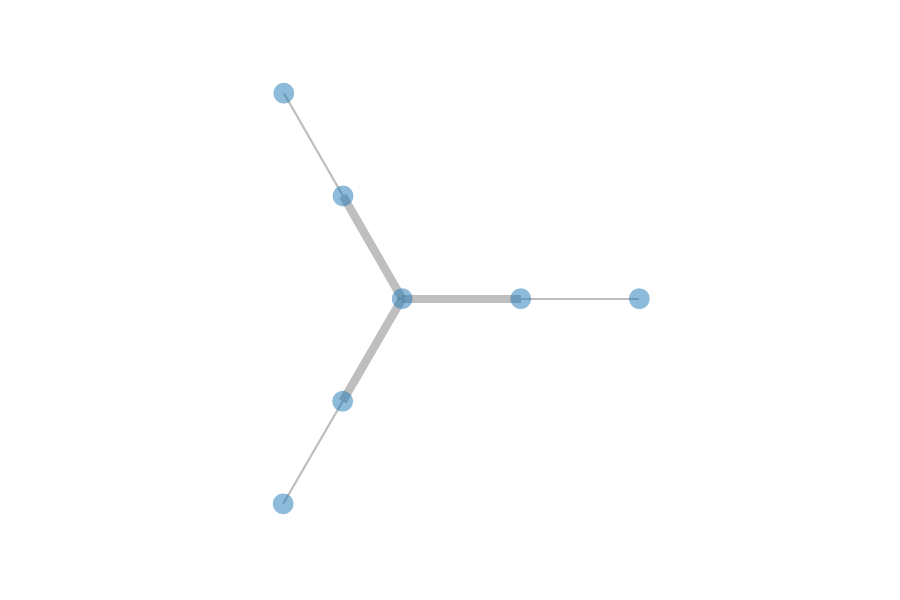}
\end{subfigure}

\begin{subfigure}{0.15\textwidth}
\includegraphics[trim={2cm 1cm 2cm 1cm},clip, width=\textwidth]{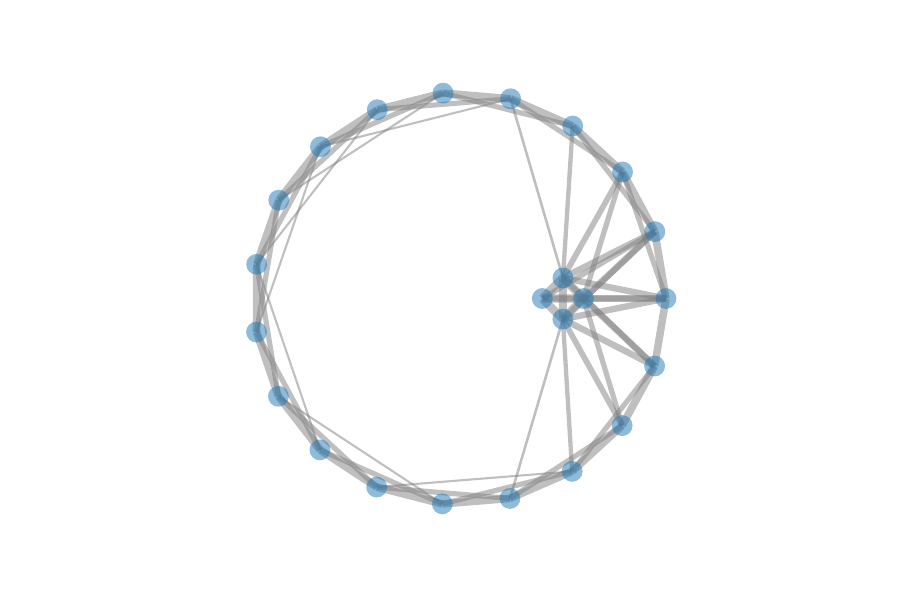}
\caption{$\rk$NN Graph}
\end{subfigure}
\begin{subfigure}{0.15\textwidth}
\includegraphics[trim={2cm 1cm 2cm 1cm},clip, width=\textwidth]{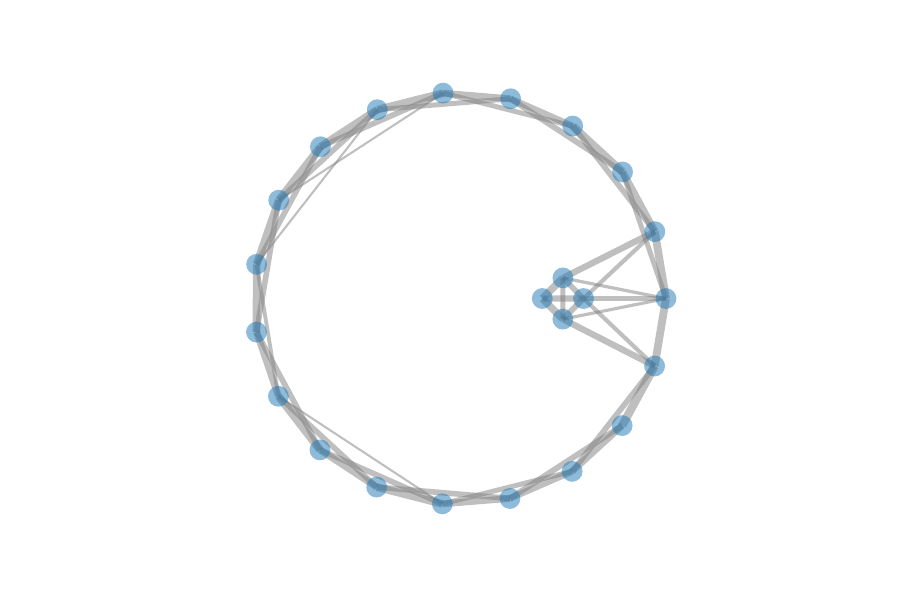}
\caption{LLE Graph}
\end{subfigure}
\begin{subfigure}{0.15\textwidth}
\includegraphics[trim={2cm 1cm 2cm 1cm},clip, width=\textwidth]{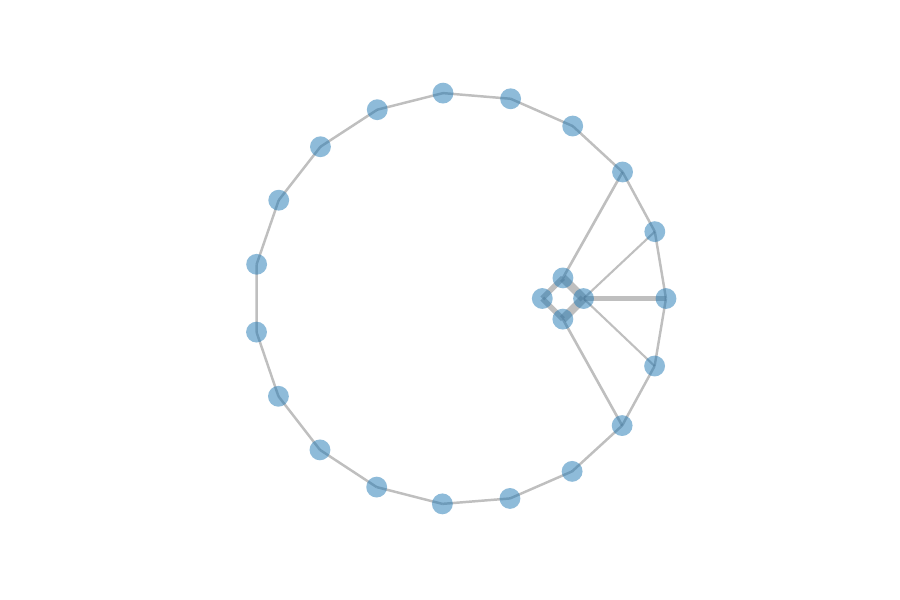}
\caption{NNK Graph}
\end{subfigure}
\caption{Graph constructions in synthetic data settings. Unlike $\rk$NN and LLE, NNK achieves a sparse representation where the \emph{relative locations} of nodes in the neighborhood are key. In the seven-node example (top row), the node in the center has only three direct neighbors, which in turn are neighbors of nodes extending the graph further in those three directions. Graphs are constructed with $\rk=5$ and Gaussian kernel~($\sigma=1$). Edge thickness indicates neighbor weights normalized globally to have values in $[0, 1]$.}
\label{fig:2d_example}
\end{figure}
\begin{figure}[htbp]
    \centering
    \begin{subfigure}{0.24\textwidth}
        \includegraphics[trim={1.2cm 6cm 1.5cm 6.7cm},clip,width=\textwidth]{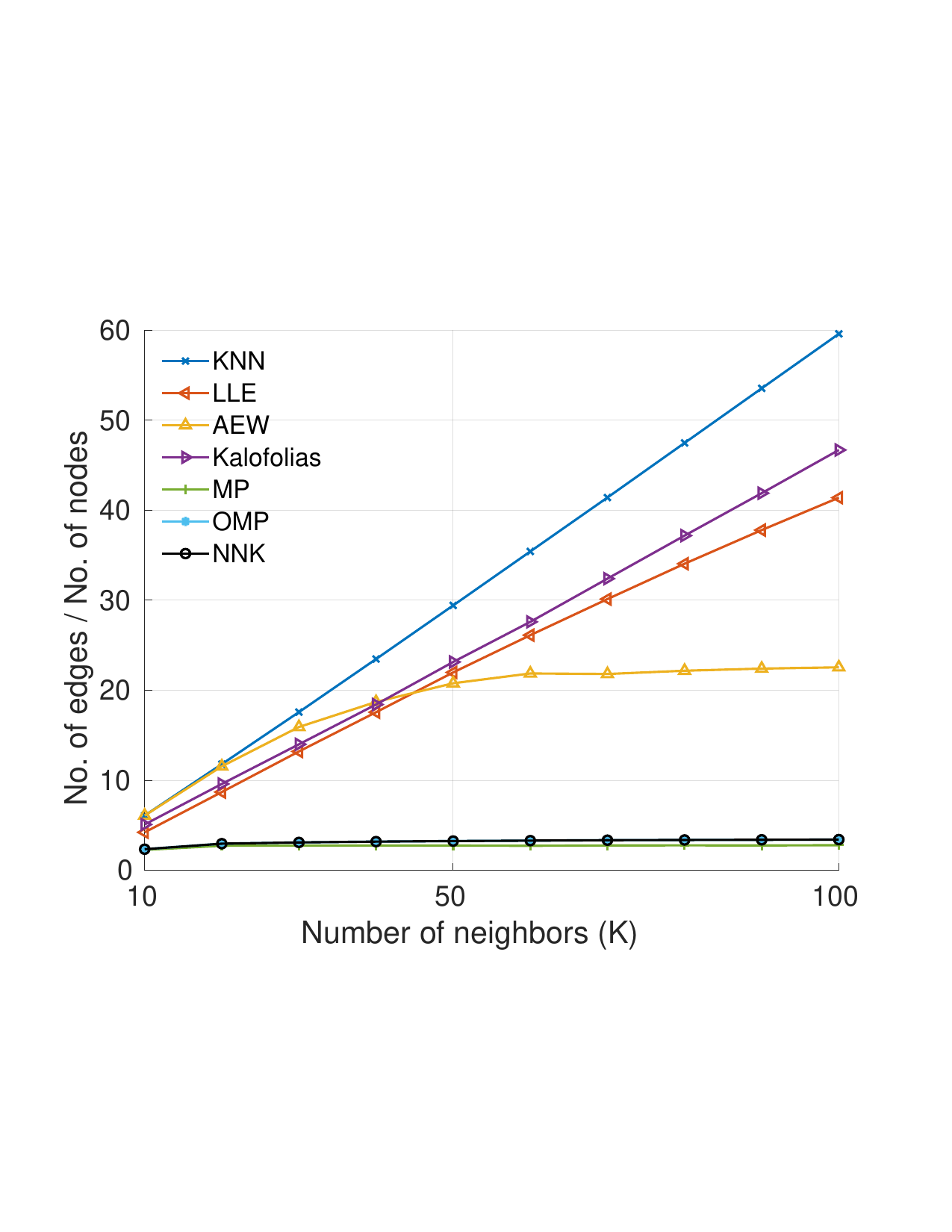}
    \end{subfigure}
    \begin{subfigure}{0.24\textwidth}
        \includegraphics[trim={1.2cm 6cm 1.5cm 6.7cm},clip,width=\textwidth]{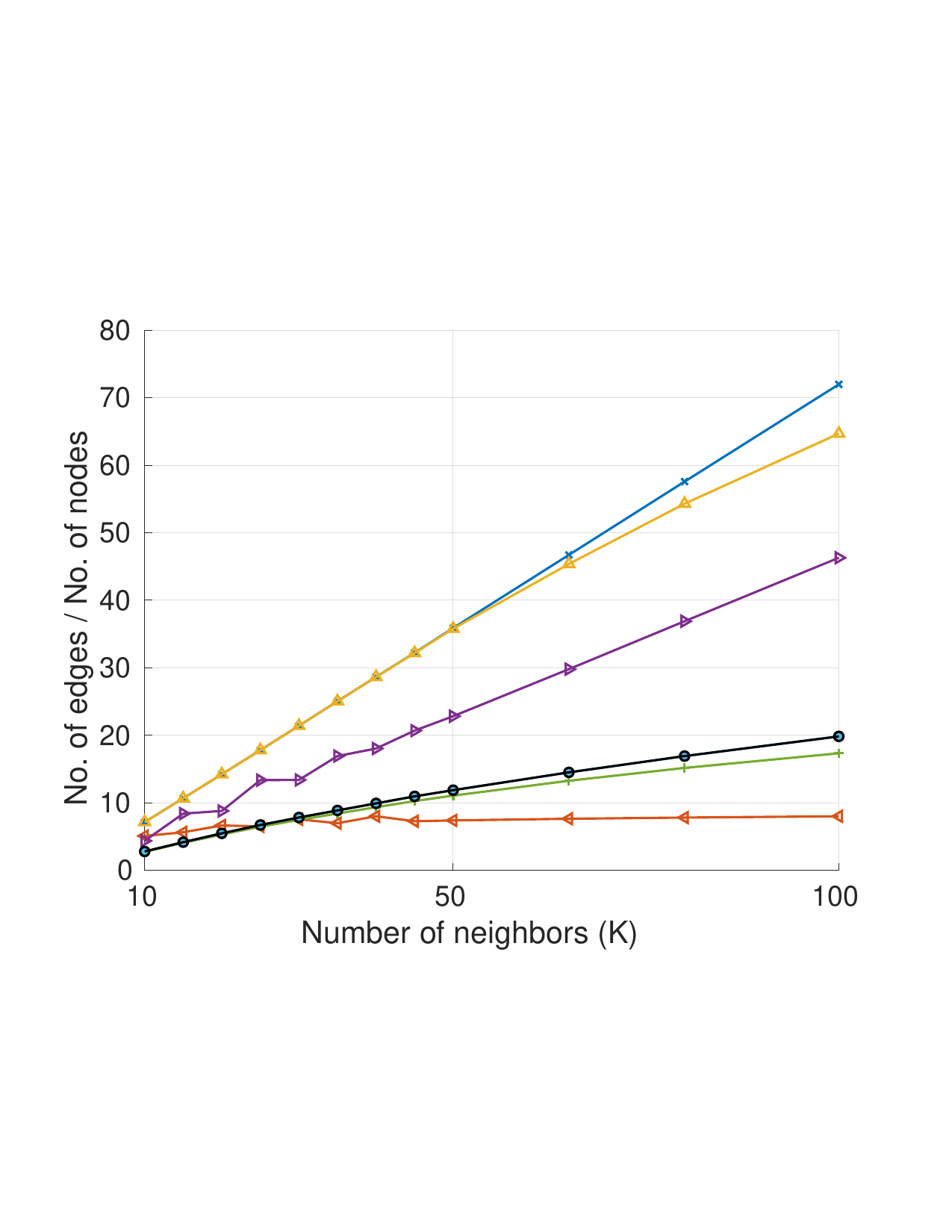}
    \end{subfigure}
    \caption{Ratio of the number of edges (with weights $\geq 10^{-8}$) to the number of nodes.  \textbf{Left:} Noisy Swiss Roll (N=5000, d=3),  \textbf{Right:} MNIST (N=1000, d=784). The sparsity of NNK, MP, OMP graphs saturates for increasing $\rk$ and is reflective of the intrinsic dimensionality of the data.}
    \label{fig:swiss_roll_dimensionality}
\end{figure}
\subsubsection{Sparsity and runtime complexity} 
\label{sec:sparsity_runtime_graph}
Graph sparsity, measured as the ratio of the number of edges to the number of nodes, is desirable for 
downstream graph analysis tasks to scale well for large data sets \cite{Batson2013}. 
Unlike a $\rk$NN graph, where sparsity is controlled by the choice of $\rk$, an NNK graph provides an adaptive sparse representation. \Figref{fig:2d_example} presents two examples to compare the NNK graph  sparsity with that of the $\rk$NN and LLE graphs.
In \Figref{fig:swiss_roll_dimensionality}, we show the effect of parameter $\rk$ on the sparsity of various graph methods on a synthetic and real dataset. We see that the MP, OMP, and NNK algorithms lead to the sparsest graph representations.
Note that LLE graphs suffer from unstable optimization due to the high dimensionality of the features in the real dataset. Thus, the observed sparsity, in this case, is not reflective of a good graph and is only a shortcoming of the optimization (weights $<10^{-8}$).

As shown in  \Figref{fig:ssl_performance}, the NNK framework outperforms other methods in terms of both sparsity and the time required to construct the graphs. An OMP graph construction produces an equivalent graph to NNK at a $10\times$ slower runtime. MP method provides a faster alternative to OMP but still remains slow with runtime $2.5\times$ that of NNK. 


\subsubsection{Label Propagation}
\label{sec:graph_ssl}
We evaluate the quality of the constructed graphs based on their performance in a downstream semi-supervised learning~(SSL) task using label propagation \cite{Zhu-SSL-ICML-03}. 
We consider subsets of digits datasets, such as USPS and MNIST, for the experiment. For USPS, we sample each class non-uniformly based on its labels 
as in \cite{Zhu-SSL-ICML-03,kalofolias2016learn}. For MNIST, we use $100$ randomly selected samples for each digit class to create one instance of a $1000$ sample dataset~($100$ samples/class $\times$ $10$ digit classes). 
Fig.~\ref{fig:ssl_performance} shows the average classification error with different graph constructions for an increasing percentage of available labels (randomly selected) and different choices of parameter $\rk$. We see that basis pursuit and NNK  perform best with both combinatorial ($\mL$) and symmetric normalized Laplacians ($\gL$) in label propagation for all settings. In particular, we note that NNK and OMP-based constructions result in the same graph structure and performance, as identified in our analysis in \Twosecrefs{sec:mp_omp_neighborhoods}{subsec:basis_pusuit_geometry}, but with a significant difference in runtimes (NNK approach requires much less time compared to OMP).  

\begin{figure}[tbp]
\centering
\begin{subfigure}{0.22\textwidth}
\includegraphics[trim={1cm 6cm 1.5cm 7cm},clip,width=\textwidth]{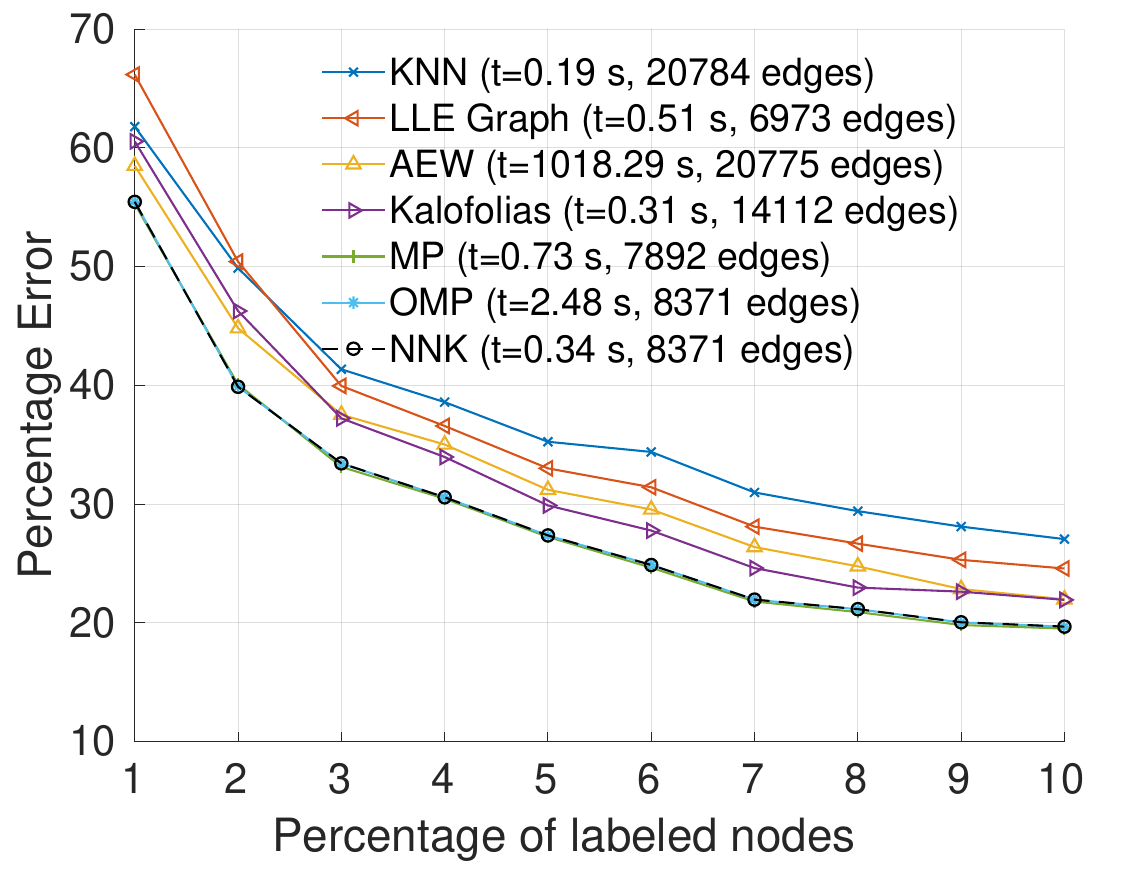}
\end{subfigure}
~
\begin{subfigure}{0.22\textwidth}
\includegraphics[trim={1.5cm 6cm 1.5cm 7cm},clip,width=\textwidth]{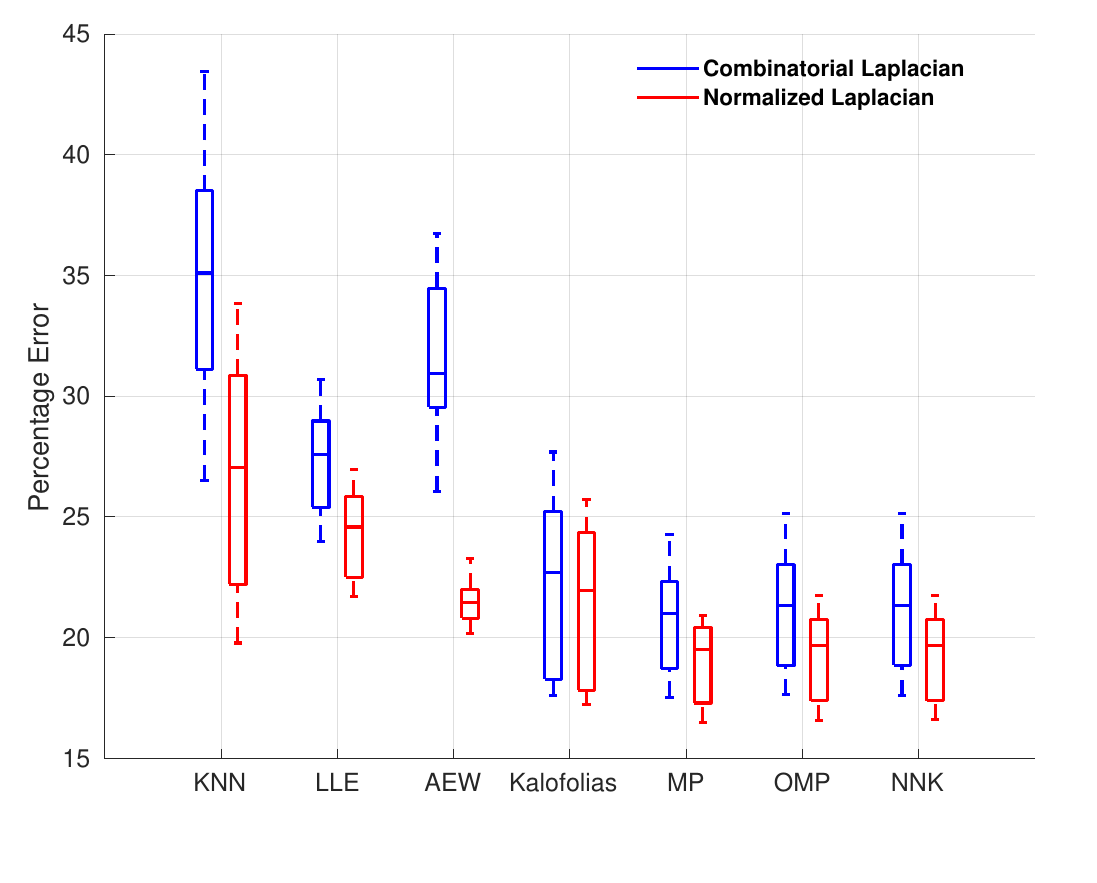}
\end{subfigure}

\begin{subfigure}{0.22\textwidth}
\includegraphics[trim={1cm 6cm 1.5cm 7cm},clip,width=\textwidth]{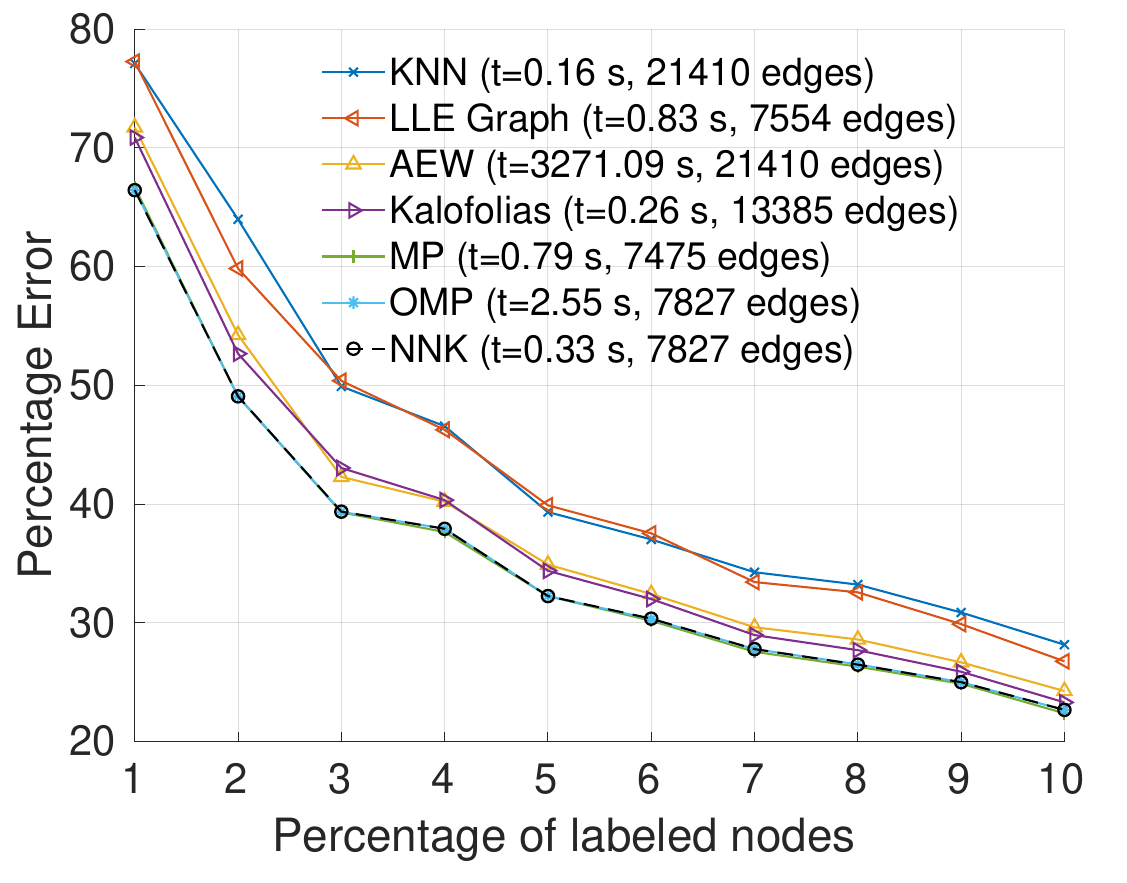}
\end{subfigure}
~
\begin{subfigure}{0.22\textwidth}
\includegraphics[trim={1.5cm 6cm 1.5cm 7cm},clip,width=\textwidth]{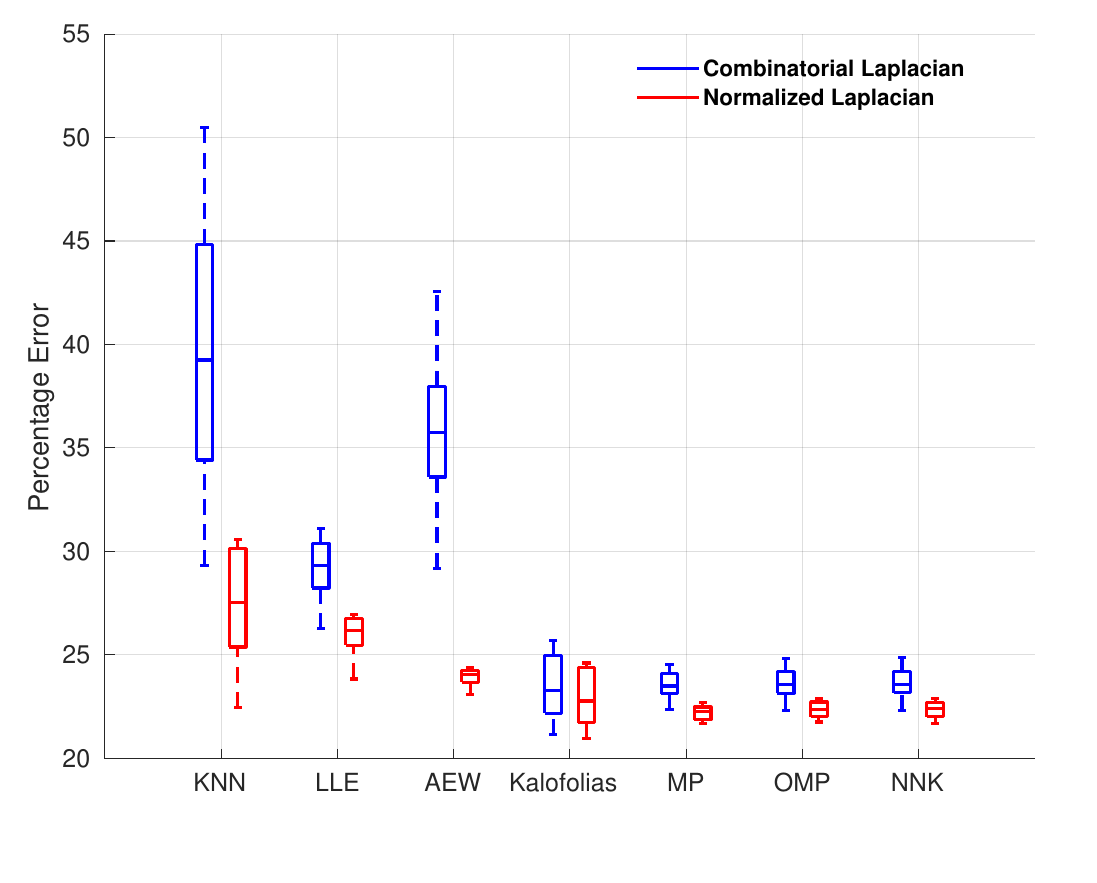}
\end{subfigure}
\caption{SSL performance (mean over 10 different subsets with 10 random initialization of available labels) with graphs constructed by different algorithms. \textbf{Left column:} Classification error at different percentage of labeled data on USPS (\textbf{top row}) and MNIST (\textbf{bottom row}) dataset with $\rk=30$. The time taken and the sparsity of the constructed graphs is indicated as part of the legend. \textbf{Right column:} Boxplots showing the robustness of graph constructions using $\mL$ and $\gL$ Laplacians for different choices of $\rk$ ($10,15, \dots, 50$) in SSL task with $10\%$ labeled data on USPS and MNIST.}
\label{fig:ssl_performance}
\end{figure}
\subsubsection{Laplacian Eigenmaps}
\label{sec:lap_eig}
\begin{figure*}[htbp]
\centering
\begin{subfigure}{0.48\textwidth}
\includegraphics[width=\textwidth]{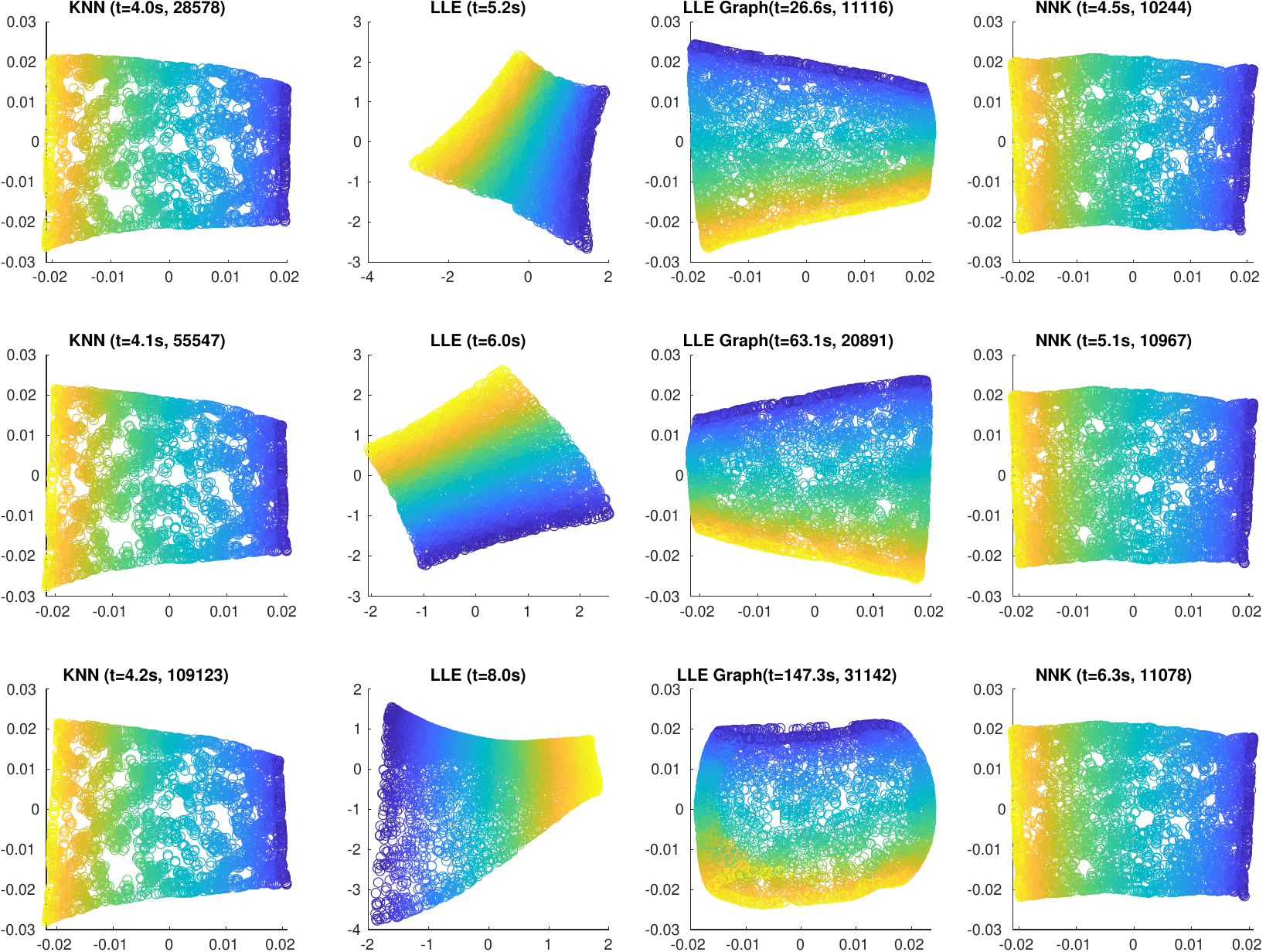}
\end{subfigure}
~
\begin{subfigure}{0.48\textwidth}
    \includegraphics[width=\textwidth]{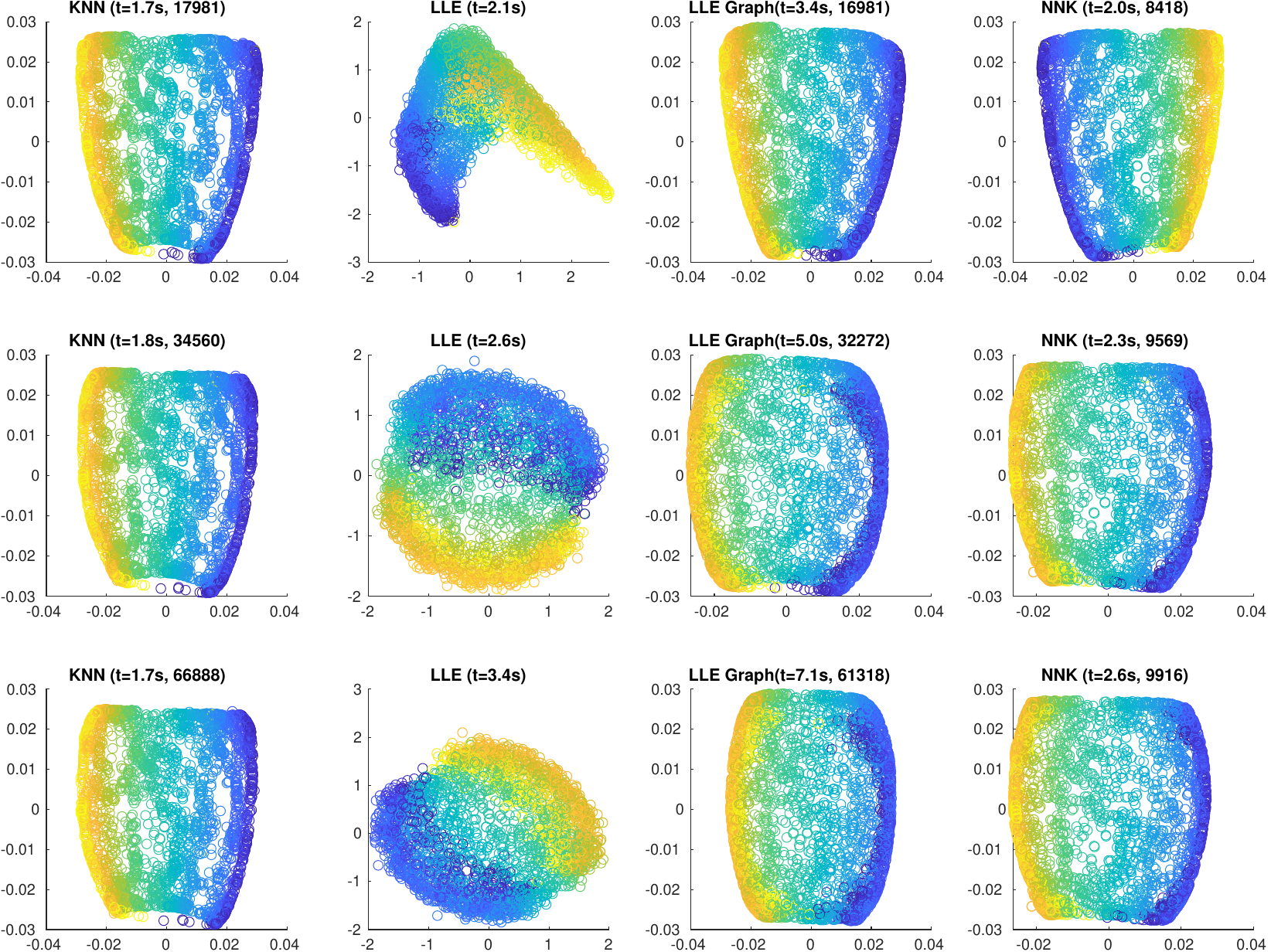}
\end{subfigure}
\caption{Two-dimensional embedding learned by different algorithms. \textbf{Left:}~ Swiss roll, \textbf{Right:}~Severed sphere. Each row corresponds to different choice of $\rk=10, 20, 40$~(\textbf{top to bottom}). We include the time taken to \emph{construct} the graph and the number of edges obtained for each method in the subplot title. We see that NNK graphs produce a robust embedding for both datasets with the sparsity of the graph constructed correlating with the intrinsic dimension of the dataset.}
\label{fig:embedding_robustness}
\end{figure*}
We demonstrate the effectiveness of the NNK graph construction in manifold learning using Laplacian Eigenmaps \cite{Belkin2001} with two standard synthetic datasets, namely, Swiss roll ($N=5000,d=3$) and Severed sphere ($N=3000, d=3$).
The embedding obtained using NNK is compared to that of $\rk$NN, LLE \cite{Wang2008} graphs, and that of the original LLE algorithm \cite{Roweis2323}.
\Figref{fig:embedding_robustness} presents a visual comparison of the embedding obtained using different methods and their robustness to the choice of $\rk$. We see that NNK graphs have better sparsity, runtime, and robustness in the embeddings obtained relative to the baselines. Moreover, the eigenmaps computation using NNK graphs is significantly faster due to the sparsity of the graph constructed. 
We note that the sparsity in NNK graphs captures the intrinsic dimensionality of the data, where the ratio of the number of edges to the number of data points corresponds to the dimension of the manifold associated with the datasets, namely, $2$ for Swiss roll corresponding to a plane and $3$ for the Severed sphere indicating that the data is close to a $3$-dimensional surface.  
\section{Conclusion}
\label{sec:conclusion}
We introduced a novel view of neighborhoods where we show that neighborhoods are non-negative sparse signal approximations. We then use this perspective to propose an improved approach, Non-Negative Kernel Regression (NNK), for neighborhood and graph constructions. We also present $l_1$-regularized and greedy pursuit approaches to  neighborhood definition while establishing its connection to our proposed NNK. We characterize the geometry of proposed neighborhood construction via the kernel ratio interval theorem and establish its implication in non-distance based kernels and other non-negative sparse approximation problems. 
Our experiments demonstrate that NNK performs better than earlier methods in neighborhood classification and graph-based semi-supervised and manifold learning. Further, we show that our approach leads to the sparsest solutions and is more robust to hyperparameters involved in both neighborhood and graph constructions. NNK has a desirable runtime complexity and can leverage neighborhood search tools developed for $\rk$NN. Moreover, the local nature of the NNK algorithm allows for further runtime improvements via parallelization and additional memory.
Future work will focus on investigating the relationship between kernel parameters and geometry,  addressing the impact of noise in neighborhood definitions, and developing multi-scale neighborhood representations.
\bibliographystyle{IEEEbib}
\bibliography{main}

\begin{thebibliography}{10}

\bibitem{pelillo2014alhazen}
M.~Pelillo,
\newblock ``Alhazen and the nearest neighbor rule,''
\newblock {\em Pattern Recognition Letters}, vol. 38, pp. 34--37, 2014.

\bibitem{fix1952discriminatory}
E.~Fix and J.~L Hodges,
\newblock ``Discriminatory analysis-nonparametric discrimination: Small sample
  performance,''
\newblock Tech. {R}ep., California Univ. Berkeley, 1952.

\bibitem{cover1967nearest}
T.~Cover and P.~Hart,
\newblock ``Nearest neighbor pattern classification,''
\newblock {\em IEEE Trans. on Info. Theory}, 1967.

\bibitem{nadaraya1964estimating}
E.~A Nadaraya,
\newblock ``On estimating regression,''
\newblock {\em Theory of Probability \& Its Applications}, 1964.

\bibitem{watson1964smooth}
G.~S Watson,
\newblock ``Smooth regression analysis,''
\newblock {\em Sankhy{\=a}: The Indian J. of Statistics, Series A}, 1964.

\bibitem{biau2015lectures}
G.~Biau and L.~Devroye,
\newblock {\em Lectures on the {N}earest {N}eighbor method},
\newblock Springer, 2015.

\bibitem{ortega2018}
A.~Ortega, P.~Frossard, J.~Kova{\v{c}}evi{\'c}, J.~Moura, and P.~Vandergheynst,
\newblock ``Graph signal processing: Overview, challenges, and applications,''
\newblock {\em Proceedings of the IEEE}, 2018.

\bibitem{ortega2022introduction}
A.~Ortega,
\newblock {\em Introduction to graph signal processing},
\newblock Cambridge University Press, 2022.

\bibitem{Hamilton2017}
W.~L. Hamilton, R.~Ying, and J.~Leskovec,
\newblock ``{Representation Learning on Graphs: Methods and Applications},''
\newblock 2017.

\bibitem{chami2022machine}
I.~Chami, S.~Abu-El-Haija, B.~Perozzi, C.~R{\'e}, and K.~Murphy,
\newblock ``Machine learning on graphs: A model and comprehensive taxonomy,''
\newblock {\em Journal of Machine Learning Research}, vol. 23, no. 89, pp.
  1--64, 2022.

\bibitem{Maier}
M.~Maiker, U.~von Luxburg, and M.~Hein,
\newblock ``{Influence of graph construction on graph-based clustering
  measures},''
\newblock {\em Advances in Neural Info. Processing Systems}, 2009.

\bibitem{DeSousa2013}
C.~A.~R. {De Sousa}, S.~O. Rezende, and G.~Batista,
\newblock ``Influence of graph construction on semi-supervised learning,''
\newblock in {\em Joint European Conf. on Machine Learning and Knowledge
  Discovery in Databases}. Springer, 2013, pp. 160--175.

\bibitem{stone1977consistent}
C.~J Stone,
\newblock ``Consistent nonparametric regression,''
\newblock {\em The Annals of Statistics}, 1977.

\bibitem{devroye2013probabilistic}
L.~Devroye, L.~Gy{\"o}rfi, and G.~Lugosi,
\newblock {\em A probabilistic theory of pattern recognition},
\newblock Springer, 2013.

\bibitem{wettschereck1994locally}
D.~Wettschereck and T.~G Dietterich,
\newblock ``Locally adaptive nearest neighbor algorithms,''
\newblock {\em Advances in Neural Info. Processing Systems}, pp. 184--184,
  1994.

\bibitem{baoli2004adaptive}
L.~Baoli, L.~Qin, and Y.~Shiwen,
\newblock ``An adaptive k-nearest neighbor text categorization strategy,''
\newblock {\em ACM Trans. on Asian Language Info. Processing (TALIP)}, vol. 3,
  no. 4, pp. 215--226, 2004.

\bibitem{balsubramani2019adaptive}
A.~Balsubramani, S.~Dasgupta, Y.~Freund, and S.~Moran,
\newblock ``An adaptive nearest neighbor rule for classification,''
\newblock {\em Advances in Neural Info. Processing Systems}, vol. 32, 2019.

\bibitem{ghosh2007nearest}
A.~K Ghosh,
\newblock ``On nearest neighbor classification using adaptive choice of k,''
\newblock {\em Journal of computational and graphical statistics}, vol. 16, no.
  2, pp. 482--502, 2007.

\bibitem{mullick2018adaptive}
S.~Mullick, S.~Datta, and S.~Das,
\newblock ``Adaptive learning-based $k$-nearest neighbor classifiers with
  resilience to class imbalance,''
\newblock {\em IEEE Trans. on Neural Networks and Learning Systems}, 2018.

\bibitem{anava2016k}
O.~Anava and K.~Levy,
\newblock ``$k^*$-nearest neighbors: From global to local,''
\newblock in {\em Advances in Neural Inf. Processing Systems}, 2016.

\bibitem{chen2018explaining}
G.~Chen and D.~Shah,
\newblock {\em Explaining the success of nearest neighbor methods in
  prediction},
\newblock Now Publishers, 2018.

\bibitem{donoho2006compressed}
D.~L Donoho,
\newblock ``Compressed sensing,''
\newblock {\em IEEE Trans. on Info. Theory}, vol. 52, no. 4, pp. 1289--1306,
  2006.

\bibitem{shekkizhar2020}
S.~Shekkizhar and A.~Ortega,
\newblock ``Graph construction from data by {N}on-{N}egative {K}ernel
  {R}egression,''
\newblock in {\em Intl. Conf. on Acoustics, Speech and Signal Processing
  (ICASSP)}. IEEE, 2020.

\bibitem{Wang2008}
F.~Wang and C.~Zhang,
\newblock ``{Label propagation through linear neighborhoods},''
\newblock {\em IEEE Trans. on Knowledge and Data Engineering}, 2008.

\bibitem{Karasuyama2017}
M.~Karasuyama and H.~Mamitsuka,
\newblock ``Adaptive edge weighting for graph-based learning algorithms,''
\newblock {\em Machine Learning}, 2017.

\bibitem{kalofolias2018large}
V.~Kalofolias and N.~Perraudin,
\newblock ``Large scale graph learning from smooth signals,''
\newblock in {\em Intl. Conf. on Learning Representations}, 2019.

\bibitem{shekkizhar2020efficient}
S.~Shekkizhar and A.~Ortega,
\newblock ``Efficient graph construction for image representation,''
\newblock in {\em Intl. Conf. on Image Processing (ICIP)}. IEEE, 2020, pp.
  1956--1960.

\bibitem{shekkizhar2021nnkmeans}
S.~Shekkizhar and A.~Ortega,
\newblock ``{NNK-Means}: {D}ata summarization using dictionary learning with
  non-negative kernel regression,''
\newblock in {\em 30th European Signal Processing Conf. (EUSIPCO)}. IEEE, 2022.

\bibitem{shekkizhar2021revisiting}
S.~Shekkizhar and A.~Ortega,
\newblock ``Revisiting local neighborhood methods in machine learning,''
\newblock in {\em Data Science and Learning Workshop (DSLW)}. IEEE, 2021.

\bibitem{shekkizhar2021model}
S.~Shekkizhar and A.~Ortega,
\newblock ``Model selection and explainability in neural networks using a
  polytope interpolation framework,''
\newblock in {\em 55th Asilomar Conf. on Signals, Systems, and Computers}.
  IEEE, 2021.

\bibitem{bonet2021channelredundancy}
D.~Bonet, A.~Ortega, J.~Ruiz-Hidalgo, and S.~Shekkizhar,
\newblock ``Channel redundancy and overlap in convolutional neural networks
  with channel-wise nnk graphs,''
\newblock in {\em IEEE Intl. Conf. on Acoustics, Speech and Signal Processing
  (ICASSP)}. IEEE, 2022, pp. 4328--4332.

\bibitem{bonet2021channelearlystopping}
D.~Bonet, A.~Ortega, J.~Ruiz-Hidalgo, and S.~Shekkizhar,
\newblock ``Channel-wise early stopping without a validation set via nnk
  polytope interpolation,''
\newblock in {\em Asia-Pacific Signal and Info. Processing Association Annual
  Summit and Conf. (APSIPA ASC)}. IEEE, 2021.

\bibitem{SSL_geometry}
R.~Cosentino, S.~Shekkizhar, M.~Soltanolkotabi, S.~Avestimehr, and A.~Ortega,
\newblock ``The geometry of self-supervised learning models and its impact on
  transfer learning,''
\newblock {\em arXiv:2209.08622}, 2022.

\bibitem{Qiao2018}
L.~Qiao, L.~Zhang, S.~Chen, and D.~Shen,
\newblock ``{Data-driven graph construction and graph learning: A review},''
\newblock {\em Neurocomputing}, 2018.

\bibitem{hofmann2008kernel}
T.~Hofmann, B.~Sch{\"o}lkopf, and A.~J Smola,
\newblock ``Kernel methods in machine learning,''
\newblock {\em The annals of statistics}, 2008.

\bibitem{alvarez2011kernels}
M.~A {\'A}lvarez, L.~Rosasco, and N.~D Lawrence,
\newblock ``Kernels for vector-valued functions: A review,''
\newblock {\em Foundations and Trends{\textregistered} in Mach. Learn.}, 2012.

\bibitem{mercer1909}
J.~Mercer,
\newblock ``Functions of positive and negative type, and their connection with
  the theory of integral equations,''
\newblock {\em Philosophical Trans. of the Royal society of London. Series A},
  1909.

\bibitem{aronszajn1950theory}
N.~Aronszajn,
\newblock ``Theory of reproducing kernels,''
\newblock {\em Trans. of the American mathematical society}, 1950.

\bibitem{kulis2012metric}
B.~Kulis,
\newblock ``Metric learning: A survey,''
\newblock {\em Foundations and trends in machine learning}, vol. 5, no. 4, pp.
  287--364, 2012.

\bibitem{wang2015survey}
F.~Wang and J.~Sun,
\newblock ``Survey on distance metric learning and dimensionality reduction in
  data mining,''
\newblock {\em Data mining and knowledge discovery}, vol. 29, no. 2, pp.
  534--564, 2015.

\bibitem{kapoor2005hyperparameter}
A.~{Kapoor}, H.~{Ahn}, Y.v {Qi}, and R.W. {Picard},
\newblock ``Hyperparameter and kernel learning for graph based semi-supervised
  classification,''
\newblock in {\em Advances in Neural Info. Processing Systems}, 2005.

\bibitem{Roweis2323}
S.~T. Roweis and L.~K. Saul,
\newblock ``Nonlinear dimensionality reduction by locally linear embedding,''
\newblock {\em Science}, 2000.

\bibitem{Jebara2009}
T.~Jebara, J.~Wang, and S.~Chang,
\newblock ``{Graph Construction and b-Matching for Semi-Supervised Learning},''
\newblock {\em Intl. Conf. on Machine Learning}, 2009.

\bibitem{Daitch2009}
S.~Daitch, J.~Kelner, and D.~Spielman,
\newblock ``{Fitting a Graph to Vector Data},''
\newblock {\em Intl. Conf. on Machine Learning}, 2009.

\bibitem{Cheng2010}
B.~Cheng, J.~Yang, S.~Yan, Y.~Fu, and T.~S. Huang,
\newblock ``Learning with l1-graph for image analysis,''
\newblock {\em IEEE Trans. on Image Processing}, 2010.

\bibitem{Han2016StructureAL}
S.~Han and H.~Qin,
\newblock ``Structure aware l1 graph for data clustering,''
\newblock in {\em AAAI}, 2016.

\bibitem{kalofolias2016learn}
V.~Kalofolias,
\newblock ``How to learn a graph from smooth signals,''
\newblock in {\em Artificial Intelligence and Statistics}. PMLR, 2016, pp.
  920--929.

\bibitem{ham2004kernel}
J.~Ham, D.~Lee, S.~Mika, and B.~Sch{\"o}lkopf,
\newblock ``A kernel view of the dimensionality reduction of manifolds,''
\newblock in {\em Intl. Conf. on Machine Learning}, 2004, p.~47.

\bibitem{kong2012iterative}
D.~Kong, C.~Ding, H.~Huang, and F.~Nie,
\newblock ``An iterative locally linear embedding algorithm,''
\newblock in {\em Proceedings of the 29th Intl. Conf. on Machine Learning},
  2012, pp. 931--938.

\bibitem{Murphy2012}
K.P. Murphy,
\newblock {\em Machine Learning: A Probabilistic Perspective},
\newblock The MIT Press, 2012.

\bibitem{PatelKernelSSC2014}
V.M. {Patel} and R.~{Vidal},
\newblock ``Kernel sparse subspace clustering,''
\newblock in {\em IEEE Intl. Conf. on Image Processing (ICIP)}, 2014.

\bibitem{GiannakisMCE2019}
P.G. {Febrer}, A.P. {Zamora}, and G.B. {Giannakis},
\newblock ``Matrix completion and extrapolation via kernel regression,''
\newblock {\em IEEE Trans. on Signal Processing}, 2019.

\bibitem{Giannakis2017}
D.~{Romero}, M.~{Ma}, and G.B. {Giannakis},
\newblock ``Kernel-based reconstruction of graph signals,''
\newblock {\em IEEE Trans. on Signal Processing}, 2017.

\bibitem{Zhu-SSL-ICML-03}
X.~Zhu, Z.~Ghahramani, and J.~Lafferty,
\newblock ``Semi-supervised learning using {G}aussian fields and harmonic
  functions,''
\newblock {\em Intl. Conf. on Machine Learning}, 2003.

\bibitem{daubechies1992ten}
I.~Daubechies,
\newblock {\em Ten lectures on wavelets},
\newblock SIAM, 1992.

\bibitem{Scholkopf2001}
B.~Scholkopf and A.~J. Smola,
\newblock {\em Learning with Kernels: Support Vector Machines, Regularization,
  Optimization, and Beyond},
\newblock MIT Press, 2001.

\bibitem{chen1994basis}
S.~Chen and D.~Donoho,
\newblock ``Basis pursuit,''
\newblock in {\em Proceedings of 28th Asilomar Conf. on Signals, Systems and
  Computers}. IEEE, 1994, vol.~1, pp. 41--44.

\bibitem{Mallat93}
S.~G. Mallat and Z.~Zhang,
\newblock ``Matching pursuits with time-frequency dictionaries,''
\newblock {\em IEEE Trans. on Signal Proc.}, 1993.

\bibitem{tropp2007signal}
J.~A Tropp and A.~C Gilbert,
\newblock ``Signal recovery from random measurements via orthogonal matching
  pursuit,''
\newblock {\em IEEE Trans. on Info. theory}, 2007.

\bibitem{donoho2012sparse}
D.~L Donoho, Y.~Tsaig, I.~Drori, and J.~Starck,
\newblock ``Sparse solution of underdetermined systems of linear equations by
  stagewise orthogonal matching pursuit,''
\newblock {\em IEEE Trans. on Info. Theory}, vol. 58, no. 2, pp. 1094--1121,
  2012.

\bibitem{Gale51}
D.~Gale, H.~Kuhn, and A.~W. Tucker,
\newblock ``Linear programming and the theory of games - chapter xii,'' 1951.

\bibitem{efron2004least}
B.~Efron, T.~Hastie, I.~Johnstone, and R.~Tibshirani,
\newblock ``Least angle regression,''
\newblock {\em The Annals of statistics}, vol. 32, pp. 407--499, 2004.

\bibitem{daubechies2010iteratively}
I.~Daubechies, R.~DeVore, M.~Fornasier, and C~S. G{\"u}nt{\"u}rk,
\newblock ``Iteratively reweighted least squares minimization for sparse
  recovery,''
\newblock {\em Communications on Pure and Applied Mathematics: A journal issued
  by the Courant Inst. of Mathematical Sciences}, vol. 63, pp. 1--38, 2010.

\bibitem{tibshirani1996regression}
R.~Tibshirani,
\newblock ``Regression shrinkage and selection via the lasso,''
\newblock {\em Journal of the Royal Statistical Society: Series B
  (Methodological)}, vol. 58, no. 1, pp. 267--288, 1996.

\bibitem{johnson2019billion}
J.~Johnson, M.~Douze, and H.~J{\'e}gou,
\newblock ``Billion-scale similarity search with gpus,''
\newblock {\em IEEE Trans. on Big Data}, 2019.

\bibitem{boyd2004convex}
S.~Boyd and L.~Vandenberghe,
\newblock {\em Convex optimization},
\newblock Cambridge university press, 2004.

\bibitem{hurtado2022study}
C.~Hurtado, S.~Shekkizhar, J.~Ruiz-Hidalgo, and A.~Ortega,
\newblock ``Study of manifold geometry using multiscale non-negative kernel
  graphs,''
\newblock in {\em Intl. Conf. on Acoustics, Speech and Signal Processing
  (ICASSP)}. IEEE, 2023.

\bibitem{lodhi2002text}
Huma Lodhi, Craig Saunders, John Shawe-Taylor, Nello Cristianini, and Chris
  Watkins,
\newblock ``Text classification using string kernels,''
\newblock {\em Journal of machine learning research}, vol. 2, no. Feb, pp.
  419--444, 2002.

\bibitem{hoyer2002non}
P.~O Hoyer,
\newblock ``Non-negative sparse coding,''
\newblock in {\em Proceedings of the 12th IEEE workshop on neural networks for
  signal processing}. IEEE, 2002, pp. 557--565.

\bibitem{NguyenNNOG2019}
T.~T. {Nguyen}, J.~{Idier}, C.~{Soussen}, and E.~{Djermoune},
\newblock ``Non-negative orthogonal greedy algorithms,''
\newblock {\em IEEE Trans. on Signal Processing}, 2019.

\bibitem{martinez1998ar}
A.~Martinez and R.~Benavente,
\newblock ``The {AR} face database: {CVC} technical report, 24,'' 1998.

\bibitem{georghiades2001few}
A.~S. Georghiades, P.~N. Belhumeur, and D.~J. Kriegman,
\newblock ``From few to many: Illumination cone models for face recognition
  under variable lighting and pose,''
\newblock {\em IEEE Trans. on pattern analysis and machine intelligence}, vol.
  23, no. 6, pp. 643--660, 2001.

\bibitem{gadde2014active}
A.~Gadde, A.~Anis, and A.~Ortega,
\newblock ``Active semi-supervised learning using sampling theory for graph
  signals,''
\newblock in {\em Proceedings of the 20th ACM SIGKDD Intl. Conf. on Knowledge
  discovery and data mining}. ACM, 2014, pp. 492--501.

\bibitem{rijn2013openml}
J.~Rijn, B.~Bischl, L.~Torgo, B.~Gao, V.~Umaashankar, S.~Fischer, P.~Winter,
  B.~Wiswedel, M.~R Berthold, and J.~Vanschoren,
\newblock ``Open{ML}: A collaborative science platform,''
\newblock in {\em Joint European Conf. on machine learning and knowledge
  discovery in databases}. Springer, 2013, pp. 645--649.

\bibitem{ram2022federated}
P.~Ram and K.~Sinha,
\newblock ``Federated nearest neighbor classification with a colony of
  fruit-flies,''
\newblock in {\em AAAI Conf. on Artificial Intelligence}, 2022.

\bibitem{friedman1991multivariate}
J.~H Friedman,
\newblock ``Multivariate adaptive regression splines,''
\newblock {\em The annals of statistics}, vol. 19, no. 1, pp. 1--67, 1991.

\bibitem{Batson2013}
J.~Batson, D.~Spielman, N.~Srivastava, and S.~Teng,
\newblock ``Spectral sparsification of graphs: Theory and algorithms,''
\newblock {\em Communications of the ACM}, 2013.

\bibitem{Belkin2001}
M.~Belkin and P.~Niyogi,
\newblock ``Laplacian eigenmaps and spectral techniques for embedding and
  clustering,''
\newblock in {\em Intl. Conf. on Neural Info. Processing Systems}, 2001.

\end{thebibliography}
\begin{IEEEbiographynophoto}{Sarath Shekkizhar}
received his bachelor (Electronics and Communication) and double master (Electrical Engineering, Computer Science) degrees from National Institute of Technology, Tiruchirappalli, India and University of Southern California (USC), Los Angeles, USA, respectively. He is currently working on his doctoral degree in Electrical and Computer Engineering at USC and is the recipient of the IEEE best student paper award at ICIP 2020. His research interests include graph signal processing, non-parametric methods, and machine learning. 
\end{IEEEbiographynophoto}
\begin{IEEEbiographynophoto}{Antonio Ortega}
received his undergraduate and doctoral degrees from the Universidad Politécnica de Madrid, Madrid, Spain and Columbia University, New York, USA, respectively.  In 1994 he joined the Electrical Engineering department at the University of Southern California, where he is currently Dean’s Professor. He was the Editor-in-Chief of the IEEE Transactions of Signal and Information Processing over Networks and recently served as a member of the Board of Governors of the IEEE Signal Processing Society. His recent research work is focused on graph signal processing, machine learning, multimedia compression and wireless sensor networks. Over $40$ PhD students have completed their PhD under his supervision and his work has led to over $400$ publications in international conferences and journals, as well as several patents. He is the author of the book \textit{Introduction to Graph Signal Processing} published by Cambridge University Press in 2022.
\end{IEEEbiographynophoto}
\clearpage
\appendices
\section{General Kernel Ratio Interval}
\label{subsec:general_KRI}
\begin{theorem}
\label{th:general_KRI}
Given a three node scenario
as in Fig.\ref{fig:3node} and a Mercer kernel, the necessary
and sufficient condition for two data points $i$ and $j$ to be connected
to $q$ in an NNK framework is
\begin{align}
\frac{\mK_{i,j}}{\mK_{j,j}} < \frac{\mK_{q,i}}{\mK_{q,j}} < \frac{\mK_{i,i}}{\mK_{i,j}} \label{eq:general_KRI}
\end{align}
\end{theorem}
\begin{remark}
Note that the general KRI condition in \autoref{th:general_KRI} is independent of the self-similarity or norm of the data to be represented, i.e., $\mK_{q,q}$. This makes intuitive sense from a neighborhood definition or signal approximation point of view: the  self-similarity of $q$ does not change the projection order of neighbors (nearest to farthest) since $\mK_{q,q}$ is non-negative for Mercer kernels. 
\begin{align*}
    \frac{\mK_{q,i}}{\mK_{i,i}} \leq \frac{\mK_{q,j}}{\mK_{j,j}} 
    \iff \frac{\mK_{q,i}}{\mK_{q,q}\mK_{i,i}} \leq \frac{\mK_{q,j}}{\mK_{q,q}\mK_{j,j}}
\end{align*}
\end{remark}
\begin{remark}
It should be also noted that if the data to be selected as neighbors (e.g., $i, j$) have unit self-similarity, then the condition of \autoref{th:general_KRI} reduces to the simpler KRI condition of \autoref{th:KRI} irrespective of the self-similarity  of the query point being approximated~i.e., $\mK_{q,q}$. 
\end{remark}

\section{Proofs}
\label{sec:appendix_proofs}
\subsection{Karush-Kuhn-Tucker optimality conditions of NNK}
\label{subsec:KKT}
A solution to (\ref{eq:neighborhood_nnk_objective}) satisfies the Karush-Kuhn-Tucker~(KKT) conditions as listed below.
\begin{subequations}
\label{eqs:KKT_condt}
\begin{align}
    \mK_{S,S}\vtheta - \mK_{S,q} -\vlambda = \vzero \label{eq:stationarity}\\
    \vlambda^\top\vtheta = 0 \label{eq:slackness}\\
    \vtheta \geq \vzero, \quad \vlambda \geq \vzero \label{eq:primary_dual_optimality}
\end{align}
\end{subequations}
We will use these conditions to analyze and prove the properties of the NNK framework in the following sections.
\subsection{Proof of Proposition \ref{prop:lle_generalization}}
\begin{proof}
An optimum solution at node $q$ satisfies the KKT conditions, specifically the stationarity condition~(\ref{eq:stationarity}) 
\begin{align}
\mK_{S,S}\vtheta = \mK_{S,q} \quad s.t \quad \vtheta_S \geq \mathbf{0} \label{eq:nnk_condition}
\end{align}
This can be interpreted as a solution to a set of linear equations under constraint. 
\begin{align}
    \sum_{j \in S}\theta_j\; \kappa(\vx_i, \vx_j) = \kappa(\vx_q, \vx_i) \quad \forall i \in S,\; \vtheta \geq \bf0 \label{eq:nnk_linear_eq}
\end{align}
Local Linear Embedding~(LLE) with positive weight constraint minimizes 
$$\mZ^\top\mZ \vw = 1, \text{ where } \mZ = \mX_S - \vx_q \quad s.t \; \sum_{i \in S}w_i = 1,\; \vw \geq \bf0$$
Let $\mM = \mZ^\top\mZ$, i.e.,  
\begin{align*}
    \mM_{i,j} = 
(\vx_i - \vx_q)^\top(\vx_j - \vx_q),
\quad \forall \; i,j  \in S.
\end{align*}
Then, LLE objective corresponds to solving a set of equations with constraints, namely
\begin{align}
    \sum_{j \in S}w_j\; (\vx_i - \vx_q)^\top(\vx_j - \vx_q) = 1 \quad \forall i \in S \nonumber\\
    \text{such that }\; \sum_{i \in S}w_i = 1,\quad \vw \geq \bf0 \label{eq:lle_linear_eq}
\end{align}
An equivalent set of equations is given by,
\begin{align*}
\sum_{j \in S}\mu_j \|\vx_ j- \vx_q\|\|\vx_i - \vx_q\| \, \frac{(\vx_j - \vx_q)^\top(\vx_i - \vx_q)}{2\|\vx_j - \vx_q\|\|\vx_i - \vx_q\|} = \frac{1}{2}.
\end{align*}
Since each weight $w_j$ is positive it can be factored as a product of positive terms, specifically let $$w_j = \mu_j \, \|\vx_j - \vx_q\|\;\|\vx_i - \vx_q\|$$
Further, using the constraint that weights add up to one, \eqref{eq:lle_linear_eq} is rewritten as
\begin{align}
\sum_{j \in S}w_j \, \frac{(\vx_j - \vx_q)^\top(\vx_i - \vx_q)}{2\|\vx_j - \vx_q\|\|\vx_i - \vx_q\|} &= 1 - \sum_{j \in S}\frac{w_j}{2}   \nonumber \\
\sum_{j \in S} w_j \left( \frac{1}{2} + \frac{(\vx_j - \vx_q)^\top(\vx_i - \vx_q)}{2\|\vx_j - \vx_q\|\|\vx_i - \vx_q\|}\right) &= 1 \nonumber \\
 \forall i \in S,& \;\vw \geq \bf0  \label{eq:lle_kernel_linear}
\end{align}
Combined with lemma \ref{lemma:cosine_lle_limit}, we see that \eqref{eq:lle_kernel_linear} is exactly the NNK objective with the kernel at node $q$ defined as in \eqref{eq:lle_cosine_kernel}, i.e.,
\begin{align}
    \sum_{j \in S}w_j\; \kappa_q(\vx_i, \vx_j) = \kappa_q(\vx_q, \vx_i) \quad \forall i \in S,\; \vw \geq \bf0 \label{eq:nnk_linear_eq_lle_kernel}
\end{align}
\end{proof}
\begin{lemma}
\label{lemma:cosine_lle_limit}
$\underset{\vh \to 0}{\lim} \; \kappa_q(\vx_q + \vh, \vx_i) = 1$
\end{lemma}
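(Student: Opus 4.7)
The plan is to substitute $\vx_q = \vx_i + \vh$ directly into the definition of $k^*_i$ and simplify. The difference $\vx_q - \vx_i$ collapses to $\vh$, so we obtain
\begin{equation*}
k^*_i(\vx_p, \vx_i + \vh) = \frac{1}{2} + \frac{(\vx_p - \vx_i)^\top \vh}{2\,\|\vx_p - \vx_i\|\,\|\vh\|}.
\end{equation*}
The second term is exactly $\frac{1}{2}\cos\alpha(\vh)$, where $\alpha(\vh)$ denotes the angle between $\vh$ and $\vx_p - \vx_i$. So the whole question reduces to understanding what happens to this cosine as $\vh$ shrinks.

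The substantive observation is that the value of interest for Proposition~\ref{prop:lle_generalization} is $k^*_i(\vx_p, \vx_i)$, which is a $0/0$ expression in the raw formula and must be defined by extension. The natural convention, consistent with cosine similarity at coincident points, is $\cos\alpha = 1$. This is realized by letting $\vh$ approach the origin along the direction joining $\vx_i$ to $\vx_p$, i.e. $\vh = t(\vx_p - \vx_i)$ with $t \to 0^+$. Along this path, $(\vx_p - \vx_i)^\top \vh = t\|\vx_p - \vx_i\|^2$ and $\|\vh\| = t\,\|\vx_p - \vx_i\|$, so the fraction equals $\frac{1}{2}$ identically in $t$, and the sum equals $1$. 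This is the directional limit the lemma is asserting.

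The main obstacle, and essentially the only delicate point, is that an unrestricted limit $\vh \to 0$ does not actually exist: $\cos\alpha(\vh)$ depends on the direction of approach and ranges over $[-1,1]$. In the write-up I would therefore make explicit that the statement of Lemma~\ref{lemma:cosine_lle_limit} should be read as a directional limit (or equivalently, as the defining value of $k^*_i$ on the diagonal). With this understanding, the proof is a one-line substitution followed by the evaluation above, and the value $k^*_i(\vx_p, \vx_i) = 1$ required by equation~(\ref{eq:lle_kernel_linear}) in the proof of Proposition~\ref{prop:lle_generalization} follows immediately.
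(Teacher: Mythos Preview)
Your approach is essentially the paper's: substitute $\vx_q=\vx_i+\vh$, rewrite the second term as $\tfrac{1}{2}\cos\alpha$, and evaluate. The paper's own proof does exactly this, then writes the limit as $\lim_{\vh\to 0,\,\alpha\to 0}$ and invokes $\lim_{\alpha\to 0}\cos\alpha=1$ without explaining why $\alpha\to 0$ should follow from $\vh\to 0$. You have put your finger on precisely that gap: for an unrestricted $\vh\to 0$ the angle $\alpha$ is direction-dependent and the limit does not exist. Your resolution---read the statement as the directional limit along $\vh=t(\vx_p-\vx_i)$, or equivalently as the definitional extension $k^*_i(\vx_p,\vx_i):=1$ needed in \eqref{eq:lle_kernel_linear}---is the honest way to say what both arguments are actually proving, and it is a strict improvement in rigor over the paper's version.
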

\begin{proof}
\begin{align*}
    \kappa_q(\vx_q, \vx_i) = \frac{1}{2} + \frac{(\vx_q - \vx_q)^\top(\vx_i - \vx_q)}{2\|\vx_q - \vx_q\|\|\vx_i - \vx_q\|}
\end{align*}
The above expression is indeterminate in its current form. Let's consider the limit of the expression.
\begin{align*}
    \underset{\vh \to 0}{\lim} \; \kappa_q(\vx_q + \vh, \vx_i) &= \underset{\vh \to 0}{\lim}\; \frac{1}{2} + \frac{(\vx_q + \vh - \vx_q)^\top(\vx_i - \vx_q)}{2\|\vx_q + \vh - \vx_q\|\|\vx_i - \vx_q\|} \\
    &= \frac{1}{2} + \underset{\vh \to 0,\, \alpha \to 0}{\lim}\; \frac{\|\vx_i - \vx_q\|\|\vh\|\cos\alpha}{2\|\vx_i - \vx_q\|\|\vh\|}
    \end{align*}
where $\alpha$ is the angle between $(\vx_i - \vx_q)$ and $\vh$.
\begin{align*}
\implies \underset{\vh \to 0}{\lim} \; \kappa_q(\vx_q + \vh, \vx_i)  &= 1 \quad  \text{since} \quad\underset{\alpha \to 0}{\lim}\cos\alpha = 1
\end{align*}
\end{proof}

\subsection{Proof of Proposition \ref{prop:l1_simplification}}
\begin{proof}
The Karush-Kuhn-Tucker (KKT) optimality conditions for \eqref{eq:nnk_objective_l1} are
\begin{subequations}
\begin{align}
\mK_{\gD,\gD}\vtheta^* - \mK_{\gD,q} + \eta\vone -\vlambda = \vzero \label{eq:l1_stationarity}\\
\lambda_j\theta^*_j = 0 \label{eq:l1_slackness}\\
\vtheta^* \geq \vzero, \quad \vlambda \geq \vzero 
\end{align}
\end{subequations}
The non negativity of both $\mK_{\gD, \gD}$ (kernels are non-negative by design) and $\vtheta^*$ (by constraint) ensures $\mK_{\gD,\gD}\vtheta^* \geq 0$.
This implies that the stationarity condition (\ref{eq:l1_stationarity}) is satisfied only when $ \eta\mathbf{1} - \mK_{\gD,\gD} -\vlambda \leq \mathbf{0}$. Combining with the slackness condition (\ref{eq:l1_slackness}) we have the results of the proposition.
\begin{align*}
    \eta - \mK_{q,j} > 0 \implies& \lambda_{j} > 0 \quad \because \eta - \mK_{q,j} -\lambda_j \leq 0 \\
    \implies& \theta^*_j = 0 \quad \because \lambda_j\theta^*_j=0 \qedhere
\end{align*}
\end{proof}

\subsection{Proof of Proposition \ref{prop:active_set}}
\begin{proof}
Under the partition, the objective of NNK (\ref{eq:neighborhood_nnk_objective}) can be rewritten using block partitioned matrices
$$ \mK_{S,S} = \begin{bmatrix}
\mK_{P, P} & \mK_{P, \bar{P}}\\
\mK^\top_{P, \bar{P}} & \mK_{\bar{P}, \bar{P}}
\end{bmatrix} , \quad
\mK_{S,q} = \begin{bmatrix}
\mK_{P, q}\\
\mK_{\bar{P},q}
\end{bmatrix}.$$
The optimization for the sub problem corresponding to indices in $\vtheta_P$ is
\begin{align}
\underset{\vtheta_P: \vtheta_P \geq 0}{\min} \;\;
 \frac{1}{2}\vtheta^\top_P\mK_{P, P}\vtheta_P - \mK^\top_{P,q}\vtheta_P.  \label{eq:sub_simplified_obj}
\end{align}

An optimal solution to this sub objective ($\vtheta^*_P$) satisfies the KKT conditions (\ref{eqs:KKT_condt}) namely,
\begin{align*}
\mK_{P, P}\vtheta^*_P - \mK_{P, q} -\vlambda_P = \bf0 \\
\vlambda_{P}^\top\vtheta^*_{P} = 0 \\
\vtheta^*_P \geq \bf0 , \;
\vlambda_P \geq \bf0 
\end{align*}
Specifically, given $\vtheta^*_P > \bf0$, the solution satisfies the stationarity condition (eq. \ref{eq:stationarity}), 
\begin{align}
    \mK_{P, P}\vtheta^*_P - \mK_{P, q} = \vzero \qquad \because \vlambda_P = \vzero \label{eq:inactive_constraint}
\end{align}
The zero augmented solution with $\vtheta_{\bar{P}} = \bf0$ is optimal for the original problem provided
\begin{align*}
\begin{bmatrix}
\mK_{P, P} & \mK_{P, \bar{P}}\\
\mK^\top_{P, \bar{P}} & \mK_{\bar{P}, \bar{P}}
\end{bmatrix}\begin{bmatrix}
\vtheta^*_P\\
\bf0
\end{bmatrix} - \begin{bmatrix}
\mK_{P, q}\\
\mK_{\bar{P},q}
\end{bmatrix} - \begin{bmatrix}
\vlambda_P\\
\vlambda_{\bar{P}}
\end{bmatrix} = \bf0 \\
\begin{bmatrix}
\vlambda_P &
\vlambda_{\bar{P}}
\end{bmatrix}\begin{bmatrix}
\vtheta^*_P\\
\bf0
\end{bmatrix} = 0 \\
\begin{bmatrix}
\vtheta^*_P\\
\bf0
\end{bmatrix} \geq \bf0 ,\;
\begin{bmatrix}
\vlambda_P\\
\vlambda_{\bar{P}}
\end{bmatrix} \geq \bf0 
\end{align*}
Given the optimality conditions on $\vtheta^*_P$ , the conditions for $[\vtheta^*_P \; \vtheta_{\bar{P}}]^\top$ to be optimal requires 
\begin{align}
\mK^\top_{P, \bar{P}}\vtheta^*_P - \mK_{\bar{P},q} -\vlambda_{\bar{P}} = \bf0 \nonumber \quad
\vlambda_{\bar{P}} \geq \bf0 \nonumber \\
\implies \mK^\top_{P, \bar{P}}\vtheta^*_P - \mK_{\bar{P},q} \geq \bf0 \label{eq:active_constraint}
\end{align}
\end{proof}
\subsection{Proof of \autoref{th:KRI}}
\subsubsection{Main proof}
\begin{proof}
An exact solution to objective \plainref{eq:neighborhood_nnk_objective} without the constraint on $\vtheta$ at a data point $q$ and set $S=\{i,j\}$ satisfies
\begin{align}
\begin{bmatrix}
1 & \mK_{i,j} \\ \mK_{i,j} & 1
\end{bmatrix}
\begin{bmatrix}
\theta_i \\ \theta_j
\end{bmatrix} = 
\begin{bmatrix}
\mK_{q,i} \\ \mK_{q,j}
\end{bmatrix}\\
\iff \theta_i + \theta_j\mK_{i,j} = \mK_{q,i} \nonumber\\
\theta_i\mK_{i,j} + \theta_j = \mK_{q,j} \nonumber
\end{align}
Taking the ratio of the equations
\begin{align}
    \frac{\theta_i + \theta_j\mK_{i,j}}{\theta_i\mK_{i,j} + \theta_j} = \frac{\mK_{q,i}}{\mK_{q,j}} \label{eq:obj_ratio} 
\end{align}
Now, $0\leq \mK_{i,j} \leq 1$, where $\mK_{i,j} = 1$ if and only if the points are same. Without loss of generality, let us assume points $i$ and $j$ are distinct. Then,
\begin{align}
\mK_{i,j} < 1 \leq \frac{\mK_{q,i}}{\mK_{q,j}} &\iff \mK_{i,j} < \frac{\theta_i + \theta_j\mK_{i,j}}{\theta_i\mK_{i,j} + \theta_j} \nonumber \\
\iff \theta_i + \theta_j\mK_{i,j} & > \theta_i\mK^2_{i,j} + \theta_j\mK_{i,j} \nonumber \\
\iff \theta_i > \theta_i\mK^2_{i,j} &\iff \theta_i > 0 \nonumber \\
\therefore\quad \theta_i > 0  &\iff \mK_{i,j} < \frac{\mK_{q,i}}{\mK_{q,j}} \label{eq:theta1_condt}
\end{align}
Similarly,
\begin{align}
\frac{\mK_{q,i}}{\mK_{q,j}} < \frac{1}{\mK_{i,j}} &\iff \frac{\theta_i + \theta_j\mK_{i,j}}{\theta_i\mK_{i,j} + \theta_j} <  \frac{1}{\mK_{i,j}} \nonumber \\
\iff \theta_i\mK_{i,j} + \theta_j &> \theta_i\mK_{i,j} + \theta_j\mK^2_{i,j} \nonumber \\
\iff \theta_j > \theta_j\mK^2_{i,j} &\iff \theta_j > 0 \nonumber \\
\therefore 
\quad\theta_j > 0 &\iff \frac{\mK_{q,i}}{\mK_{q,j}} < \frac{1}{\mK_{i,j}} \label{eq:theta2_condt}
\end{align}
Equations (\ref{eq:theta1_condt}) and (\ref{eq:theta2_condt}) give the necessary and sufficient condition of (\ref{eq:kernel_ratio_interval}). 
\begin{align*}
\mK_{i,j} < \frac{\mK_{q,i}}{\mK_{q,j}} < \frac{1}{\mK_{i,j}}.
\end{align*}
\end{proof} 

\subsubsection{Proof of Corollary \ref{corollary:plane_property}}
\begin{proof}
Let $j$ be a node beyond the plane as in \Figref{fig:plane}. For a Gaussian kernel with bandwidth $\sigma$, $\mK_{q,i} > \mK_{q,j}$. Thus, condition corresponding to (\ref{eq:theta1_condt}) is satisfied, $$\frac{\mK_{q,i}}{\mK_{q,j}} > 1 > \mK_{i,j} \implies \theta_i > 0$$
Let $\|\vx_q - \vx_i\| = a,\; \|\vx_q - \vx_j\|=b \;$ and $\alpha$ be the angle between the difference vectors. Then,
\begin{align*}
\frac{\mK_{q,i}}{\mK_{q,j}} &= \exp\left(\frac{1}{2\sigma^2} (b^2 - a^2) \right)\\
\mK_{i,j} &=  \exp\left(-\frac{1}{2\sigma^2} ((b\cos\alpha-a)^2 + (b\sin\alpha)^2) \right) \\
\mK_{i,j} &=  \exp\left(-\frac{1}{2\sigma^2} (b^2 + a^2 - 2ab\cos\alpha) \right)  \\
\frac{1}{\mK_{i,j}}&=\exp\left(-\frac{1}{2\sigma^2} (b^2 + a^2 - 2ab\cos\alpha) \right) \\
\end{align*}
Now, the condition on $\theta_j$, the inequality (\ref{eq:theta2_condt}) is reduced as
\begin{align*}
    \frac{\mK_{q,i}}{\mK_{q,j}} & < \frac{1}{\mK_{i,j}} \\
    \exp\left(\frac{1}{2\sigma^2} (b^2 - a^2) \right)  &< \exp\left(\frac{1}{2\sigma^2} (b^2 + a^2 - 2ab\cos\alpha)\right)   \\
    \exp\left(-\frac{a^2}{2\sigma^2} \right) & < \exp\left(\frac{1}{2\sigma^2} (a^2 - 2ab\cos\alpha) \right) \\
    \exp\left(\frac{1}{2\sigma^2} ab\cos\alpha \right) & <  \exp\left(\frac{a^2}{2\sigma^2} \right)   \\
    b\cos\alpha < a &\iff \theta_j  = 0 \quad \forall \; b\cos\alpha > a
\end{align*}
The condition $ b\cos\alpha > a$ corresponds to a hyperplane corresponding to points that belong to the half space not containing $q$.  
\end{proof}

\subsubsection{Proof of Corollary \ref{corollary:polytope_optimality}}
\begin{proof}
Statements A and B are related in that one is the contrapositive of the other. Here we prove statement A.\\
Equation (\ref{eq:inactive_set_eq}) can be interpreted as the boundaries of a convex polytope ($\gP$) formed by the half planes,
\begin{align}
    boundary(\gP) = \{\mK^\top_{P,i}\vtheta_P = \mK_{q,i} \quad i \in P\} \label{eq:half_plane_boundary_equations}
\end{align}
The interior of the polytope formed by the half planes is 
\begin{align}
    interior(\gP) = \{p\;:\; \mK^\top_{P,p}\vtheta_P - \mK_{q,p} < 0\}. \label{eq:interior_polytope}
\end{align}
Thus, 
\begin{align}
    \mK^\top_{P, j}\vtheta_P - \mK_{q,i} \geq 0 \label{eq:active_set_eq_k}
\end{align}
corresponds to a point outside  $\gP$. 
Now, to prove
\begin{align*}
    \exists  i  \in  P \; \colon \; \frac{\mK_{q,i}}{\mK_{q,j}} \geq \frac{1}{\mK_{i,j}}
\end{align*}
Let's assume there exists no such $i$. Then
\begin{align}
    \forall  i  \in  P \; \colon \; \frac{\mK_{q,i}}{\mK_{q,j}} &< \frac{1}{\mK_{i,j}}  \nonumber \\
    \mK_{q,i}\mK_{i,j}  &< \mK_{q,j} \nonumber\\
    \left(\mK^\top_{P,i}\vtheta_P \right)\mK_{i,j}  &< \mK_{q,j} \quad \quad \because i \in \gP \nonumber\\
    \left(\mK_{P,i}\mK_{i,j}\right)^\top\vtheta_P  &< \mK_{q,j}   
\end{align}
Using the triangle inequality corresponding to kernels for each term in $P$, we have
\begin{align}
    \mK^\top_{P, j}\vtheta_P &< \mK_{q,j} \qquad \text{since } \mK_{P, q} \leq \mK_{P,i}\mK_{i,j}  \label{eq:corollary_contradiction_eq}
\end{align}
\Eqref{eq:corollary_contradiction_eq} contradicts our earlier statement (eq. \ref{eq:active_set_eq_k}) on point $j$, namely, $j$ lies outside $\gP$ and hence does not belong to the interior as defined in \eqref{eq:interior_polytope}.
Thus, 
\begin{align*}
    \exists \; i \; \in \; P \; \colon \; \frac{\mK_{q,i}}{\mK_{q,j}} \geq \frac{1}{\mK_{i,j}}
\end{align*}
\end{proof}
\subsection{Proof of Proposition \ref{prop:RKHS_distance}}
\begin{proof}
\begin{align*}
    \Tilde{d}^2(i, j) & = (\vphi_i - \vphi_j)^t(\vphi_i - \vphi_j) \\
    &= \vphi_i^t\vphi_i - \vphi_i^t\vphi_j - \vphi_j^t\vphi_i + \vphi_j^t\vphi_j \\
    &= 2 - 2\vphi_i^t\vphi_j \; \because \|\vphi_i\| = \|\vphi_j\| = 1\\
    &= 2 - 2~\kappa(\vx_i, \vx_j)
\end{align*}
\end{proof}

\subsection{Proof of \autoref{thm: nnk_rkhs_geometry}}
\begin{proof}
The proof follows from the kernel ratio interval \autoref{th:KRI} and proposition \ref{prop:RKHS_distance}. 

Consider the condition for $\theta_j\;\neq\;0$:
\begin{align*}
    \frac{\mK_{q,i}}{\mK_{q,j}} < \frac{1}{\mK_{i,j}} \iff \mK_{q,i}\mK_{i,j} &< \mK_{q,j} \\
    \left[1 - \frac{\Tilde{d}^2(q,i)}{2}\right]\left[1 - \frac{\Tilde{d}^2(i,j)}{2}\right] &< \left[1 - \frac{\Tilde{d}^2(q,j)}{2}\right] \\
    1 - \frac{\Tilde{d}^2(q,i)}{2} - \frac{\Tilde{d}^2(i,j)}{2} + \frac{\Tilde{d}^2(q,i)\Tilde{d}^2(i,j)}{4} &< 1 - \frac{\Tilde{d}^2(q,j)}{2} \\
    \Tilde{d}^2(q,i) + \Tilde{d}^2(i,j) - \Tilde{d}^2(q,j) &> \frac{\Tilde{d}^2(q,i)\Tilde{d}^2(i,j)}{2}
\end{align*}
The proof for $\theta_i$ can be derived using a similar argument.
\end{proof}

\subsection{Proof of \autoref{th:general_KRI}}
\begin{proof}
A solution to the NNK objective without the constraint on $\vtheta$ at data point $q$ and set $S=\{i,j\}$ satisfies
\begin{align}
\begin{bmatrix}
\mK_{i,i} & \mK_{i,j} \\ \mK_{i,j} & \mK_{j,j}
\end{bmatrix}
\begin{bmatrix}
\theta_{i} \\ \theta_{j}
\end{bmatrix} = 
\begin{bmatrix}
\mK_{q,i} \\ \mK_{q,j}
\end{bmatrix}\\
\iff \theta_{i}\mK_{i,i} + \theta_{j}\mK_{i,j} = \mK_{q,i} \nonumber\\
\theta_{i}\mK_{i,j} + \theta_{j}\mK_{j,j} = \mK_{q,j} \nonumber
\end{align}
Taking the ratio of the above equations
\begin{align}
    \frac{\theta_{i}\mK_{i,i} + \theta_{j}\mK_{i,j}}{\theta_{i}\mK_{i,j} + \theta_{j}\mK_{j,j}} = \frac{\mK_{q,i}}{\mK_{q,j}} \label{eq:general_obj_ratio} 
\end{align}
In the following statements, we will show that under conditions of \eqref{eq:general_KRI} that the solutions will be positive and vice versa. Now, $\mK^2_{i,j} \leq \mK_{i,i}\mK_{j,j}$. This is because kernels with RKHS property are positive definite functions, with equality iff the points $i$ and $j$ are the same. Without loss of generality, let us assume points $i$ and $j$ are distinct. Then,
\begin{align}
\frac{\mK_{i,j}}{\mK_{j,j}} <  \frac{\mK_{q,i}}{\mK_{q,j}} \iff& \frac{\mK_{i,j}}{\mK_{j,j}} < \frac{\theta_{i}\mK_{i,i} + \theta_{j}\mK_{i,j}}{\theta_{i}\mK_{i,j} + \theta_{j}\mK_{j,j}} \nonumber \\
\iff \theta_{i}\mK_{i,i}\mK_{j,j} + \theta_{j}\mK_{i,j}\mK_{j,j} &> \theta_{i}\mK^2_{i,j} + \theta_{j}\mK_{j,j}\mK_{i,j} \nonumber \\
\iff \theta_{i}\mK_{i,i}\mK_{j,j} &> \theta_{i}\mK^2_{i,j} \nonumber\\
\iff \theta_{i} > 0 \qquad& \because \mK^2_{i,j} < \mK_{i,i}\mK_{j,j} \nonumber 
\end{align}
Thus,
\begin{align}
 \therefore \theta_{i} > 0 \iff \frac{\mK_{i,j}}{\mK_{j,j}} < \frac{\mK_{q,i}}{\mK_{q,j}} \label{eq:general_theta1_condt}
\end{align}
Similar result holds for $\theta_j$, namely,
\begin{align}
\theta_{j} > 0 \iff \frac{\mK_{q,i}}{\mK_{q,j}} < \frac{\mK_{i,i}}{\mK_{i,j}}. \label{eq:general_theta2_condt}
\end{align}
Equations (\ref{eq:general_theta1_condt}) and (\ref{eq:general_theta2_condt}) combined give the necessary and sufficient condition in the theorem and completes the proof. 
\begin{align*}
\frac{\mK_{i,j}}{\mK_{j,j}} < \frac{\mK_{q,i}}{\mK_{q,j}} < \frac{\mK_{i,i}}{\mK_{i,j}}
\end{align*}
\end{proof} 
\end{document}